\newcommand{\myalg}{Trans-Glasso\xspace}
\newcommand{\myalgMT}{Trans-MT-Glasso\xspace}
\newcommand{\shared}{\Omega^{\star}}
\newcommand{\unique}{\Gamma^{(k)\star}}
\newcommand{\supp}{\textrm{supp}}
\newcommand{\diff}{\Psi^{(k)}}
\newcommand{\diffestimate}{\widehat{\Psi}^{(k)}}
\newcommand{\uniqueestimate}{\widehat{\Gamma}^{(k)}}
\newcommand{\steponeoutput}{\widecheck{\Omega}^{(k)}}
\newcommand{\uniqueparameter}{\Gamma^{(k)}}
\newcommand{\supportshared}{\mathcal{S}_{\Omega}}
\newcommand{\supportunique}{\mathcal{S}_{\Gamma^{(k)}}}
\newcommand{\diffproj}{\widehat{\Psi}^{(k)}_{\mathrm{proj}}}
\newcommand{\targetproj}{\widehat{\Omega}^{(0)}_{\mathrm{proj}}}
\definecolor{cm}{RGB}{0,0,200}
\newcommand{\newText}{}
\newcommand{\newTextTwo}{}
\title{\myalg: A Transfer Learning Approach to \\ Precision Matrix Estimation}
\author[1]{Boxin Zhao}
\author[2]{Cong Ma}
\author[3]{Mladen Kolar}
\affil[1]{Booth School of Business, University of Chicago}
\affil[2]{Department of Statistics, University of Chicago}
\affil[3]{Department of Data Sciences and Operations, Marshall School of Business, University of Southern California}
\date{}
\begin{document}

\begin{bibunit}[plainnat]

\maketitle

\begin{abstract}
\normalsize

Precision matrix estimation is essential in various fields; yet, it is challenging when samples for the target study are limited. Transfer learning can enhance estimation accuracy by leveraging data from related source studies. We propose \myalg, a two-step transfer learning method for precision matrix estimation. First, we obtain initial estimators using a multi-task learning objective that captures both shared and unique features across studies. Then, we refine these estimators through differential network estimation to adjust for structural differences between the target and source precision matrices. Under the assumption that most entries of the target precision matrix are shared with those of the source matrices, we derive non-asymptotic error bounds and show that \myalg achieves minimax optimality under certain conditions. Extensive simulations demonstrate \myalg\!\!’s superior performance compared to baseline methods, particularly in small-sample settings. We further validate \myalg in applications to gene networks across brain tissues and protein networks for various cancer subtypes, showcasing its effectiveness in biological contexts. Additionally, we derive the minimax optimal rate for differential network estimation, representing the first such guarantee in this area.
The Python implementation of \myalg, along with code to reproduce all experiments in this paper, is publicly available at \url{https://github.com/boxinz17/transglasso-experiments}.

\end{abstract}

% \tableofcontents

\section{Introduction}

Estimating the precision matrix, i.e., the inverse covariance matrix, is a fundamental task in statistical analysis and has broad applications, including in portfolio optimization, speech recognition, and genomics~\citep{lauritzen1996graphical}. 
The precision matrix is closely tied to Gaussian graphical models: estimating the support of the precision matrix corresponds to uncovering the network structure of conditional dependencies between multivariate normal variables~\citep{lauritzen1996graphical}.
However, estimating a precision matrix accurately is often challenging when the sample size is small compared to the dimension---a typical scenario in high-dimensional settings.

In many applications, the target study has a limited sample size, while data from related studies may be available.
Transfer learning~\citep{pan2009survey} provides a promising approach in these scenarios by leveraging information from related source studies to improve estimation accuracy in the target study. 
For example, in gene expression studies across different tissue types, sample sizes may be small for specific tissues, but data from related tissues can help improve estimates~\citep{li2023transfer}. 
Similarly, protein network studies for different cancer subtypes can benefit from transfer learning, as leveraging data from related subtypes can enhance estimation for a particular subtype with limited data~\citep{peterson2015bayesian}.

A critical aspect of transfer learning is establishing similarity between the target and source tasks. Here, we assume that most entries of the target precision matrix are shared with those of the source matrices, with only a few differences. Based on this assumption, we propose \myalg, a novel two-step transfer learning method for precision matrix estimation. First, we obtain initial estimators through a multi-task learning objective that captures shared and unique dependencies across datasets. Second, we refine these estimators using differential network estimation to adjust for differences between the target and source matrices~\citep{zhao2014direct, yuan2017differential}.

We provide a theoretical analysis of \myalg, deriving non-asymptotic error bounds and establishing that the method achieves minimax optimality in a wide range of parameter settings. Through extensive simulations, we demonstrate that \myalg~outperforms several baseline methods, particularly in scenarios where the target sample size is small. We also apply \myalg~to gene networks across brain tissues and protein networks for various cancer subtypes, showing its practical effectiveness in biological applications. Additionally, as a byproduct of our analysis, we derive the minimax optimal rate for differential network estimation---to our knowledge, the first such guarantee in this area.

\subsection{Related Work}

\paragraph{Precision matrix estimation.} Estimation of sparse precision matrices in a single study is well studied. Common methods include penalized M-estimator~\citep{yuan2007model,friedman2008sparse,rothman2008sparse,lam2009sparsistency,ravikumar2011high} and constrained $L_1$ minimization~\citep{cai2011constrained,cai2016estimating,ren2015asymptotic}. There is also extensive literature on multi-task precision matrix estimation, which estimates multiple related but non-identical precision matrices from multiple studies~\citep{guo2011joint, danaher2014joint,zhu2014structural}. See~\cite{tsai2022joint} for a survey. While related to transfer learning, multi-task learning aims to estimate parameters of all studies, whereas transfer learning focuses solely on the target study.

\paragraph{Transfer learning.} Transfer learning has a long history~\citep{pan2009survey}. Recently, interest in transfer learning for statistical problems has grown. \citet{li2022transfer, he2024transfusion} studied high-dimensional linear regression. The fused regularizer in~\cite{he2024transfusion} is similar to our multi-task objective; however, \citet{he2024transfusion} focuses on linear regression, while we focus on precision matrix estimation, leading to distinct technical analyses. \citet{li2023estimation, tian2023transfer} studied high-dimensional generalized linear regression. 
\citet{cai2021transfer} studied nonparametric classification and \citet{cai2024transfer} investigated multi-armed bandit problems. 
\citet{lin2022transfer,cai2024transferfunctional} studied transfer learning for functional data analysis.
\citet{kpotufe2021marginal, pathak2022new, ma2023optimally, ge2023maximum, pathak2024design} addressed covariate shift---a special form of transfer learning---under different statistical models. 

The most relevant work to this paper is~\cite{li2023transfer}, which also studied transfer learning for precision matrix estimation. The key difference is the similarity assumption. \cite{li2023transfer} assumes that the divergence matrices between the target and source precision matrices are sparse. We assume that most entries of the target precision matrix are shared across source precision matrices, with few different entries. Although the assumption in \cite{li2023transfer} is motivated by the KL divergence between Gaussian distributions, ours is a structural assumption, making it applicable beyond Gaussian data and easier to interpret. Consequently, our method differs significantly from that of \cite{li2023transfer}.

\paragraph{Differential Network Estimation.} Our approach leverages differential network estimation techniques, which aim to directly estimate the difference between two precision matrices without the need to estimate the individual ones~\citep{zhao2014direct,yuan2017differential,liu2014direct,ma2021inter}.
\citet{fazayeli2016generalized} explored this concept in the context of Ising models. Additionally, \citet{zhao2019direct, zhao2022fudge} extended differential network estimation methods to functional data, while \citet{tugnait2023estimation} broadened its application to multi-attribute data.

\subsection{Organization and Notation}

The rest of the paper is organized as follows. 
In Section~\ref{sec:ProblemSetup}, we introduce the problem setup.
In Section~\ref{sec:TransGlassoAlgorithm}, we introduce the methodology of the paper.
Section~\ref{sec:implementation} describes practical implementation details of our method.
The theoretical results are developed in Section~\ref{sec:theoreticalAnalysis}.
We also conduct extensive simulation experiments in Section~\ref{sec:simulation}.
Furthermore, in Section~\ref{sec:real-world-data}, we apply our method to two real-world datasets.
Finally, we conclude our paper with Section~\ref{sec:conclusion}.
The technical proofs and details about optimization algorithms are provided in the appendix.
The Python implementation of our method, along with code to reproduce all experiments in this paper, is publicly available at
\begin{center}
\url{https://github.com/boxinz17/transglasso-experiments}
\end{center}

\paragraph{Notation.}

For a vector $v \in \mathbb{R}^d$, we use $\Vert v \Vert_p$ to denote its $L_p$-norm.
More specifically, we have $\Vert v \Vert_p=(\sum^d_{i=1} \vert v_i \vert^p)^{\frac{1}{p}}$ for $1 \leq p \leq \infty$, where $\Vert v \Vert_{\infty}=\max_i \vert v_i \vert$. For a matrix $A \in \mathbb{R}^{d \times d}$, we use $\vert \cdot \vert$ to denote its elementwise norm and $\Vert \cdot \Vert$ to denote its operator norm. For example, $\vert A \vert_1=\sum^d_{i=1} \sum^d_{j=1} \vert A_{ij} \vert$, $\vert A \vert_0=\sum^d_{i=1} \sum^d_{j=1} \mathds{1} \left\{ A_{ij} \neq 0 \right\}$, $\vert A \vert_{\infty}=\max_{1 \leq i,j \leq d} \vert A_{ij} \vert$; $\Vert A \Vert_1=\max_{1 \leq j \leq d}\sum^d_{i=1} \vert A_{ij} \vert$, $\Vert A \Vert_{\infty}=\max_{1 \leq i \leq d} \sum^d_{j=1} \vert A_{ij} \vert$, and $\Vert A \Vert_2$ denote the largest singular value of $A$. We use $\Vert A \Vert_{\mathrm{F}}=(\sum^d_{i=1} \sum^d_{j=1} \vert A_{ij} \vert^2)^{1/2}$ to denote the Frobenius norm of $A$. In addition, we use $\langle A,B\rangle=\tr(A^{\top}B) = \sum_{i,j}A_{ij}B_{ij}$ for $A,B\in \mathbb{R}^{d \times d}$ to define the inner product between two matrices. We use $\text{vec}(A)$ to denote the $d^2$-vector obtained by stacking the columns of $A$. When $A$ is symmetric, we let $\gamma_{\min}(A)$ and $\gamma_{\max}(A)$ denote its smallest and largest eigenvalue. For $A \in \mathbb{R}^{n_1 \times n_2}$ and $B \in \mathbb{R}^{m_1 \times m_2}$, we let $A \otimes B = [A_{ij}B_{lm}]_{i,j,l,m} \in \mathbb{R}^{n_1 m_1 \times n_2 m_2}$ denote the Kronecker product of two matrices. We define $\mathbb{S}^{d \times d}$ as the set of symmetric matrices with dimension $d$. The universal constants may vary from one line to another without further clarification.
Finally, following~\cite{ravikumar2011high}, we say a random vector $X \in \mathbb{R}^d$ with $\mathbb{E}[X]=0$ and $\Sigma=\textrm{Cov}(X)$ is sub-Gaussian if there exists a constant $\sigma>0$ such that
\begin{equation*}
\mathbb{E}\left[ \exp \left( \lambda X_j /\sqrt{\Sigma_{jj}} \right) \right] \leq \exp \left( \sigma^2 \lambda^2 / 2 \right) \quad \text{for all } \lambda \in \mathbb{R} \text{ and } 1 \leq j \leq d.
\end{equation*}

In addition, we use the following standard notation in the paper. For two positive sequences $\{ f(n) \}_{n \geq 1}$ and $\{ g(n) \}_{n \geq 1}$, $f(n)=O(g(n))$ or $f(n) \lesssim g(n)$ means that there exists a universal constant $c>0$ such that $f(n) \leq c g(n)$ holds for sufficiently large $n$; $f(n)=\Omega(g(n))$ or $f(n) \gtrsim g(n)$ means that there exists a universal constant $c>0$ such that $f(n) \geq c g(n)$ holds for sufficiently large $n$; $f(n) = \Theta (g(n))$ or $f(n) \asymp g(n)$ means that there exist universal constants $c_1,c_2>0$ such that $c_1 g(n) \leq f(n) \leq c_2 g(n)$ holds for sufficiently large $n$; $f(n)=o(g(n))$ indicates that $f(n)/g(n) \to 0$ as $n \to \infty$.

\section{Problem Setup}

\label{sec:ProblemSetup}

Imagine that we observe $n_0$ i.i.d.~samples $\{\mathbf{x}^{(0)}_i\}^{n_0}_{i=1} \coloneqq \mathcal{D}_0$ from a sub-Gaussian target distribution $\mathcal{P}_0$. Each sample $\mathbf{x}^{(0)}_i \in \mathbb{R}^d$ is assumed to have zero mean and covariance matrix $\Sigma^{(0)}$. 
Our goal is to estimate the target precision matrix $\Omega^{(0)} = ( \Sigma^{(0)} )^{-1}$. Additionally, we have access to $K$ sub-Gaussian source distributions $\{\mathcal{P}_k\}^K_{k=1}$, each with $n_k$ i.i.d.~samples $\{\mathbf{x}^{(k)}_i\}^{n_k}_{i=1} \coloneqq \mathcal{D}_k$. For $1 \leq k \leq K$, each $\mathbf{x}^{(k)}_i \in \mathbb{R}^d$ also has zero mean, covariance matrix $\Sigma^{(k)}$, and corresponding precision matrix $\Omega^{(k)} = ( \Sigma^{(k)} )^{-1}$. The goal is to leverage samples from both the target and source distributions to accurately estimate $\Omega^{(0)}$.

To facilitate transfer learning, we assume structural similarity between the target and source precision matrices, whereby most entries in $\Omega^{(0)}$ are shared with those in $\Omega^{(k)}$, with relatively few differences. This assumption enables us to efficiently utilize source samples to enhance the estimation of the target precision matrix.
Formally, we characterize the relationship between the target and source precision matrices by the following assumption.

\begin{assump}
\label{assump:model-structure}
For each $0 \leq k \leq K$, there exists a shared component $\shared$ and a unique component $\unique$ with disjoint supports such that 
\begin{equation}
\Omega^{(k)} = \shared + \unique,
\end{equation}
where $|\shared|_0 \leq s$, $|\unique|_0 \leq h$, and $|\unique|_1 \leq M_\Gamma$. The parameters $s$ and $h$ satisfy $s \gg h$, indicating that the majority of the structure is shared, while unique components are minimal.
\end{assump}

\medskip

See Figure~\ref{fig:demonstration-assumption-1} for a visual illustration of Assumption~\ref{assump:model-structure}.
Assumption~\ref{assump:model-structure} is inspired by the assumptions widely used in the differential network estimation literature~\citep{zhao2014direct}.  
To see this connection, for each $1 \leq k \leq K$, define  $\diff=\Omega^{(k)}-\Omega^{(0)}$ to be the differential network between the target $\Omega^{(0)}$ and the source $\Omega^{(k)}$. Two immediate implications of  Assumption~\ref{assump:model-structure} are 
\begin{equation*}
\left\vert \diff \right\vert_0 \leq  2 h, \qquad \text{and} \qquad 
\left\vert \diff \right\vert_1 \leq  2 M_{\Gamma}, \quad \text{for all } 1 \leq k \leq K.
\end{equation*}
This  exactly matches Condition~1 in the paper~\citep{zhao2014direct}. 

{\newTextTwo 

We also note that the structural similarity in Assumption~1 is conceptually related to the heterogeneity assumption in \cite{bastani2021predicting}, which models the discrepancy between proxy and true predictive tasks as a sparse bias; both frameworks exploit the idea that differences across tasks are concentrated in a small set of coordinates, thereby enabling efficient transfer.

}

In addition, Assumption~\ref{assump:model-structure} is naturally interpretable within Gaussian graphical models. Suppose we have an undirected graphical model \( G = (V, E) \) where nodes represent variables and edges represent conditional dependencies. In this model, an edge exists between nodes \( i \) and \( j \) if and only if \( \Omega^{(k)}_{ij} \neq 0 \)~\citep{lauritzen1996graphical}. Under Assumption~\ref{assump:model-structure}, the target precision matrix \( \Omega^{(0)} \) and the source matrices \( \left\{ \Omega^{(k)} \right\} \) share a large subset of edges, with only a small number of unique edges in each source, corresponding to sparse deviations \( \diff \).

\begin{figure}[t]
\centering
\includegraphics[width=0.8\linewidth]{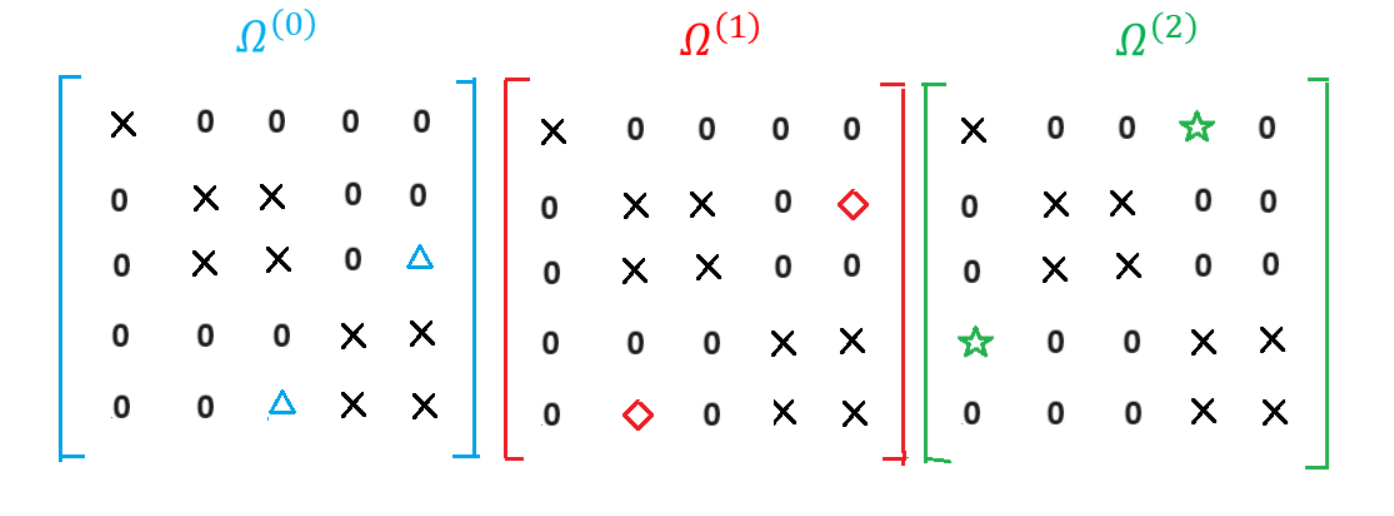}
\vspace{-0.5cm}
\caption{Illustration of Assumption~\ref{assump:model-structure}. The target precision matrix, $\Omega^{(0)}$,  is shown alongside two source precision matrices, $\Omega^{(1)}$ and $\Omega^{(2)}$. Black crosses represent the shared entries across the matrices, while colored shapes indicate individual, unique entries.}
\label{fig:demonstration-assumption-1}
\end{figure}

{\newText

Finally, we compare Assumption~\ref{assump:model-structure} with the similarity assumption based on divergence measures adopted in~\cite{li2023transfer}. Specifically, \citet{li2023transfer} assume that the divergence matrix $\Upsilon^{(k)} = \Omega^{(0)} \Sigma^{(k)} - I_d$ is sparse, a condition motivated by controlling the KL divergence between two multivariate Gaussian distributions. In contrast, our structural assumption does not rely on Gaussian-specific properties and naturally extends beyond Gaussian graphical models, providing a more intuitive interpretation of shared structure. We emphasize that we do not claim \myalg consistently outperforms Trans-CLIME, the method proposed in~\cite{li2023transfer}, and we leave a detailed theoretical comparison to future work. Further discussion is provided in Appendix~\ref{sec:comparison-li2023}.
}

\section{\myalg~Algorithm}
\label{sec:TransGlassoAlgorithm}

In this section, we introduce \myalg, a transfer learning method for precision matrix estimation. It comprises two steps: (1) initializing precision matrix estimators via a \mbox{multi-task} learning problem and (2) refining them with differential network estimation to account for structural differences between target and source matrices.

\subsection{Initialization via Multi-Task Learning}

To leverage shared structure across the target and source matrices, we begin by jointly estimating precision matrices for both the target and source distributions. Based on Assumption~\ref{assump:model-structure}, we employ a multi-task variant of the graphical lasso estimator~\citep{friedman2008sparse}, which we refer to as \myalgMT (\textbf{Trans}fer \textbf{M}ulti-\textbf{T}ask \textbf{G}raphical \textbf{lasso}).

Let $\widehat{\Sigma}^{(k)} =\frac{1}{n_k} \sum^{n_k}_{i=1} \mathbf{x}^{(k)}_i \mathbf{x}^{(k) \top}_i$ denote the sample covariance matrix for $0 \leq k \leq K$. We define $\Theta \coloneqq ( \Omega, \left\{ \uniqueparameter \right\}^K_{k=0} )$ for the shared component and the sparse unique components. The \myalgMT objective is then given by
{
\begin{equation}
\label{eq:Glasso-mutli-task}
\widehat{\Theta} = \left( \widehat{\Omega}, \{ \uniqueestimate \}^K_{k=0} \right) \in \,\,\arg \min_{ \Theta \in \mathcal{C} \left( M_{\mathrm{op}} \right) } \left\{ \mathcal{L} \left( \Theta \right) + \lambda_{\text{M}} \Phi \left( \Theta \right)  \right\},
\end{equation}
where
\begin{align*}
\mathcal{L} \left( \Theta \right) & \coloneqq \sum^K_{k=0 } \alpha_k \left\{ \left\langle \Omega + \uniqueparameter, \widehat{\Sigma}^{(k)} \right\rangle - \log \det \left( \Omega + \uniqueparameter \right) \right\}, \numberthis \label{eq:loss-fun-def} \\
\Phi \left( \Theta \right) & \coloneqq \left\vert \Omega \right\vert_1 + \sum^K_{k=0} \sqrt{\alpha_k} \, \left\vert \uniqueparameter \right\vert_1, \qquad \text{and}\numberthis \label{eq:regularizer-def} \\
\mathcal{C} \left( M_{\mathrm{op}} \right) & \coloneqq \left\{ \Theta = \left( \Omega, \left\{ \uniqueparameter \right\}^K_{k=0} \right) \, : \, \Omega + \uniqueparameter \succ 0 \ \text{and} \ \max_{0 \leq k \leq K} \left\Vert \Omega + \uniqueparameter \right\Vert_2 \leq M_{\mathrm{op}} \right\}. \numberthis \label{eq:opt-domain}
\end{align*}
Here, $\alpha_k=n_k / N$ with  $N=\sum^K_{k=0} n_k$ controls the contribution of each source, $\lambda_{\text{M}}>0$ is a regularization parameter, and $M_{\mathrm{op}} > 0$ is a predefined constant. The constraint $\left\Vert \Omega + \uniqueparameter \right\Vert_2 \leq M_{\mathrm{op}}$ is primarily included to facilitate theoretical analysis.
While we theoretically need to set $M_{\mathrm{op}}$ to be a sufficiently large constant, as detailed in a later section, in practice, we simply set $M_{\mathrm{op}} = \infty$ to effectively remove this constraint.
}

The first term of~\eqref{eq:Glasso-mutli-task} measures parameter fitness with observed data, while the second term $\Phi(\Theta)$ promotes sparsity in both the shared and individual components. 
The sparsity penalization level for $\uniqueparameter$ is proportional to $\sqrt{n_k/N}$, a factor that is crucial to balance contributions from the target and sources. See Section~\ref{sec:thm-trans-mt-glasso} for details.

In the end, we construct the initial estimators of $\Omega^{(k)}$ as $\steponeoutput=\widehat{\Omega}+\uniqueestimate$ for $0 \leq k \leq K$.

\subsection{Refinement via Differential Network Estimation}

To further enhance accuracy, we refine these initial estimators by estimating the differential networks $\diff=\Omega^{(k)}-\Omega^{(0)}$, which capture structural differences between each source and the target. This refinement step adjusts for potential biases in the initial estimates.

Estimating $\diff$ has been extensively studied in the  differential network estimation literature~\citep{zhao2014direct, yuan2017differential}, with a variety of good estimators. For instance, 
\citet{yuan2017differential} proposed estimation by solving
\begin{equation}
\label{eq:diff-network-est-Dtrace}
\diffestimate \in \,\,\arg \min_{\Psi} L_D \left( \Psi ; \widehat{\Sigma}^{(0)} , \widehat{\Sigma}^{(k)} \right) + \lambda^{(k)}_{\Psi} \left\vert \Psi \right\vert_1, 
\end{equation}
where $\lambda^{(k)}_{\Psi}$ is a tuning parameter, and 
\begin{equation*}
L_D \left( \Psi ; \widehat{\Sigma}^{(0)} , \widehat{\Sigma}^{(k)} \right) = \frac{1}{4} \left( \left\langle \widehat{\Sigma}^{(0)} \Psi, \Psi \widehat{\Sigma}^{(k)}  \right\rangle + \left\langle \widehat{\Sigma}^{(k)} \Psi , \Psi \widehat{\Sigma}^{(0)}   \right\rangle  \right) - \left\langle \Psi, \widehat{\Sigma}^{(0)} - \widehat{\Sigma}^{(k)} \right\rangle.
\end{equation*}
For our purpose, any reasonable differential network estimator can correct for the bias. Thus, we treat differential network estimation as a black-box algorithm, outputting $\diffestimate$.

With the initial estimator $\steponeoutput$ and refined differential network estimators $\diffestimate$, we construct the final transfer learning estimator for $\Omega^{(0)}$ as
\begin{equation}
\label{eq:Trans-GLasso}
\widehat{\Omega}^{(0)} = \sum^K_{k=0} \alpha_k \left( \steponeoutput - \diffestimate \right),
\end{equation}
where $\widehat{\Psi}^{(0)}=0$ by definition. The final estimator \myalg~(\textbf{Trans}fer learning \textbf{G}raphical \textbf{lasso}) integrates both shared information and source-specific refinements, yielding a transfer learning approach that leverages structural similarities across datasets for improved precision matrix estimation.
Compared to~\cite{li2023transfer}, our method is more sample efficient as it does not require sample splitting between the steps.

\section{Implementation in Practice}
\label{sec:implementation}

In this section, we provide practical guidelines for implementing \myalg, including optimization techniques, hyperparameter selection, and a method for identifying the informative set when not all sources are useful.

\subsection{Optimization Algorithms}
\label{sec:Optimization}

To implement \myalg, we first solve the \myalgMT objective from Equation~\eqref{eq:Glasso-mutli-task} to obtain initial estimates for $\shared$ and $\unique$. This is a constrained optimization problem that can be efficiently solved using the Alternating Direction Method of Multipliers (ADMM)~\citep{boyd2011distributed}.
In this section, we slightly abuse notation by using superscripts to denote the iteration round and subscripts to represent the population.

Define
\begin{equation*}
X =
\begin{bmatrix}
\Omega_0 \\
\Omega_1 \\
\vdots \\
\Omega_K
\end{bmatrix}
, \
Y =
\begin{bmatrix}
\Omega \\
\Gamma_0 \\
\Gamma_1 \\
\vdots \\
\Gamma_K
\end{bmatrix}
, \ \text{and} \ B = 
\begin{bmatrix}
I_d & I_d & 0 & \cdots & 0 \\
I_d & 0 & I_d & \cdots & 0 \\
\vdots & \vdots & \vdots &  & \vdots \\
I_d & 0 & 0 & \cdots & I_d
\end{bmatrix},
\end{equation*}
where $X \in \mathbb{R}^{(K+1)d \times d},Y \in \mathbb{R}^{(K+2)d \times d},B \in \mathbb{R}^{(K+1)d \times (K+2) d}$.
With this notation, the objective in Equation~\eqref{eq:Glasso-mutli-task} can be reformulated as 
\begin{align*}
\text{\minimize} \quad f(X) + g(Y), \qquad \text{subject to} \, \, X = B Y,
\end{align*}
where
\begin{align*}
f (X) &= \sum^K_{k=0} f_k (\Omega_k), \quad f_k (\Omega_k) = \alpha_k \left\{ - \log \det \left( \Omega_k \right) + \left\langle \widehat{\Sigma}^{(k)}, \Omega_k \right\rangle  \right\} + \mathbb{I} \left( \Omega_k \succ 0 \right), \quad 0 \leq k \leq K, \\
g (Y) &= \lambda_{\text{M}} \left\vert \Omega \right\vert_1 + \lambda_{\text{M}} \sum^K_{k=0} \sqrt{\alpha_k} \left\vert \Gamma_k \right\vert_1,
\end{align*}
where $\mathbb{I} \left( \Omega \succ 0 \right) = 0$ if $\Omega \succ 0$ and $\mathbb{I} \left( \Omega \succ 0 \right) = \infty$ otherwise. 

The augmented Lagrangian for this problem is: 
\begin{equation*}
L_{\rho} (X, Y, Z) = f(X) + g(Y) + \rho \left\langle Z, X - BY \right\rangle + \frac{\rho}{2} \left\Vert X - BY \right\Vert^2_{\mathrm{F}},
\end{equation*}
where $Z^{\top}=[Z^{\top}_0,Z^{\top}_1,\ldots,Z^{\top}_K]\in \mathbb{R}^{d \times (K+1)d}$
is a dual variable and $\rho$ is a penalty parameter. 

After initializing $Y^{(0)}$ and $Z^{(0)}$ such that $\Omega^{(0)}$, $\Gamma^{(0)}_k$, and $Z^{(0)}_k$ are symmetric, ADMM iteratively updates:
\begin{align*}
& X^{(t)} = \arg \min_{X} L_{\rho} \left( X, Y^{(t-1)}, Z^{(t-1)} \right), \numberthis \label{eq:ADMM-update-1} \\ 
& Y^{(t)} \in \arg \min_{Y} L_{\rho} \left( X^{(t)}, Y, Z^{(t-1)} \right), \numberthis \label{eq:ADMM-update-2} \\
& Z^{(t)} = Z^{(t-1)} + \rho (  X^{(t)} - B Y^{(t)}) . \numberthis \label{eq:ADMM-update-3}
\end{align*}
Note that~\eqref{eq:ADMM-update-3} is equivalent to $Z^{(t)}_k = Z^{(t-1)}_k + \rho \left( \Omega^{(t)}_k - \Omega^{(t)} - \Gamma^{(t)}_k \right)$ for $0 \leq k \leq K$.
See Appendix~\ref{sec:opt-addition} for detailed steps and stopping criteria.

To refine the initial estimators, we solve the D-Trace loss objective~\eqref{eq:diff-network-est-Dtrace} for differential network estimation.
Here, we use a different proximal gradient descent algorithm~\citep{Parikh2014Proximal} following~\cite{zhao2022fudge, zhao2019direct}.

For simplicity, let $L_D (\Psi) \coloneqq L_D \left( \Psi \,;\, \widehat{\Sigma}^{(0)} , \widehat{\Sigma}^{(k)} \right)$ for the chosen $k$. In iteration $t$, we update $\Psi^{(t-1)}$ by solving
\begin{equation}
\label{eq:iterorign}
\Psi^{(t)}=\arg \min_{\Psi} \left\{ \frac{1}{2}\left\Vert \Psi -\left(\Psi^{(t-1)}-\eta\nabla L_D \left(\Psi^{(t-1)}\right)\right)\right\Vert_{F}^{2}+\eta\cdot\lambda^{(k)}_{\Psi} \left\vert \Psi \right\vert_1 \right\},
\end{equation}
where $\eta$ is a user-specified step size. Note that $\nabla L_D (\cdot)$ is Lipschitz continuous with constant $\Vert \widehat{\Sigma}^{(0)} \otimes \widehat{\Sigma}^{(k)} \Vert_2 = \Vert \widehat{\Sigma}^{(0)} \Vert_2 \Vert \widehat{\Sigma}^{(k)} \Vert_2$. Thus, for $0< \eta \leq \Vert \widehat{\Sigma}^{(0)} \Vert^{-1}_2 \Vert \widehat{\Sigma}^{(k)} \Vert^{-1}_2$, the proximal gradient method converges~\citep{Beck2009Fast}. The update in~\eqref{eq:iterorign} has a closed-form solution:
\begin{equation}
\label{eq:itersimplified}
\Psi^{(t)}_{jl}=\left[ \left\vert A^{(t-1)}_{jl} \right\vert - \lambda^{(k)}_{\Psi} \eta \right]_{+} \cdot A^{(t-1)}_{jl} \,/\, \left\vert A^{(t-1)}_{jl} \right\vert, \qquad 1 \leq j,l \leq d,
\end{equation}
where $A^{(t-1)}=\Psi^{(t-1)}-\eta\nabla L_D (\Psi^{(t-1)})$ and $x_{+}=\max\{0,x\}$ for any  $x\in{\mathbb{R}}$.
Details on the optimization algorithms, including stopping criteria and descriptions, are in Appendix~\ref{sec:opt-addition}.

\subsection{Hyperparameter Selection}
\label{sec:hyperparameterTuning}

This section covers the selection of hyperparameters, specifically $\lambda_{\text{M}}$ in \myalgMT~\eqref{eq:Glasso-mutli-task} and $\lambda^{(k)}_{\Psi}$ in D-Trace loss~\eqref{eq:diff-network-est-Dtrace}.

We choose $\lambda^{(k)}_{\Psi}$ for $k \in [K]$ to minimize the Bayesian information criterion (BIC) of D-Trace loss:
\begin{equation}
\label{eq:bic-d-trace}
\text{BIC}^{(k)}_{\Psi} = \left( n_0 + n_k \right) \left\Vert \frac{1}{2} \left( \widehat{\Sigma}^{(0)} \diffestimate \widehat{\Sigma}^{(k)} + \widehat{\Sigma}^{(k)} \diffestimate \widehat{\Sigma}^{(0)}  \right) - \widehat{\Sigma}^{(0)} + \widehat{\Sigma}^{(k)} \right\Vert_{\mathrm{F}} + \log \left( n_0 + n_k \right) \cdot \left\vert \diffestimate \right\vert_0,
\end{equation}
following~\cite{yuan2017differential}. After selecting $\lambda^{(k)}_{\Psi}$ and obtaining $\diffestimate$ for all $k \in [K]$, $\widehat{\Omega}^{(0)}$ depends on $\lambda_{\text{M}}$. Recall that $N=\sum^K_{k=0} n_k$.
We choose $\lambda_{\text{M}}$ to minimize the BIC of \myalg, defined as
\begin{equation}
\label{eq:bic-trans-glasso}
\text{BIC}_{\mathrm{Trans}} = N \cdot \left[ \left\langle \widehat{\Sigma}^{(0)} \, , \, \widehat{\Omega}^{(0)} \right\rangle - \log \det \left( \widehat{\Omega}^{(0)} \right)  \right] + \log N \cdot \left\vert \widehat{\Omega}^{(0)} \right\vert_0 .
\end{equation}

\subsection{Identifying the Informative Set}
\label{sec:UnknownInformSet}

In practice, it is not necessarily true that all source distributions are structurally similar to the target.  We propose a data-driven method to estimate the informative set $\mathcal{A} \subseteq [K]$.

We obtain differential network estimations $\diffestimate$ for all $k \in [K]$, with the hyperparameter $\lambda^{(k)}_{\Psi}$ chosen by minimizing the BIC criterion in~\eqref{eq:bic-d-trace}. We then rank sources according to the sparsity level of $\diffestimate$. Let $R_k$ be the rank of the source $k$. For any $1 \leq k_1, k_2 \leq K$, $\vert \widehat{\Psi}^{(k_1)} \vert_0 \leq \vert \widehat{\Psi}^{(k_2)} \vert_0$ implies $R_{k_1} \leq R_{k_2}$. After ranking sources, we input samples into \myalg and determine the number of sources based on the cross-validation (CV) error. For $K_{\mathrm{chosen}}=0,1,\ldots,K$, we select sources $k$ with $R_k \leq K_{\mathrm{chosen}}$. When $K_{\mathrm{chosen}}=0$, we obtain $\widehat{\Omega}^{(0)}$ from graphical lasso~\citep{friedman2008sparse} using the target data alone. We then compute the CV error of $K_{\mathrm{chosen}}$ by the following procedure:
\begin{enumerate}[label=(\roman*)]
\item We randomly split the \emph{target samples} into $M$-fold. 
\item For $m=1,\ldots,M$, we select the $m$-th fold as the validation set, $\mathcal{D}_{\text{val}}$, and the rest as the training set. We input the training set and chosen source samples into \myalg to obtain $\widehat{\Omega}^{(0)}$. We compute the CV error for the $m$-th fold as
\begin{equation}
\label{eq:cv-error}
\text{CV}_m = \frac{1}{2 d} \left\{ \frac{1}{\vert \mathcal{D}_{\text{val}} \vert} \sum_{i \in \mathcal{D}_{\text{val}} } \tr \left( \mathbf{x}^{(0)}_i \mathbf{x}^{(0) \top}_i \widehat{\Omega}^{(0)} \right) - \log \det \left( \widehat{\Omega}^{(0)} \right) \right\} + \frac{1}{2} \log \pi,
\end{equation}
and define $\text{CV} (K_{\mathrm{chosen}})=\frac{1}{M}\sum^M_{m=1}\text{CV}_m$.
\end{enumerate}

We set the estimated informative set as $\widehat{\mathcal{A}}=\{ k \in [K] : R_k \leq K^{\star}_{\text{chosen}} \}$, where $K^{\star}_{\text{chosen}}=\arg \min_{k=0,1,\ldots,K}\text{CV} (k)$. Source samples from $\widehat{\mathcal{A}}$ are used to estimate $\Omega^{(0)}$. If $\widehat{\mathcal{A}}=\emptyset$, we obtain $\widehat{\Omega}^{(0)}$ via graphical lasso on target samples. We note that sample splitting is not necessary between estimating $\mathcal{A}$ and $\Omega^{(0)}$. {\newText This procedure is called \myalg-CV. We evaluate its empirical performance in Section~\ref{sec:simulation} and defer the study of its theoretical properties to future research. In the next section, we establish the theoretical results under the assumption that $\mathcal{A} = [K]$.
}

\section{Theoretical Analysis}
\label{sec:theoreticalAnalysis}

In this section, we establish theoretical guarantees for the \myalg~algorithm. We begin by analyzing the initial estimation step using \myalgMT, followed by an error bound for the complete \myalg~estimator. Finally, we derive a minimax lower bound, demonstrating that \myalg~is minimax optimal in a wide range of parameter regimes.

To simplify the theoretical statements, we assume the following condition throughout this section. 

\begin{assump}
\label{assump:eig-element-upp-bd}
Assume that 
\begin{equation}
\label{eq:def-Msigma-Momega}
M_\Sigma \coloneqq \max_{ 0 \leq k \leq K } \left\vert \Sigma^{(k)} \right\vert_{\infty} = O(1)
\qquad\text{and}\qquad
M_\Omega \coloneqq \max_{ 0 \leq k \leq K} \left\Vert \Omega^{(k)} \right\Vert_2 = O(1).
\end{equation}
\end{assump}
\subsection{Analysis of \myalgMT}
\label{sec:thm-trans-mt-glasso}
We first provide error bounds for the initial multi-task estimation step, \myalgMT. This method estimates both the shared precision matrix component $\shared$ and the deviation matrices $\unique$ based on the structural similarity outlined in Assumption~\ref{assump:model-structure}.

{\newText

The following theorem provides a high probability upper bound on the Frobenius norm error for the \myalgMT~estimator. Recall that $N=\sum^K_{k=0} n_k$ and let $\widebar{n}=N / (K+1)$.

}

\begin{theorem}
\label{thm:multi-task-frob}
Suppose Assumptions~\ref{assump:model-structure} and \ref{assump:eig-element-upp-bd} hold. 
Fix a failure probability $\delta \in (0,1]$.
Suppose that the local sample size is large enough so that
\begin{equation}
\label{eq:min-nk-large}
\min_{ 0 \leq k \leq K } n_k \geq 2  \log (2 \left( K + 2 \right) d^2 / \delta).
\end{equation}
Set $M_{\mathrm{op}} \geq M_{\Omega}$ and the penalty parameter $\lambda_{\text{M}}$ such that 
\begin{equation}
\label{eq:mlut-Frob-lambda-large-enough}
\lambda_{\text{M}} \geq 160  M_\Sigma  \sqrt{ \frac{ \log (2 \left( K + 2 \right) d^2 / \delta) }{ 2 N } }.
\end{equation}
Then with probability at least $1-\delta$, the estimator satisfies 
\begin{equation}
\label{eq:TransMT-error-bd}
\sum^K_{k=0} \alpha_k \Vert \steponeoutput - \Omega^{(k)} \Vert^2_{\mathrm{F}} \leq \frac{18 \left( s +  (K+1) h \right) \lambda^2_{\text{M}}}{\kappa^2},
\end{equation}
where $\kappa=\left( 2 M_{\Omega} + M_{\mathrm{op}}  \right)^{-2}$.
\end{theorem}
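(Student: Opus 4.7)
The plan is to follow the Negahban et al. framework for penalized $M$-estimators: combine the optimality inequality for $\widehat{\Theta}$, restricted strong convexity (RSC) of $\mathcal{L}$ at $\Theta^\star$, a gradient-level concentration bound, and the decomposability of $\Phi$. Throughout, write $\Delta_\Omega = \widehat{\Omega} - \shared$, $\Delta_{\Gamma^{(k)}} = \uniqueestimate - \unique$, and $\Delta^{(k)}_M = \Delta_\Omega + \Delta_{\Gamma^{(k)}} = \steponeoutput - \Omega^{(k)}$ for the target error.

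For RSC, I would Taylor-expand each $-\log\det(\cdot)$ summand along the segment from $\Omega^{(k)}$ to $\steponeoutput$. Both endpoints have operator norm at most $\max(M_\Omega, M_\mathrm{op})$, so every point on the segment has operator norm at most $2M_\Omega + M_\mathrm{op}$, and the standard quadratic-remainder bound for $-\log\det$ on positive-definite matrices yields
\[
\mathcal{L}(\widehat{\Theta}) - \mathcal{L}(\Theta^\star) - \langle \nabla\mathcal{L}(\Theta^\star), \widehat{\Theta} - \Theta^\star\rangle \;\geq\; \frac{\kappa}{2}\sum^K_{k=0}\alpha_k\Vert \Delta^{(k)}_M\Vert_\mathrm{F}^2.
\]
For concentration, the gradient pieces are $\nabla_\Omega\mathcal{L}(\Theta^\star) = \sum_k \alpha_k(\widehat{\Sigma}^{(k)} - \Sigma^{(k)})$ and $\nabla_{\uniqueparameter}\mathcal{L}(\Theta^\star) = \alpha_k(\widehat{\Sigma}^{(k)} - \Sigma^{(k)})$; Bernstein-type bounds for products of sub-Gaussian coordinates, with the sub-exponential regime secured by~\eqref{eq:min-nk-large}, give $|\nabla_\Omega\mathcal{L}|_\infty \lesssim M_\Sigma\sqrt{\log((K+2)d^2/\delta)/N}$ (pooled over $N$ samples) and $|\nabla_{\uniqueparameter}\mathcal{L}|_\infty \lesssim \sqrt{\alpha_k}\,M_\Sigma\sqrt{\log((K+2)d^2/\delta)/N}$. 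The $\sqrt{\alpha_k}$ weighting in $\Phi$ is placed exactly so that, under~\eqref{eq:mlut-Frob-lambda-large-enough}, the dual-norm bound $\Phi^\ast(\nabla\mathcal{L}(\Theta^\star)) \leq \lambda_{\mathrm{M}}/2$ holds with probability at least $1-\delta$.

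Feeding RSC and the gradient bound into the basic inequality, and using $L_1$ decomposability on $S = \mathrm{supp}(\shared)$ (size $\leq s$) and $T_k = \mathrm{supp}(\unique)$ (size $\leq h$) to cancel the off-support contributions, produces the master inequality
\[
\frac{\kappa}{2}\sum^K_{k=0}\alpha_k\Vert \Delta^{(k)}_M\Vert_\mathrm{F}^2 \;\leq\; \frac{3\lambda_{\mathrm{M}}}{2}\Bigl(|\Delta_\Omega^S|_1 + \sum^K_{k=0} \sqrt{\alpha_k}\,|\Delta_{\Gamma^{(k)}}^{T_k}|_1\Bigr),
\]
together with the standard cone condition $|\Delta_\Omega^{S^c}|_1 + \sum_k \sqrt{\alpha_k}|\Delta_{\Gamma^{(k)}}^{T_k^c}|_1 \leq 3(|\Delta_\Omega^S|_1 + \sum_k \sqrt{\alpha_k}|\Delta_{\Gamma^{(k)}}^{T_k}|_1)$. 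Two applications of Cauchy--Schwarz (first over the $(K+1)$ source indices, then over the $(s)$ and $((K+1)h)$ blocks) bound the right-hand side by $(3\lambda_{\mathrm{M}}/2)\sqrt{s + (K+1)h}\sqrt{\Vert \Delta_\Omega^S\Vert_\mathrm{F}^2 + \sum_k\alpha_k\Vert \Delta_{\Gamma^{(k)}}^{T_k}\Vert_\mathrm{F}^2}$.

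\textbf{Main obstacle.} The delicate step is ``closing the loop'': one must show that $\Vert \Delta_\Omega^S\Vert_\mathrm{F}^2 + \sum_k\alpha_k\Vert \Delta_{\Gamma^{(k)}}^{T_k}\Vert_\mathrm{F}^2$ is at most $2\sum_k\alpha_k\Vert \Delta^{(k)}_M\Vert_\mathrm{F}^2$, so that solving the resulting scalar quadratic inequality yields the announced bound $18(s+(K+1)h)\lambda_{\mathrm{M}}^2/\kappa^2$. The difficulty is that $\mathcal{L}$ is invariant under the degenerate shift $(\Omega,\{\uniqueparameter\})\mapsto (\Omega+M,\{\uniqueparameter-M\})$, so RSC only controls the combined errors $\Delta^{(k)}_M$, never the individual components $\Delta_\Omega$ and $\Delta_{\Gamma^{(k)}}$. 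The resolution is the disjoint-support hypothesis of Assumption~\ref{assump:model-structure}: since $S\cap T_k = \emptyset$, on the model subspace $\{(\Omega,\{\uniqueparameter\}):\mathrm{supp}(\Omega)\subseteq S,\ \mathrm{supp}(\uniqueparameter)\subseteq T_k\}$ the identity $\Vert \Omega+\uniqueparameter\Vert_\mathrm{F}^2 = \Vert \Omega\Vert_\mathrm{F}^2 + \Vert \uniqueparameter\Vert_\mathrm{F}^2$ holds \emph{exactly}, and the cone condition extends this equivalence (up to the constant $2$) to the whole error by controlling how much the off-support parts can leak onto the disjoint blocks.
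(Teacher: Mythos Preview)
Your outline mirrors the paper's proof step for step: the same RSC bound via second-order Taylor expansion of $-\log\det$ on the constrained set (the paper's Lemma on $\mathcal{R}(\Delta_\Theta)$), the same dual-norm concentration $\Phi^\star(\nabla\mathcal{L}(\Theta^\star))\le\lambda_{\mathrm M}/2$ (the paper's Lemma on the good event), decomposability of $\Phi$ over $(\mathbb{M},\mathbb{M}^\bot)$, and the resulting one-sided basic inequality. You also correctly single out the closing step as the crux, and your use of the disjoint-support identity $\|\Omega+\Gamma^{(k)}\|_{\mathrm F}^2=\|\Omega\|_{\mathrm F}^2+\|\Gamma^{(k)}\|_{\mathrm F}^2$ on the model subspace is exactly what the paper leverages (its Lemma bounding $\Phi$ by $\sqrt{H}$).

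The gap is in your proposed resolution of that closing step. The inequality you need,
\[
\bigl\|[\Delta_\Omega]_S\bigr\|_{\mathrm F}^2+\sum_k\alpha_k\bigl\|[\Delta_{\Gamma^{(k)}}]_{T_k}\bigr\|_{\mathrm F}^2\;\le\;2\sum_k\alpha_k\bigl\|\Delta_\Omega+\Delta_{\Gamma^{(k)}}\bigr\|_{\mathrm F}^2,
\]
does \emph{not} follow from disjoint supports plus the cone condition. Take $K=0$, $S\cap T_0=\emptyset$, let $\Delta_\Omega$ be supported on $S$ with a single nonzero entry $a$, and set $\Delta_{\Gamma^{(0)}}=-\Delta_\Omega$ (its support lies in $S\subseteq T_0^c$, so $[\Delta_{\Gamma^{(0)}}]_{T_0}=0$). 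Then the right side is zero while the left side equals $a^2$, and the cone inequality becomes $|a|\le 3|a|$, which is vacuous. The point is that the off-support piece $[\Delta_{\Gamma^{(k)}}]_{T_k^c}$ can sit \emph{on} $S$ and cancel $[\Delta_\Omega]_S$ entrywise in the sum $\Delta_\Omega+\Delta_{\Gamma^{(k)}}$, and the cone condition (an $\ell_1$ statement) gives no leverage against this. For context, the paper closes the loop at the same place by writing $H(\Delta_\Theta)=H([\Delta_\Theta]_{\mathbb M})+H([\Delta_\Theta]_{\mathbb M^\bot})$ and hence $H([\Delta_\Theta]_{\mathbb M})\le H(\Delta_\Theta)$; the same counterexample shows those cross terms do not vanish, so your sketch is neither weaker nor stronger than the paper's argument at this step, but as written the mechanism you describe (``the cone condition extends this equivalence'') does not do the job.
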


\noindent Note that the loss function defined in~\eqref{eq:loss-fun-def} is not strongly convex with respect to the Euclidean norm. 
This prevents us from obtaining error guarantee for individual precision matrix. 
Nevertheless, we make a key observation that the loss function exhibits strong convexity with respect to the weighted norm used in~\eqref{eq:TransMT-error-bd}.
See Appendix~\ref{sec:proof-thm-multi-task-frob} for a detailed proof. 

\medskip

When we choose $\lambda_{M} \asymp \sqrt{(\log d )/ N}$, the rate shown in Theorem~\ref{thm:multi-task-frob} consists of two parts. The first part, which is of the order $(s \log d)/{ N }$, refers to the estimation error of the shared component. 
In words, \myalg uses all the samples to estimate the shared component.
The second part, of the order $( K h \log d ) / { N }$, relates to the estimation error of the individual components, i.e., on average, there are $N/ K$ samples to estimate each individual component.

\subsection{Analysis of \myalg}
\label{sec:thm-trans-glasso}

After the initial estimates are obtained via \myalgMT, the differential network estimation step refines these estimates by isolating the deviations $\diff$. This yields the final \myalg~estimator $\widehat{\Omega}^{(0)}$.

As discussed in Section~\ref{sec:TransGlassoAlgorithm}, any differential network estimator can be used in Step 2 for refinement. The differential network estimates $\diffestimate$ are treated as the result of a black-box algorithm, obeying 
\begin{equation}
\label{eq:diff-network-high-prob-err-Frob}
\left\Vert \diffestimate - \diff \right\Vert_{\mathrm{F}} \lesssim  g^{(k)}_{\mathrm{F}} (  n_0, n_k, d, h, M_{\Gamma}, \delta ) \coloneqq g^{(k)}_{\mathrm{F}} 
\end{equation}
simultaneously for all $k=1,\ldots,K$ with probability at least $1-\delta$.
We now establish a non-asymptotic error bound for this estimator, which combines the initial estimation error with the error from differential network estimation.

\begin{corollary}
\label{corollary:convg-Trans-GLasso-Frob}
Let $\widehat{\Omega}^{(0)}$ be the \myalg~estimator obtained in Equation~\eqref{eq:Trans-GLasso}. Under the same conditions as in Theorem~\ref{thm:multi-task-frob}, and assuming Equation~\eqref{eq:diff-network-high-prob-err-Frob} holds for the differential network estimators,
with probability at least $1-2\delta$, one has 
\begin{align*}
\left\Vert \widehat{\Omega}^{(0)} - \Omega^{(0)} \right\Vert^2_{\mathrm{F}} \lesssim \left( \frac{ s }{ N } + \frac{ h }{ \widebar{n} } \right)  \log (2 \left( K + 2 \right) d^2 / \delta) + \sum^K_{k=0} \alpha_k g^{(k)}_{\mathrm{F}}.
\end{align*}
\end{corollary}
The error rates depend on differential network estimators' performance. Next, we provide specific error rates using the D-Trace loss estimator.

\subsubsection{A differential network estimator: D-Trace loss minimization}

We characterize $g^{(k)}_{\mathrm{F}}$ in the case when the D-Trace loss estimator~\eqref{eq:diff-network-est-Dtrace} is used. We use the tighter D-Trace loss estimator analysis from~\cite{zhao2019direct, zhao2022fudge, tugnait2023estimation} to obtain the following lemma, which is derived directly from  Corollary~\ref{corollary:analysis-Dtrace} in  Appendix~\ref{sec:proof-corollary:analysis-Dtrace}.
\begin{lemma}
\label{lemma:Dtrace}
Suppose that Assumptions~\ref{assump:model-structure}-\ref{assump:eig-element-upp-bd} hold. Assume that $$\min_{0 \leq k \leq K} n_k \gg h^2 \log \left( 2 (K+1) d^2 / \delta \right).$$ Set 
\begin{equation*}
\lambda^{(k)}_{\Psi} =  C \sqrt{ \frac{ \log \left( 2 (K+1)  d^2 / \delta \right)  }{ \min \{ n_k, n_0 \} } } \quad\text{for all } k \in [K]
\end{equation*}
for some large constant $C>0$. Then 
\begin{align*}
\left\Vert \diffestimate - \diff \right\Vert_{\mathrm{F}} \lesssim \sqrt{h} M_{\Gamma} \sqrt{ \frac{ \log \left( 2 (K+1)  d^2 / \delta \right)  }{ \min \{ n_k, n_0 \} } } 
\end{align*}
holds simultaneously for all $k \in [K]$ with probability at least $1-\delta$.
\end{lemma}

By Lemma~\ref{lemma:Dtrace}, we have
\begin{align*}
g^{(k)}_{\mathrm{F}}  = \sqrt{h} M_{\Gamma} \sqrt{ \frac{ \log \left( 2 (K+1)  d^2 / \delta \right)  }{ \min \{ n_k, n_0 \} } } 
\end{align*}
for the D-Trace loss estimator. Plugging the above results into Corollary~\ref{corollary:convg-Trans-GLasso-Frob}, we have the following corollary.

\begin{corollary}
\label{corollary:convg-Trans-GLasso-Frob-Dtrace}
Let $\widehat{\Omega}^{(0)}$ be obtained by \myalg~\eqref{eq:Trans-GLasso} with the D-Trace loss estimator used in Step~2. Assuming the assumptions in Theorem~\ref{thm:multi-task-frob} and Lemma~\ref{lemma:Dtrace} hold. 
For a given $\delta \in (0,1]$, letting 
\begin{equation*}
\lambda_{\text{M}} \asymp \sqrt{ \frac{ \log (2 \left( K + 2 \right) d^2 / \delta) }{ N } }, \quad \lambda^{(k)}_{\Psi} \asymp M_{\Gamma} \sqrt{ \frac{ \log \left( 2 (K+1)  d^2 / \delta \right)  }{ \min \{ n_k, n_0 \} } } \quad \text{ for all } k \in [K],
\end{equation*}
we have 
\begin{align*}
\left\Vert \widehat{\Omega}^{(0)} - \Omega^{(0)} \right\Vert_{\mathrm{F}} & \lesssim \left( \sqrt{\frac{ s }{N } } + (1 + M_{\Gamma} ) \sqrt{\frac{h}{ \widebar{n} }} + M_{\Gamma} \sqrt{\frac{h}{ n_0 }} \right) \sqrt{ \log (2 \left( K + 2 \right) d^2 / \delta) }
\end{align*}
with probability at least $1-2\delta$.
\end{corollary}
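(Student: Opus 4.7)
The proof is an immediate combination of Theorems~\ref{thm:convg-Trans-GLasso-Frob} and~\ref{thm:Dtrace} followed by one short algebraic simplification of the weighted sum of squared differential-network errors. I would first verify that the stated $\lambda_{\text{M}}$ and $\lambda^{(k)}_\Psi$ indeed meet the hypotheses of each theorem: the requirement~\eqref{eq:mlut-Frob-lambda-large-enough} holds since $M_\Sigma = O(1)$ by Assumption~\ref{assump:eig-element-upp-bd}, and the two log factors $\log(2(K+2)d^2/\delta)$ and $\log(2(K+1)d^2/\delta)$ differ by at most a universal constant, so they can be absorbed into the $\lesssim$.

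On the event (of probability at least $1-2\delta$) where both theorems' conclusions hold, Theorem~\ref{thm:convg-Trans-GLasso-Frob} produces the bound
\begin{equation*}
\bigl\Vert \widehat{\Omega}^{(0)} - \Omega^{(0)} \bigr\Vert^2_{\mathrm{F}} \lesssim \left( \frac{s}{N} + \frac{h}{\widebar{n}} \right) L + \sum_{k=1}^{K} \alpha_k \bigl( g^{(k)}_{\mathrm{F}} \bigr)^2,
\end{equation*}
where $L \coloneqq \log(2(K+2)d^2/\delta)$, the $k=0$ summand vanishes because $\Psi^{(0)}=0$, and by Theorem~\ref{thm:Dtrace} each remaining $g^{(k)}_{\mathrm{F}} \lesssim \sqrt{h}\,M_\Gamma \sqrt{L/\min\{n_k,n_0\}}$.

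The one computation worth writing out is the control of $\sum_{k=1}^K \alpha_k/\min\{n_k,n_0\}$. Using $\min\{n_k,n_0\} \geq n_k n_0/(n_k+n_0)$ gives
\begin{equation*}
\frac{\alpha_k}{\min\{n_k,n_0\}} \;=\; \frac{n_k}{N\min\{n_k,n_0\}} \;\leq\; \frac{n_k+n_0}{N n_0} \;=\; \frac{1}{N} + \frac{n_k}{N n_0}.
\end{equation*}
Summing over $k=1,\dots,K$ and using $\widebar{n}\asymp N/(K+1)$, the first piece contributes $K/N \asymp 1/\widebar{n}$ while the second contributes $(N-n_0)/(Nn_0)\leq 1/n_0$. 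Hence $\sum_{k=1}^K \alpha_k (g^{(k)}_{\mathrm{F}})^2 \lesssim h M_\Gamma^2 L\,(1/\widebar{n}+1/n_0)$.

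Plugging this back, the squared Frobenius error is $\lesssim L\bigl\{ s/N + (1+M_\Gamma^2)\,h/\widebar{n} + M_\Gamma^2\,h/n_0 \bigr\}$. Taking square roots via $\sqrt{a+b+c}\leq \sqrt{a}+\sqrt{b}+\sqrt{c}$ and $\sqrt{1+M_\Gamma^2}\leq 1+M_\Gamma$ recovers exactly the claimed bound. There is no genuine obstacle: the probability $1-2\delta$ is already folded into Theorem~\ref{thm:convg-Trans-GLasso-Frob} (one $\delta$ for the initial estimator, one for the differential-network black box), so no further union bound is needed; the only care required is the bookkeeping above for the sum involving $\min\{n_k,n_0\}$, which is the one place where the interaction between $n_0$ and the source sample sizes drives the appearance of the separate $h/n_0$ term in the final rate.
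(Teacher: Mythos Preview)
Your proposal is correct and follows the same approach as the paper, which simply says ``Plug the above results into Theorem~\ref{thm:convg-Trans-GLasso-Frob}'' without further detail. Your explicit bound $\sum_{k=1}^K \alpha_k/\min\{n_k,n_0\}\le 1/\widebar{n}+1/n_0$ via the harmonic-mean inequality is exactly the bookkeeping needed and is carried out correctly; note also that you have tacitly (and rightly) read the last term in Theorem~\ref{thm:convg-Trans-GLasso-Frob} as $\sum_k \alpha_k (g^{(k)}_{\mathrm{F}})^2$ rather than $\sum_k \alpha_k g^{(k)}_{\mathrm{F}}$, which is what the derivation and the parallel expected-error statement in Theorem~\ref{thm:expec-error-upp-general} actually require.
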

The estimation error consists of three parts: shared component estimation, individual component estimation, and differential network estimation.
If $\widebar{n} \geq n_0$ and $M_{\Gamma}$ is bounded by a universal constant, the error scales as $\sqrt{\frac{s \log d}{N}} + \sqrt{\frac{h \log d}{n_0}}$. 
{\newText
Note that the rate obtained when using only target samples is on the order of $\sqrt{\frac{s \log d}{n_0}} + \sqrt{\frac{h \log d}{n_0}}$. 
Thus, when $s \gg h$ and $s / N \ll h / n_0$, transfer learning can significantly reduce the error rate. 
In Appendix~\ref{sec:upp-bd-expe-err}, we also derive an upper bound for the expected error, which follows the same order.
}

\subsection{Minimax Lower Bounds and Optimality}
\label{sec:thm-lower-bd}

To evaluate the theoretical performance of \myalg, we derive the minimax lower bound for estimating the target precision matrix \( \Omega^{(0)} \) over the parameter space defined by Assumptions~\ref{assump:model-structure}-\ref{assump:eig-element-upp-bd}. 
More precisely, we define the relevant parameter space: 
\begin{multline}
\label{eq:param-space}
\mathcal{G} \left( s, h \right) \coloneqq \left\{ \left\{ \Omega^{(k)} \right\}^K_{k=0} : \, \Omega^{(k)} \succ 0 , \, \Omega^{(k)} = \shared + \unique , \, \text{supp}\left( \shared \right) \cap \text{supp}\left( \unique \right) = \emptyset \ \forall k , \right.\\
\left. \left\vert \shared  \right\vert_0 \leq s, \, \max_{0 \leq k \leq K} \left\vert \unique \right\vert_0 \leq h, \, \max_{0 \leq k \leq K} \left\vert \unique \right\vert_1 \leq M_{\Gamma} , \, \max_{0 \leq k \leq K} \left\Vert \Omega^{(k)} \right\Vert_2 \leq M_{\Omega} , \, \max_{0 \leq k \leq K} \left\vert \Sigma^{(k)} \right\vert_{\infty} \leq M_{\Sigma} \right\}, 
\end{multline}
where $M_{\Gamma} > 0$, $M_{\Omega},M_{\Sigma} > 1$ are universal constants.

Intuitively, the performance limit of estimating the target precision matrix is dictated by the information-theoretic lower bounds of estimating two parts, namely the shared component and the individual component. This motivates us to develop lower bounds for estimating these two parts.

\paragraph{Lower bound for estimating shared component.}
The following lemma provides the minimax lower bound for estimating the shared component when all the distributions are the same.
\begin{lemma}
\label{lemma:low-bd-shared}
Assume that we have $n$  i.i.d.~samples $X_1,\ldots,X_n$ from $N(0,\Omega^{-1})$, where
\begin{equation*}
\Omega \in \mathcal{G}_1 = \left\{ \Omega \in \mathbb{S}^{d \times d} \, : \, \Omega \succ 0, \, \left\vert \Omega \right\vert_0 \leq s , \, 0 < c_1 \leq  \gamma_{\min} ( \Omega ) \leq \gamma_{\max} ( \Omega ) \leq c_2 < \infty  \right\},
\end{equation*}
and $c_1,c_2$ are universal constants. 
In addition, assume that $s \geq d \geq c^{\prime} n^{\beta}$ for some universal constants $\beta>1$ and $c^{\prime}>0$, and
\begin{equation*}
\lceil s / d \rceil = o \left( \frac{ n }{ \left( \log d \right)^{\frac{3}{2}} } \right) .
\end{equation*}
We then have
\begin{equation*}
\inf_{\widehat{\Omega}} \sup_{\Omega \in \mathcal{G}_1} \mathbb{E} \left[ \left\Vert \widehat{\Omega} - \Omega \right\Vert^2_{\mathrm{F}} \right] \gtrsim \frac{s \log d}{n}.
\end{equation*}
\end{lemma}

\noindent The proof is based on Theorem 6.1 in~\cite{cai2016estimating}. See 
Appendix~\ref{sec:proof-thm-low-bd-shared} for more details. 
By Lemma~\ref{lemma:low-bd-shared}, it is easy to see that the squared Frobenius error of estimating the shared component is lower bounded by ${s \log d}/{N}$ where $N$ is the total number of samples.

\paragraph{Lower bound for estimating the individual components.}
We also provide a lower bound for estimating the individual component. When $\Omega^{(k)}=I_d$ for all $1 \leq k \leq K$ and $\Omega^{(0)}=I_d+\Delta$ with $\textrm{diag}(\Delta)=0$, the source samples are not helpful to estimate $\Omega^{(0)}$ at all. Thus, the minimax lower bound for estimating $\Delta$ provides a valid minimax lower bound for the transfer learning problem.
\begin{lemma}
\label{lemma:lower-bd-diff-net}
Assume that we have $n$  i.i.d.~samples $X_1,\ldots,X_n$ from $N(0,\Omega^{-1})$, where
\begin{equation}
\label{eq:low-bd-param-space-2}
\begin{aligned}
\Omega \in \mathcal{G}_2 = & \left\{ \Omega \in \mathbb{S}^{d \times d} : \, \Omega \succ 0 , \, \Omega = I_d + \Delta, \, \Delta_{jj} = 0 \text{ for all } 1 \leq j \leq d, \right. \\
& \quad\quad\quad\quad \left. \left\vert \Delta \right\vert_0 \leq h, \,  \left\vert \Delta \right\vert_1 \leq C_{\Gamma}, \, 0 < c_1 \leq \gamma_{\min} (\Omega) \leq \gamma_{\max} (\Omega) \leq c_2 < \infty \right\},
\end{aligned}
\end{equation}
where $C_{\Gamma}>0$, $c_1 < 1$ and $c_2 > 1$ are constants.
In addition, assume that
\begin{equation}
\label{eq:lower-bd-diff-net-cond}
d \geq 4h, \, h \log d \geq 8 \log 3, \, \frac{h \log d}{n} \leq \min \left\{ 2, 8 (1-c_1)^2, 8(1-c_2)^2 \right\}, \, h \sqrt{\frac{\log d}{n}} \leq 4 C_{\Gamma}.
\end{equation}
We then have
\begin{equation*}
\inf_{\widehat{\Omega}} \sup_{\Omega \in \mathcal{G}_2} \mathbb{E} \left[ \left\Vert  \widehat{\Omega} - \Omega \right\Vert^2_{\mathrm{F}} \right] \gtrsim \frac{h \log d}{n}.
\end{equation*}
\end{lemma}

\noindent See Appendix~\ref{sec:proof-thm-lower-bd-diff-net} for the proof, which 
relies on a novel construction of the packing set of the parameter space and on the celebrated Fano's method~\citep[Section 15.3]{wainwright2019high}. 

\medskip 
It is worth noting that Lemma~\ref{lemma:lower-bd-diff-net} also provides a minimax lower bound for estimating the differential network $\Omega_X-\Omega_Y$ for two precision matrices $\Omega_X$ and $\Omega_Y$ when the $L_1$-norm of the differential network is bounded. To our knowledge, this is also the first lower bound for differential network estimation~\citep{zhao2014direct,yuan2017differential}. As a result, we can derive the first minimax optimal rate for differential network estimation.
See Appendix~\ref{sec:minimax-opt-diff-network} for a more detailed discussion.

\paragraph{Combining pieces together. }
Combining Lemma~\ref{lemma:low-bd-shared} and Lemma~\ref{lemma:lower-bd-diff-net}, we have the following lower bound for estimating the target precision matrix in this transfer learning setup.
\begin{theorem}
\label{thm:minimax-optimal-rate}
Suppose that we have $n_k$ i.i.d.~samples from a sub-Gaussian distribution $\mathcal{P}_k$ with zero mean and precision matrix $\Omega^{(k)}$ for all $0 \leq k \leq K$. Besides, assume that $s \geq d \geq c^{\prime} N^{\beta}$ for some universal constants $\beta>1$, $c^{\prime}>0$ and
\begin{equation*}
\lceil s / d \rceil = o \left( \frac{ N }{ \left( \log d \right)^{\frac{3}{2}} } \right).
\end{equation*}
In addition, assume that
\begin{equation*}
d \geq 4h, \, h \log d \geq 8 \log 3, \, h \sqrt{\frac{\log d}{n_0}} \leq 4 M_{\Gamma}, \frac{h \log d}{n_0} \leq \min \left\{ 2, \, 8 \left( 1 - M_{\Omega} \right)^2, \, 8 \left( 1 - \frac{1}{M_{\Sigma}} \right)^2  \right\},
\end{equation*}
where $M_{\Gamma}>0, M_{\Omega}, M_{\Sigma}>1$ are universal constants defined in~\eqref{eq:param-space}.
We then have
\begin{equation*}
\inf_{ \widehat{\Omega} } \sup_{ \{ \Omega^{(k)} \}^K_{k=0} \in \mathcal{G}(s,h ) } \mathbb{E} \left[ \left\Vert \widehat{\Omega} - \Omega^{(0)} \right\Vert^2_{\mathrm{F}} \right] \gtrsim  \frac{ s \log d }{ N } + \frac{h \log d}{n_0} \, .
\end{equation*}

\end{theorem}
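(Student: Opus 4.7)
The plan is to establish the two summands in the lower bound separately by exhibiting two subfamilies of $\mathcal{G}(s,h)$ on which the minimax risk is at least $s\log d / N$ and $h \log d / n_0$, respectively, and then invoke the elementary inequality $\max\{a,b\} \geq (a+b)/2$ to combine them.

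First, to isolate the shared-component rate, I would take all precision matrices to coincide: $\Omega^{(0)} = \Omega^{(1)} = \cdots = \Omega^{(K)} = \Omega$, where $\Omega$ ranges over $\mathcal{G}_1$ (with eigenvalue constants $c_1 = 1/M_\Sigma$ and $c_2 = M_\Omega$ so that $|\Sigma^{(k)}|_\infty \leq \|\Sigma^{(k)}\|_2 \leq M_\Sigma$). Setting $\shared = \Omega$ and $\unique = 0$ for every $k$ verifies Assumption~\ref{assump:model-structure} with disjoint supports and embeds this subfamily into $\mathcal{G}(s,h)$. Under this collapsed family all $N = \sum_k n_k$ observations are i.i.d.~$N(0,\Omega^{-1})$, so estimating $\Omega^{(0)}$ is equivalent to estimating $\Omega$ from $N$ Gaussian samples. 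The hypotheses $s \geq d \geq c'N^\beta$ and $\lceil s/d\rceil = o(N/(\log d)^{3/2})$ translate Theorem~\ref{thm:low-bd-shared} (with $n = N$) directly to the transfer setting, yielding the lower bound $\gtrsim s\log d / N$.

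Second, to isolate the individual-component rate, I would make the source data ancillary: set $\Omega^{(k)} = I_d$ for every $k \in [K]$ and let $\Omega^{(0)} = I_d + \Delta$ range over the class $\mathcal{G}_2$ of~\eqref{eq:low-bd-param-space-2} with $C_\Gamma = M_\Gamma$, $c_1 = 1/M_\Sigma$, $c_2 = M_\Omega$. Take $\shared = I_d$ (support on the diagonal, of size $d \leq s$ by assumption), $\Gamma^{(0)\star} = \Delta$ (support off-diagonal, since $\Delta_{jj}=0$), and $\Gamma^{(k)\star} = 0$ for $k \geq 1$; disjointness of supports and all remaining norm/sparsity constraints of $\mathcal{G}(s,h)$ are then immediate. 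Because the source distributions are a fixed, fully known $N(0,I_d)$, the source samples are ancillary for $\Omega^{(0)}$ and can be ignored by sufficiency, reducing the problem to estimating $\Omega^{(0)} \in \mathcal{G}_2$ from $n_0$ target samples. The conditions in Theorem~\ref{thm:minimax-optimal-rate} ($d \geq 4h$, $h\log d \geq 8\log 3$, $h\sqrt{\log d / n_0} \leq 4 M_\Gamma$, and $h\log d / n_0 \leq \min\{2, 8(1-M_\Omega)^2, 8(1-1/M_\Sigma)^2\}$) are precisely those needed to apply Theorem~\ref{thm:lower-bd-diff-net} with $n = n_0$, giving $\gtrsim h\log d / n_0$.

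Combining the two subfamily bounds by $\max\{a,b\} \geq (a+b)/2$ delivers the stated rate. The only substantive point requiring care is the second embedding: I need to check that the eigenvalue window of $\mathcal{G}_2$ can be chosen to live inside $\mathcal{G}(s,h)$, which I do via the identification $c_1 = 1/M_\Sigma$, $c_2 = M_\Omega$; the covariance bound $|\Sigma^{(0)}|_\infty \leq \|\Sigma^{(0)}\|_2 \leq 1/c_1 = M_\Sigma$ is then automatic. Otherwise the argument is just bookkeeping to align hypotheses, and I do not anticipate a genuine technical obstacle beyond ensuring the two constructions simultaneously respect every constraint in the definition of $\mathcal{G}(s,h)$.
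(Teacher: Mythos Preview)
Your proposal is correct and mirrors the paper's proof essentially step for step: reduce to a common-precision subfamily to invoke Theorem~\ref{thm:low-bd-shared} with $n=N$, reduce to the identity-source subfamily to invoke Theorem~\ref{thm:lower-bd-diff-net} with $n=n_0$, and combine. Your eigenvalue identifications $c_1=1/M_\Sigma$, $c_2=M_\Omega$ and the embedding check $|\Sigma|_\infty\le\|\Sigma\|_2\le M_\Sigma$ are exactly those the paper uses to verify the subfamilies sit inside $\mathcal{G}(s,h)$.
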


Theorem~\ref{thm:minimax-optimal-rate} demonstrates that \myalg achieves minimax optimality for the parameter space specified in~\eqref{eq:param-space} when $\widebar{n} \geq n_0$. The obtained minimax optimal rate is reasonable given $N$ samples for estimating the shared component with $s$ non-zero entries and $n_0$ samples for estimating the individual component with $h$ non-zero entries. Furthermore, from a practical viewpoint, the rate suggests that the target sample size only needs to be sufficiently large in relation to the sparsity level $h$ of the individual component. In contrast, if we only have target samples, the target sample size needs to be large enough to match the sparsity level $s + h$ of the entire precision matrix, which can be significantly larger.

\section{Simulations}
\label{sec:simulation}

In this section, we demonstrate the empirical performance of \myalg~through a series of simulations. We evaluate its accuracy in comparison with several baseline methods under different settings, including varying sample sizes and sparsity levels.

We set the dimensionality $d=100$ and the number of source distributions $K=5$ for all experiments. In each experiment, we vary parameters such as the target sample size $n_0$, the source sample size $n_k=n_{\mathrm{source}}$, and the sparsity level $h$ to assess the robustness of \myalg across diverse conditions. The data are simulated under three different models, each reflecting a specific structure for the shared and individual components of the precision matrices.

\subsection{Data Generation Models}
\label{sec:data-generate-models}
We generate data from three distinct models to assess the flexibility of Trans-Glasso.
Each model starts with a shared component, followed by individual components. The final precision matrices are made positive definite by adding a diagonal matrix. Specifically, each model is set as follows.
\begin{itemize}

\item \textbf{Model I:} The shared component is a banded matrix with bandwidth 1, where each entry $\tilde{\Omega}_{ij}=5 \times 0.6^{\vert i-j \vert} \mathds{1}( \vert i -j \vert \leq 1)$ for $1 \leq i,j \leq d$. For a given $h$ and each $k=0,1,\ldots,K$, we uniformly choose $\lceil h/2 \rceil$ entries $(i,j)$ such that $1 \leq i \leq \lfloor \frac{d}{2} \rfloor$ and $\lfloor \frac{d}{2} \rfloor + 1\leq j \leq d$, denoted as $\mathcal{S}_{\Gamma^{(k)},\text{up}}$. We let $\tilde{\Gamma}^{(k)}_{ij}=u_{ij}\mathds{1}\{ (i,j)\in \mathcal{S}_{\Gamma^{(k)},\text{up}} \}$, $1 \leq i,j \leq d$, where $u_{ij}$'s are from $\text{Unif}[-3,3]$. Then $\unique=\tilde{\Gamma}^{(k)} + (\tilde{\Gamma}^{(k)})^{\top}$. Finally, we let $\Omega^{(k)}=\tilde{\Omega}+\unique+\sigma I_d$, where $\sigma$ ensures $\gamma_{\min}(\Omega^{(k)}) \geq 0.1$ for $0 \leq k \leq K$.

\item \textbf{Model II:} Model II is similar to Model I but with a wider bandwidth of $5$, introducing a more connected structure in the shared component.

\item \textbf{Model III:} We generate the shared component from an Erdos–Renyi graph. Specifically, let $\tilde{\Omega}_{ii}=5$, $1 \leq i \leq d$, and $U_{ij} \sim \text{Bernoulli}(0.02)$, $1 \leq i < j \leq d$. If $U_{ij}=1$, let $\tilde{\Omega}_{ij}=\tilde{\Omega}_{ji} \sim \text{Unif}[-3,3]$; otherwise, set $\tilde{\Omega}_{ij}=\tilde{\Omega}_{ji}=0$. Let $\mathcal{S}_{\tilde{\Omega}}=\{ (i,j) \in [d] \times [d] : \tilde{\Omega}_{ij} \neq 0 \}$ be the support of $\tilde{\Omega}$. For given $h$ and $0 \leq k \leq K$, uniformly choose $h$ entries ($h+1$ if $h$ is odd) from $[d] \times [d] \setminus \tilde{\Omega}$, denoted as $\mathcal{S}_{\Gamma^{(k)}}$, such that $(i,j) \in \mathcal{S}_{\Gamma^{(k)}}$ if and only if $(j,i) \in \mathcal{S}_{\Gamma^{(k)}}$. Let $\unique_{ij}=u_{ij}\mathds{1}\{ (i,j)\in \mathcal{S}_{\Gamma^{(k)}} \}$, $1 \leq i,j \leq d$, where $u_{ij} \sim \text{Unif}[-3,3]$. Finally, let $\Omega^{(k)}=\tilde{\Omega}+\unique+\sigma I_d$, where $\sigma$ ensures $\gamma_{\min}(\Omega^{(k)}) \geq 0.1$ for $0 \leq k \leq K$.

\end{itemize}

\subsection{Experimental Design}

\begin{figure}[t]
\centering
\includegraphics[width=.8\textwidth]{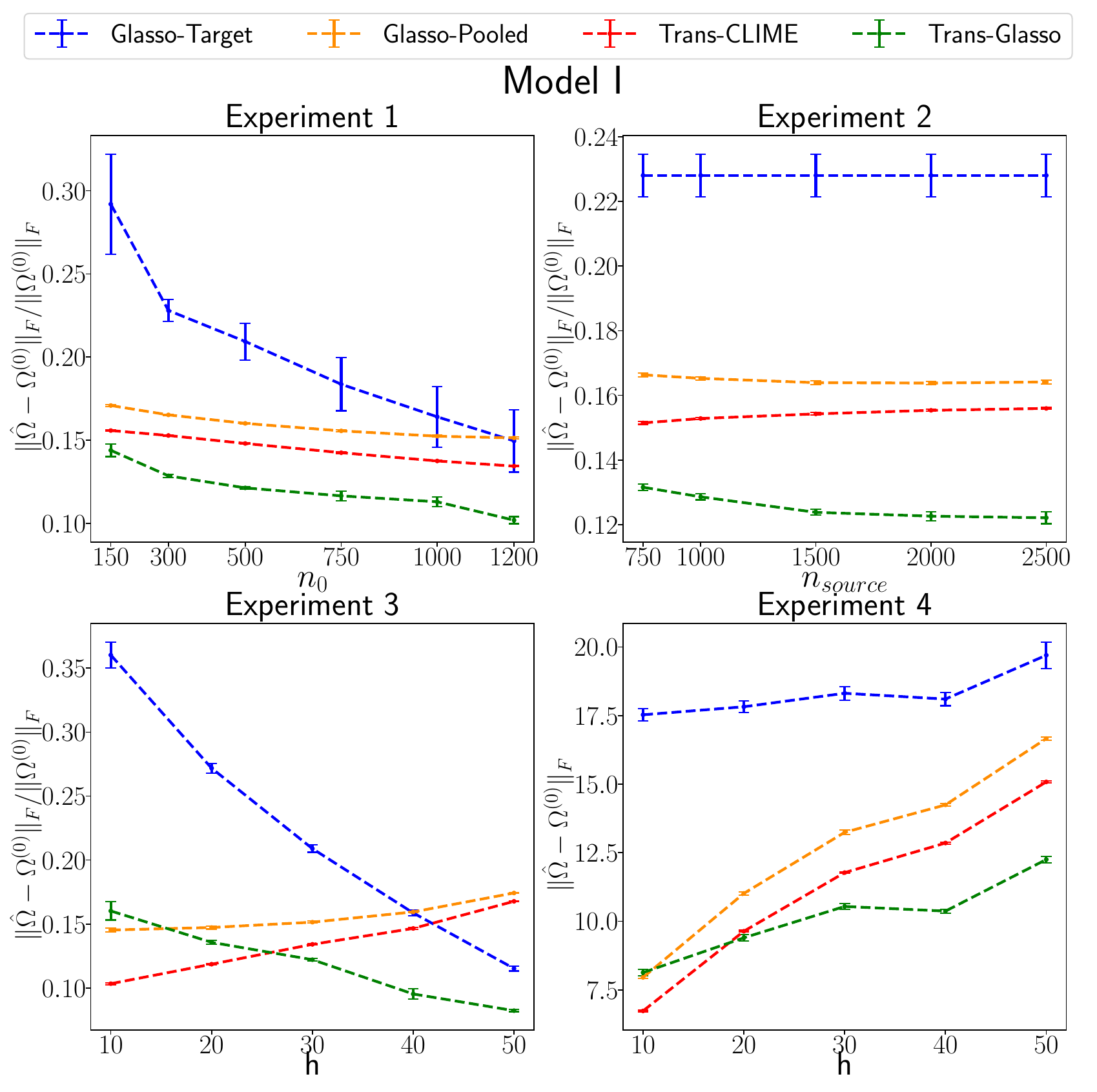}
\caption{Simulation results for Model I. 
}
\label{fig:simu-exp-model1}
\end{figure}

For each model, we conduct four main experiments to evaluate Trans-Glasso under various conditions, repeating each 30 times to ensure reliable averages and standard errors.

\begin{enumerate}
    \item \textbf{Experiment 1}: Vary the target sample size \( n_0 \) while keeping the source sample size \( n_{\text{source}} \) and sparsity \( h \) fixed.
    \item \textbf{Experiment 2}: Vary the source sample size \( n_{\text{source}} \) while keeping \( n_0 \) and \( h \) fixed.
    \item \textbf{Experiment 3}: Increase both \( n_0 \), \( n_{\text{source}} \), and \( h \) proportionally to examine scalability.
    \item \textbf{Experiment 4}: Fix \( n_0 \) and \( n_{\text{source}} \) while increasing \( h \), assessing performance as sparsity in deviations increases.
\end{enumerate}

\subsection{Comparison Methods}
We compare \myalg with the following baseline methods:

\begin{itemize}
    \item \textbf{Glasso-Target}: Applies graphical lasso~\citep{friedman2008sparse} to target data only.
    \item \textbf{Glasso-Pooled}: Pools target and source data, then applies graphical lasso.
    \item \textbf{Trans-CLIME}: A transfer learning method by~\cite{li2023transfer} that assumes a sparse divergence matrix across sources.
\end{itemize}

\subsection{Results}

\begin{figure}[t]
\centering
\includegraphics[width=.8\textwidth]{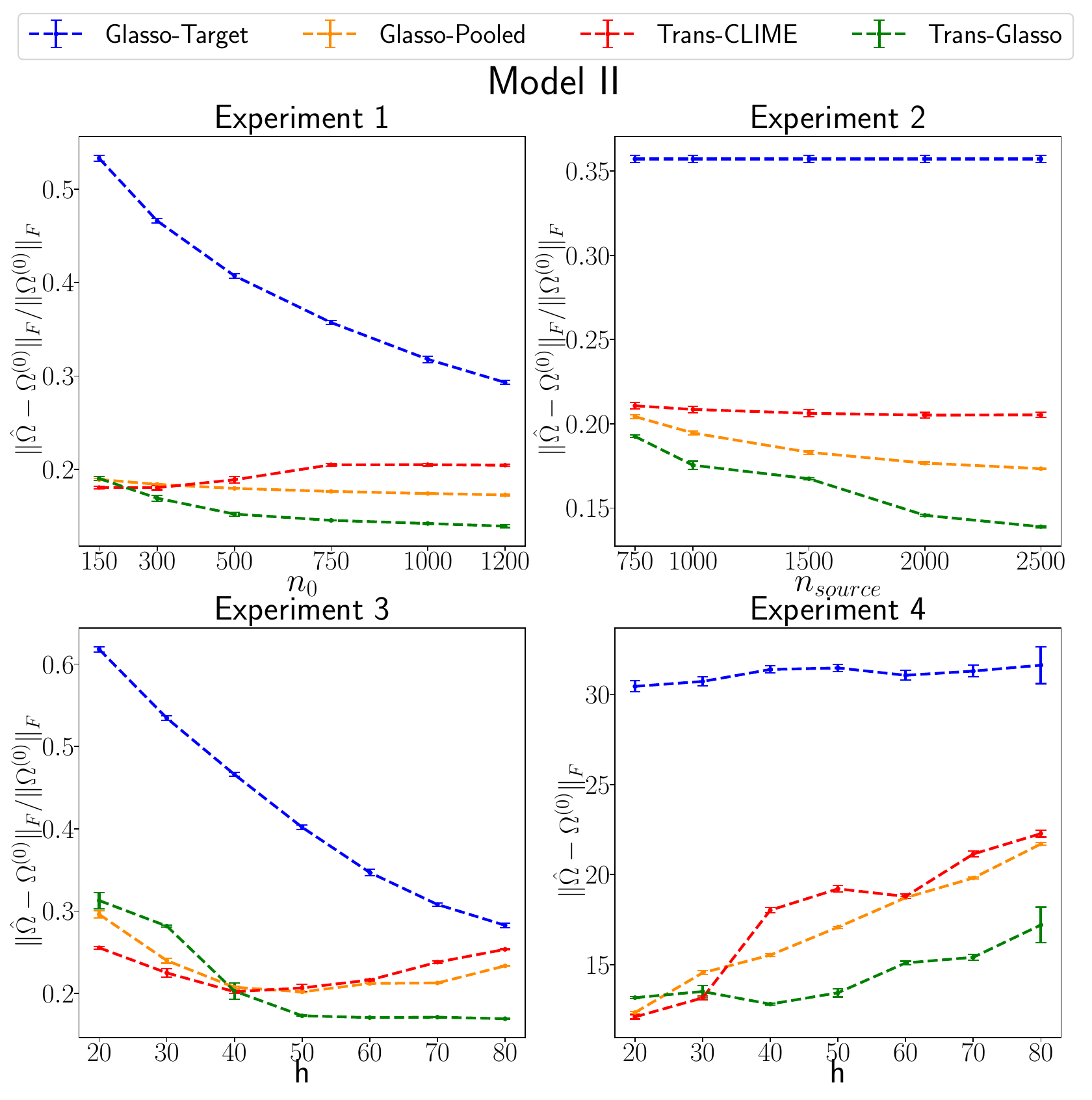}
\caption{Simulation results for Model II. 
}
\label{fig:simu-exp-model2}
\end{figure}

\begin{figure}[t]
\centering
\includegraphics[width=.8\textwidth]{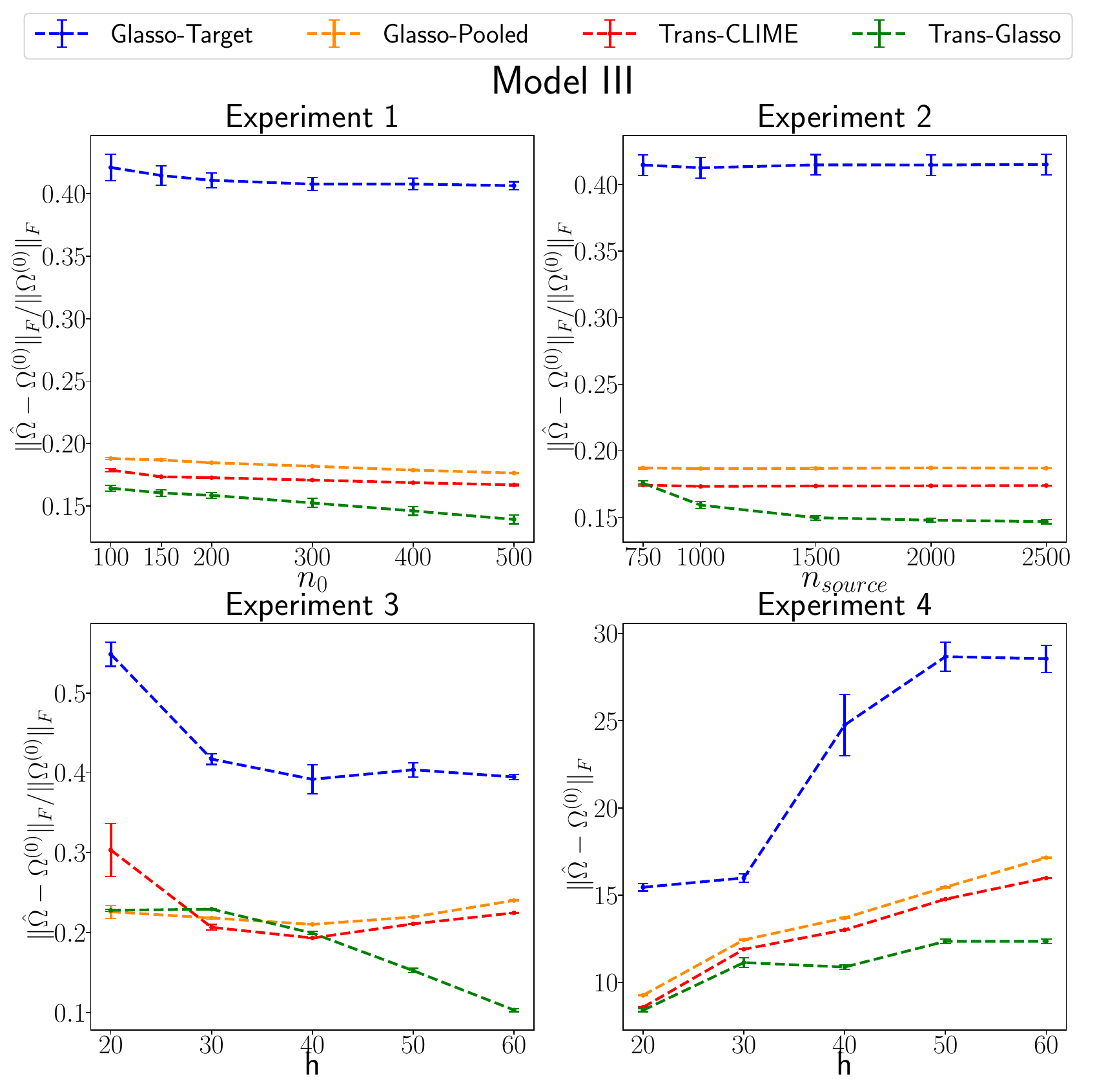}
\caption{Simulation results for Model III. 
}
\label{fig:simu-exp-model3}
\end{figure}

The results for Models I -- III are shown in Figure~\ref{fig:simu-exp-model1}--\ref{fig:simu-exp-model3}. \myalg generally outperforms baseline methods. In Experiment 3, \myalg shows consistency across all models, whereas Glasso-Pooled and Trans-CLIME do not. Trans-CLIME performs better with small $h$, but its performance deteriorates as $h$ increases. This is because a small increase in $h$ can significantly increase the sparsity of the divergence matrix when the covariance matrix is not sparse, as discussed in Appendix~\ref{sec:comparison-li2023}. Therefore, when the precision matrix is sparse but the covariance matrix is not, \myalg is more reliable and robust.

\subsection{Experiments with Unknown Informative Set}

\begin{figure}[t]
\centering
\includegraphics[width=.8\textwidth]{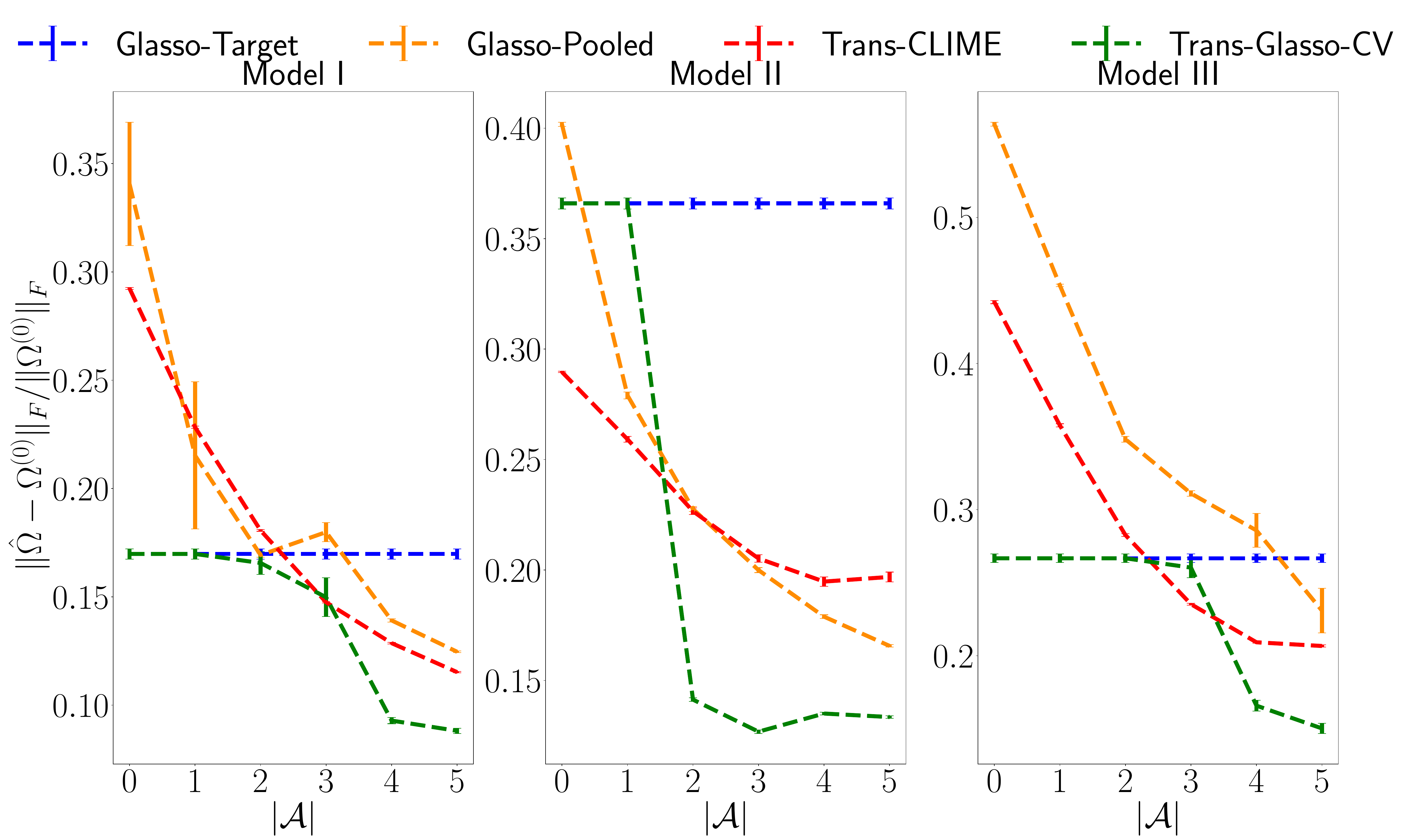}
\caption{Simulation results when the informative set $\mathcal{A}$ is unknown. 
}
\label{fig:simu-exp-unknownA}
\end{figure}

We perform simulation experiments with unknown $\mathcal{A}$, using the same three models. We divide $[K]$ into $[K]=\mathcal{A} \cup \mathcal{A}^c$. For $k \in \mathcal{A}$, we set the sparsity level $h$ to be small, and for $k \in \mathcal{A}^c$, $h$ to be large. Specifically, for Model I, $h=20$ for $k \in \mathcal{A}$ and $h=600$ for $k \in \mathcal{A}^c$; for Model II, $h=30$ for $k \in \mathcal{A}$ and $h=600$ for $k \in \mathcal{A}^c$; for Model III, $h=10$ for $k \in \mathcal{A}$ and $h=300$ for $k \in \mathcal{A}^c$. We implement the \myalg-CV algorithm (Section~\ref{sec:UnknownInformSet}) and compare it with other methods. We vary $\vert \mathcal{A} \vert$ from $0$ to $K$ to observe performance changes. Each experiment is repeated $30$ times with different random seeds.

Figure~\ref{fig:simu-exp-unknownA} shows that \myalg-CV generally outperforms baseline methods. Notably, it never performs worse than Glasso-Target, indicating no ``negative transfer'' of knowledge. In contrast, both Glasso-Pooled and Trans-CLIME can underperform compared to Glasso-Target. Additionally, as $\vert \mathcal{A} \vert$ increases, \myalg-CV achieves the best performance.

\section{Real-World Data Analysis}
\label{sec:real-world-data}

We apply the \myalg algorithm to two real-world datasets. In Section~\ref{sec:real-world-data-gene}, we use it on gene networks with different brain tissues. In Section~\ref{sec:real-world-protein}, we use it on protein networks for various cancer subtypes.

\subsection{Gene Networks Data for Brain Tissues}
\label{sec:real-world-data-gene}

\begin{figure}[t]
\centering
\includegraphics[width=0.8\textwidth]{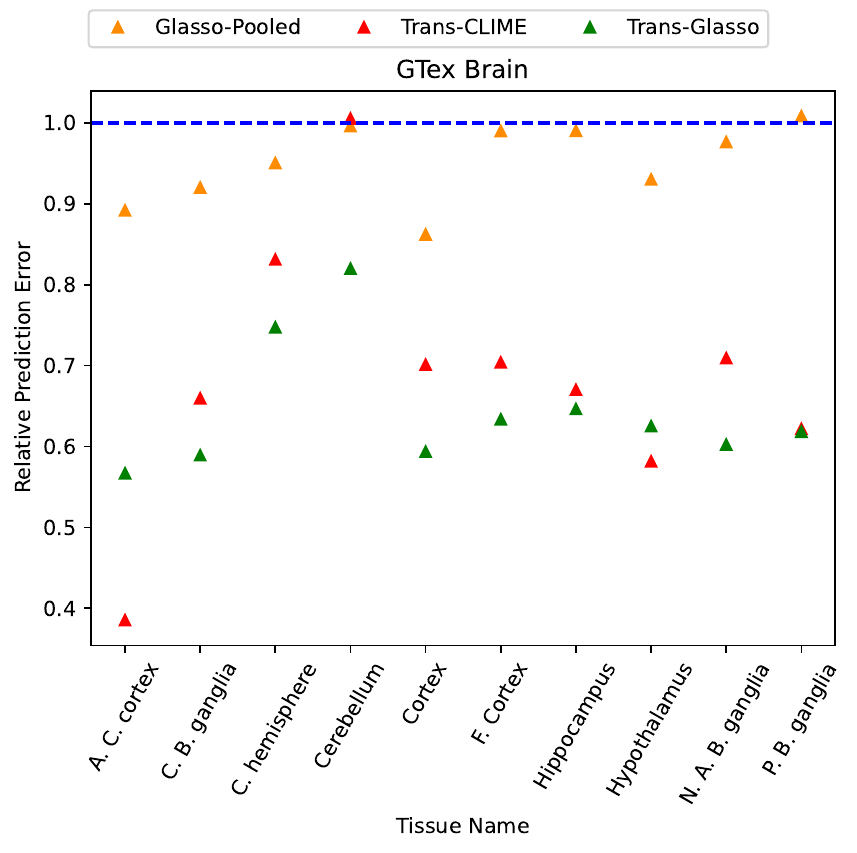}
\caption{Cross-validation prediction error of different methods on GTEx brain tissue datasets, relative to Glasso-Target.}
\label{fig:brain_gtex}
\end{figure}

We apply \myalg to detect gene networks across tissues using the Genotype-Tissue Expression (GTEx) data \footnote{\url{https://gtexportal.org/home/}}. Following~\cite{li2023transfer}, we focus on genes involved in central nervous system neuron differentiation (GO:0021953) across 13 brain tissues, treating one as the target and the remaining 12 as sources. To avoid small sample sizes, we exclude three tissues from the target set, using only 10. A complete list of tissues is in Table~3 of~\cite{li2023transfer}. We remove genes with missing values and compare \myalg to baselines via cross-validation prediction error~\eqref{eq:cv-error}.

Figure~\ref{fig:brain_gtex} presents the results, benchmarking \myalg against Glasso-Target, Glasso-Pooled, and Trans-CLIME, as in Section~\ref{sec:simulation}. Prediction errors are reported relative to Glasso-Target for cross-tissue comparison. As shown in Figure~\ref{fig:brain_gtex}, \myalg consistently outperforms baselines, with a relative prediction error well below 1, demonstrating superior performance and robustness to negative transfer. In contrast, Glasso-Pooled and Trans-CLIME sometimes perform comparably or worse than Glasso-Target.

\begin{figure}[t]
\centering
\includegraphics[width=0.8\textwidth]{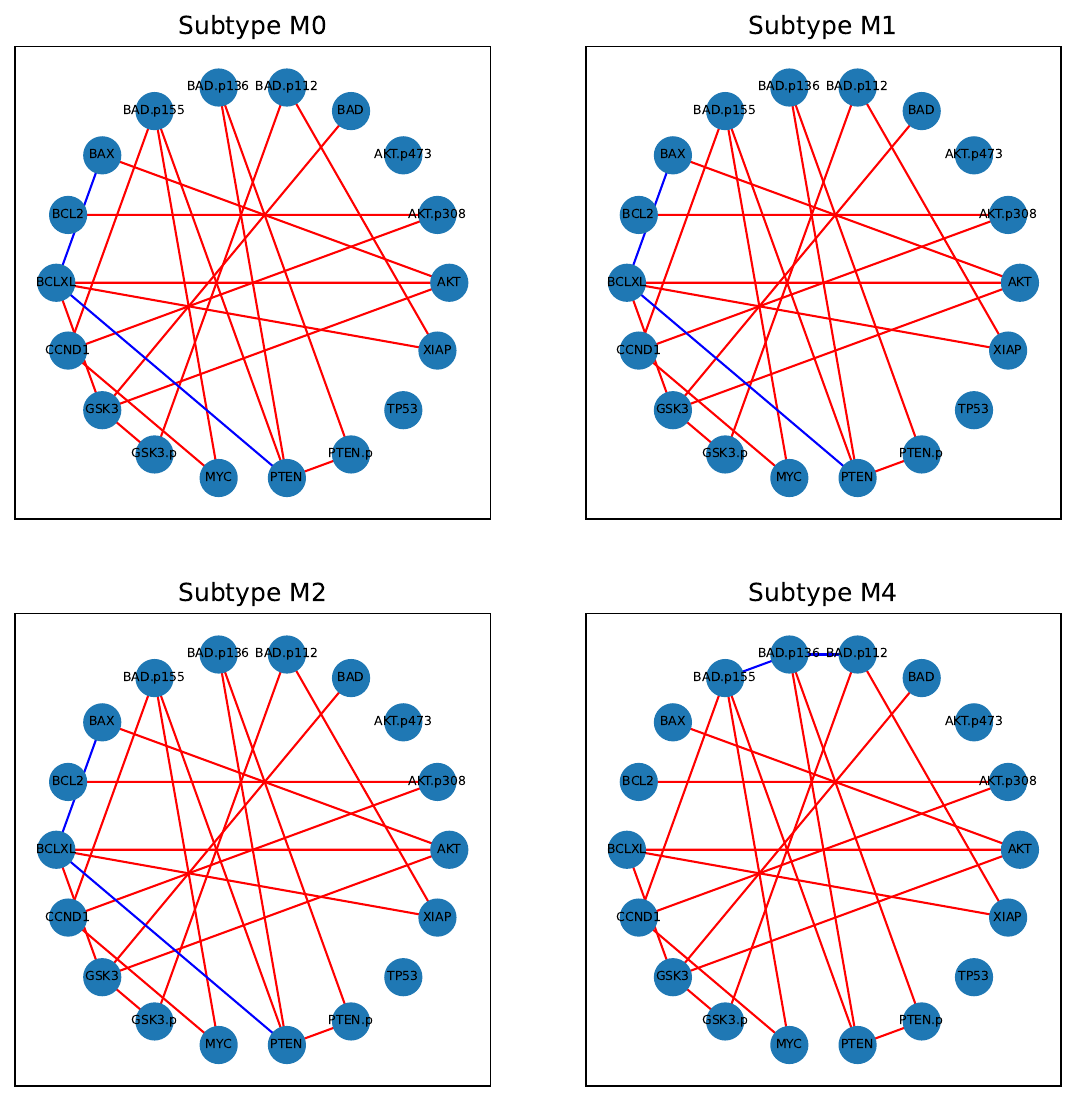}
\caption{Protein networks for four AML subtypes. Red edges are shared by all, blue edges are not.}
\label{fig:AML}
\end{figure}

\subsection{Protein Networks Data for AML}
\label{sec:real-world-protein}

We apply our method to a protein network dataset for Acute Myeloid Leukemia (AML) subtypes. Understanding protein relationships is crucial in cancer studies, and graphical models aid in constructing these networks. Following~\cite{peterson2015bayesian}, we analyze protein levels from 178 newly diagnosed AML patients,\footnote{Dataset provided as a supplement to~\cite{kornblau2009functional}.} classified by the FAB system. While protein interactions may vary across subtypes, shared AML-related processes suggest common connections, making transfer learning beneficial. We focus on 18 proteins involved in apoptosis and cell cycle regulation across four subtypes: M0 (17 subjects), M1 (34), M2 (68), and M4 (59)~\citep{peterson2015bayesian, kanehisa2012kegg}.

For each subtype, we apply \myalg and extract the final estimated graph by selecting 20 edges with the largest absolute values in the estimated precision matrix, ensuring comparability with~\cite{peterson2015bayesian}. The results in Figure~\ref{fig:AML} reveal substantial overlap with~\cite{peterson2015bayesian}, though our graphs exhibit greater structural similarity across subtypes. M0, M1, and M2 share identical structures, while M4 differs by two edges and shows stronger BAD family protein connections. This aligns with~\cite{tzifi2012role}, which reported higher BAD protein expression in AML subtypes M4, M5, and M6~\citep[Table 1]{tzifi2012role}, suggesting more active interactions.

\section{Conclusion}
\label{sec:conclusion}

We introduce \myalg, a novel transfer learning approach for precision matrix estimation that mitigates small target sample limitations by leveraging related source data. \myalg follows a two-step process: initial estimation via multi-task learning, followed by refinement through differential network estimation. This method achieves minimax optimality across various parameter regimes. Extensive simulations show that \myalg consistently outperforms baselines, demonstrating robustness and adaptability, especially in high-dimensional settings with limited target samples.

Future directions include extending \myalg to other graphical models, such as Gaussian copula~\citep{liu2009nonparanormal}, transelliptical~\citep{liu2012transelliptical}, functional~\citep{qiao2019functional, tsai2023latent, zhao2024high}, and Ising models~\citep{kuang2017screening}, as well as developing inferential methods within the transfer learning framework.

\section*{Funding Acknowledgments}

C.M.\ was partially supported by the National Science Foundation under grant DMS-2311127 and the CAREER Award DMS-2443867. 

\clearpage

\newpage

\putbib[boxinz-papers]
\end{bibunit}

%\bibliography{boxinz-papers}

\newpage

\appendix

\pagebreak
\begin{bibunit}[plainnat]

% \def\spacingset#1{\renewcommand{\baselinestretch}%
% {#1}\small\normalsize} \spacingset{1}

% \spacingset{1.9}

% \title{Appendix for ``\myalg: A Transfer Learning Approach to Precision Matrix Estimation''}
% \author{}
% \date{}

% \maketitle

{\newText

\section{Comparison of Similarity Assumptions}
\label{sec:comparison-li2023}

In this section, we provide a more detailed comparison between our Assumption~\ref{assump:model-structure} and the similarity assumption used in~\cite{li2023transfer}. Let 
\[
\Upsilon^{(k)} = (\Omega^{(0)} - \Omega^{(k)}) \Sigma^{(k)} = -\diff \Sigma^{(k)},
\]
or equivalently, $\diff = -\Upsilon^{(k)} \Omega^{(k)}$. \citet{li2023transfer} assume that $\Upsilon^{(k)}$ is column-wise sparse in the $L_p$-norm. While our Assumption~\ref{assump:model-structure} and the assumption in~\cite{li2023transfer} do not imply each other and are generally not directly comparable, our assumption may be more suitable in certain applications.

First, whereas $\Upsilon^{(k)}$ is motivated by controlling the KL divergence between Gaussian distributions, our Assumption~\ref{assump:model-structure} is purely structural and applicable to general distributions. Second, our assumption offers a natural interpretation in Gaussian graphical models, unlike the divergence-based similarity assumption in~\cite{li2023transfer}. Finally, a technical advantage of our framework lies in its flexibility when establishing sparsity relationships. While sparsity in $\diff$ does not necessarily imply sparsity in $\Upsilon^{(k)}$, and vice versa, additional structural assumptions on $\Sigma^{(k)}$ or $\Omega^{(k)}$ can relate the two. Specifically, we have
\begin{equation*}
\left\vert \diff \right\vert_0 \leq \sum_{i,j} \mathds{1}\left\{ \sum^d_{l=1} \mathds{1}\left\{ \Upsilon^{(k)}_{il} \neq 0 \right\} \mathds{1}\left\{ \Omega^{(k)}_{lj} \neq 0 \right\} \geq 1  \right\}.
\end{equation*}
If both $\Upsilon^{(k)}$ and $\Omega^{(k)}$ are sparse, then $\diff$ is also sparse. Conversely, if both $\diff$ and $\Sigma^{(k)}$ are sparse, then $\Upsilon^{(k)}$ must be sparse. A key distinction is that sparsity in $\Upsilon^{(k)}$ implies sparsity in $\diff$ only when $\Omega^{(k)}$ is sparse, whereas sparsity in $\diff$ implies sparsity in $\Upsilon^{(k)}$ only when $\Sigma^{(k)}$ is sparse. In graphical models, sparsity assumptions on the precision matrix $\Omega^{(k)}$ are far more common than on the covariance matrix $\Sigma^{(k)}$, making sparsity in $\diff$ generally a weaker and more practical assumption than sparsity in $\Upsilon^{(k)}$.

Although a comprehensive theoretical comparison between our Assumption~\ref{assump:model-structure} and the similarity assumption in~\cite{li2023transfer} is challenging, we provide some empirical insights in the following discussion. We begin by introducing the divergence distance used in~\cite{li2023transfer} to quantify the difference between $\Omega^{(0)}$ and $\{ \Omega^{(k)} \}^K_{k=1}$, defined as
\begin{equation*}
h_{\mathcal{D}} \coloneqq \max_{1 \leq k \leq K} \left\{ \max_{1 \leq j \leq d} \left\Vert \Upsilon^{(k)}_{j,\cdot} \right\Vert_p + \max_{1 \leq j \leq d} \left\Vert \Upsilon^{(k)}_{\cdot,j} \right\Vert_p \right\},
\end{equation*}
where $\Upsilon^{(k)}_{j,\cdot}$ denotes the $j$-th row of $\Upsilon^{(k)}$ and $\Upsilon^{(k)}_{\cdot,j}$ denotes its $j$-th column. Here, $\Vert \cdot \Vert_p$ is the $L_p$ vector norm, for which \citet{li2023transfer} establish theoretical guarantees when $p \in [0,1]$.

\begin{figure}[p]
    \centering
    \begin{subfigure}{0.75\textwidth}
        \centering
        \includegraphics[width=\textwidth]{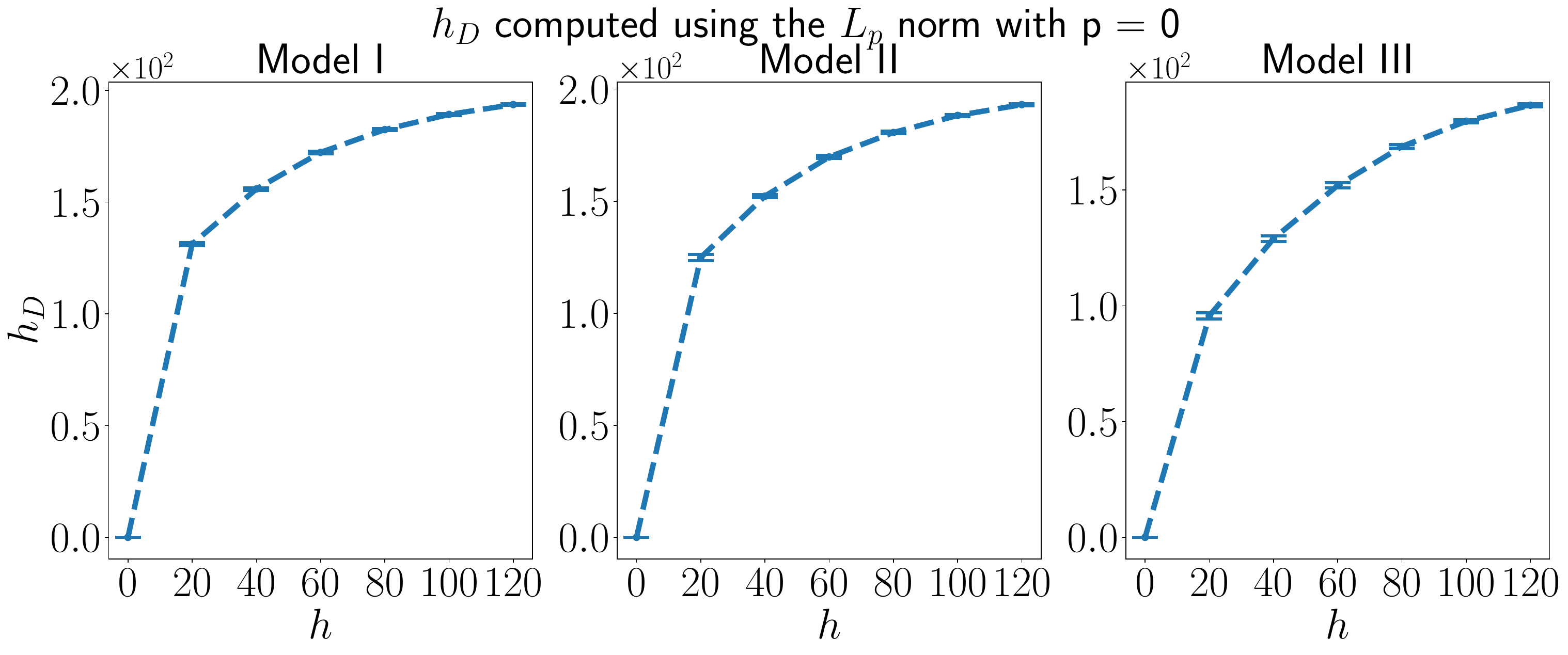}
        % \vspace{-0.8cm}
        % \caption{$h_{\mathcal{D}}$ computed using the $L_p$ vector norm with $p=0$.}
        \label{fig:assump_compare_subfig1}
    \end{subfigure}

    \vspace{-0.5cm}
    
    \begin{subfigure}{0.75\textwidth}
        \centering
        \includegraphics[width=\textwidth]{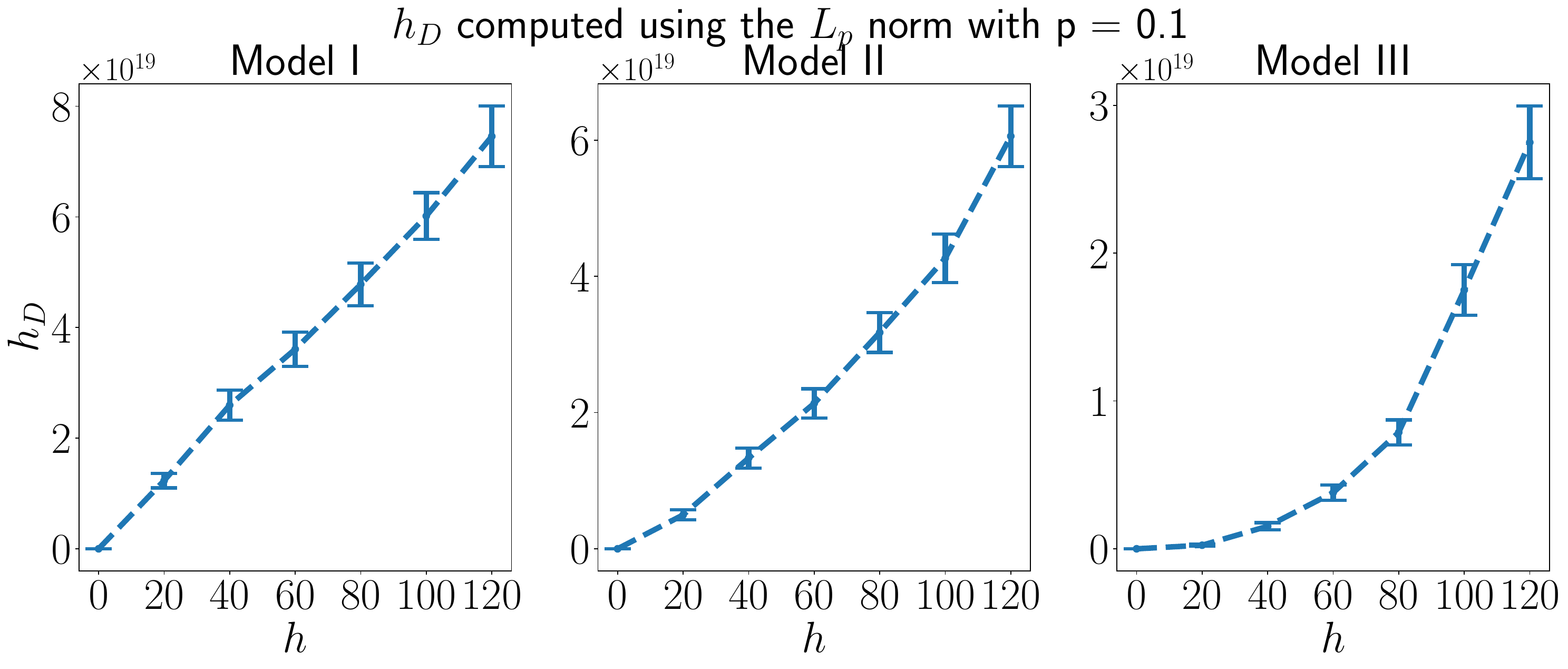}
        % \vspace{-0.8cm}
        % \caption{$h_{\mathcal{D}}$ computed using the $L_p$ vector norm with $p=0.1$.}
        \label{fig:assump_compare_subfig2}
    \end{subfigure}

    \vspace{-0.5cm}
    
    \begin{subfigure}{0.75\textwidth}
        \centering
        \includegraphics[width=\textwidth]{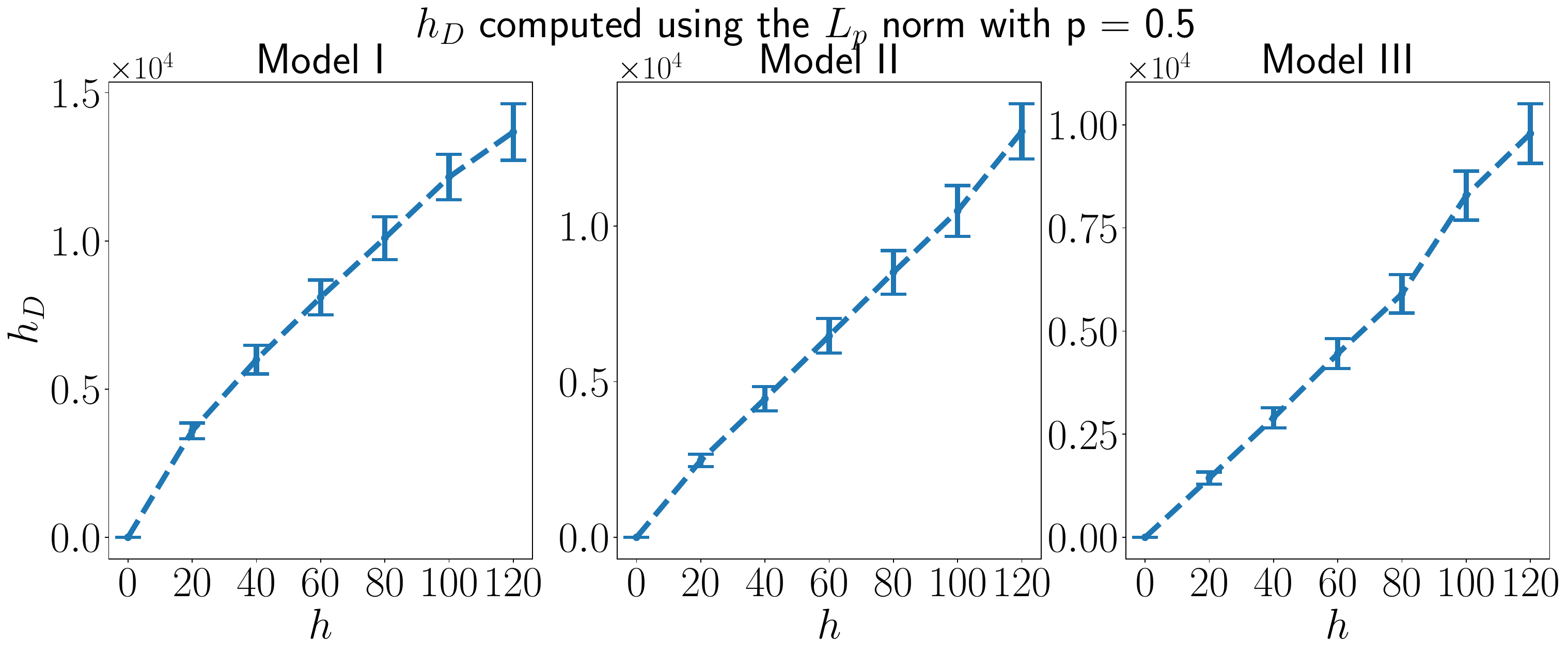}
        % \vspace{-0.8cm}
        % \caption{$h_{\mathcal{D}}$ computed using the $L_p$ vector norm with $p=0.5$.}
        \label{fig:assump_compare_subfig3}
    \end{subfigure}

    \vspace{-0.5cm}
    
    \begin{subfigure}{0.75\textwidth}
        \centering
        \includegraphics[width=\textwidth]{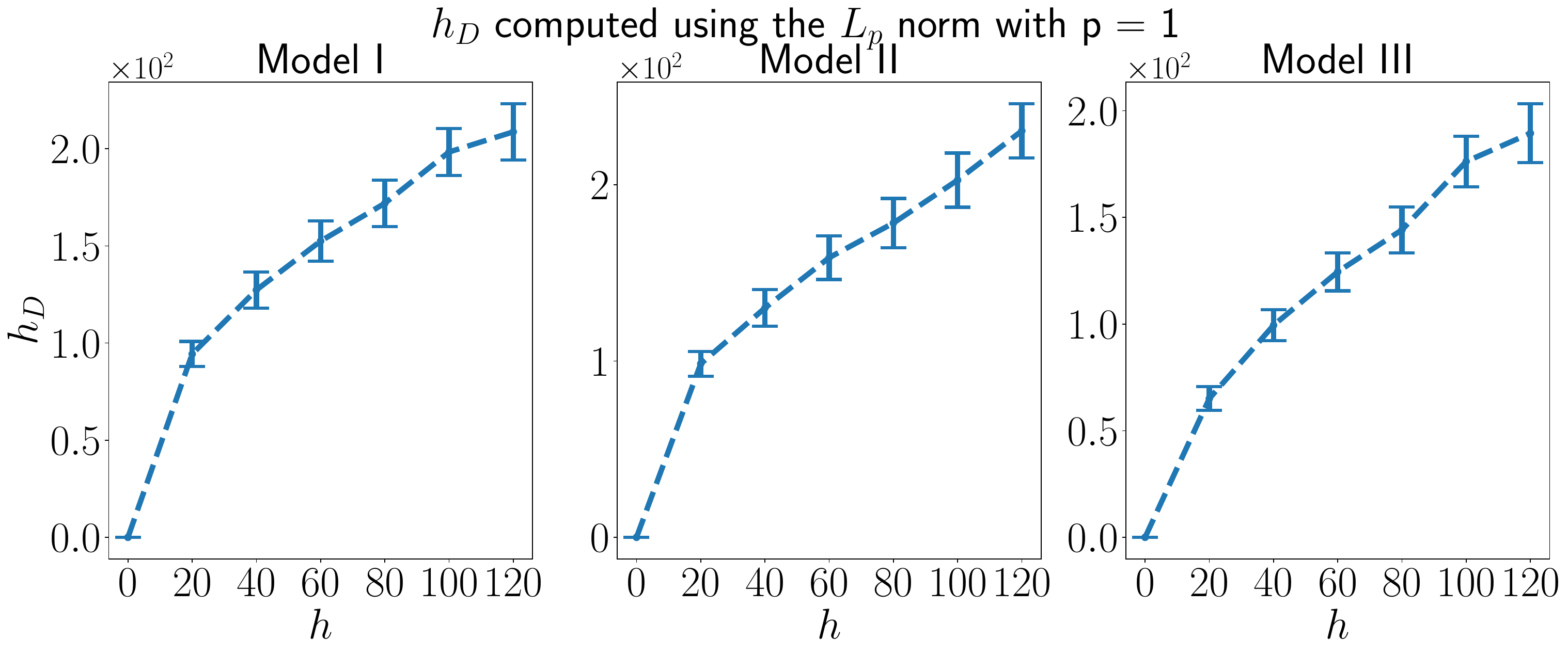}
        % \vspace{-0.8cm}
        % \caption{$h_{\mathcal{D}}$ computed using the $L_p$ vector norm with $p=1$.}
        \label{fig:assump_compare_subfig4}
    \end{subfigure}

    \vspace{-0.5cm}

    \caption{Empirical analysis of how $h_{\mathcal{D}}$ increases with $h$ under the three models described in Section~\ref{sec:data-generate-models}, for different choices of $p \in [0,1]$.}
    \label{fig:assump_compare}
\end{figure}

 We empirically examine how $h_{\mathcal{D}}$ grows as $h$ increases under the three models discussed in Section~\ref{sec:data-generate-models}, considering different choices of $p \in [0,1]$. Specifically, we set $d = 100$ and $K = 5$. For each value of $h$, we randomly generate 100 sets of target and source precision matrices $\{\Omega^{(k)}_t\}_{k=0}^{K}$ for $t = 1, \dots, 100$. For each $t$, we compute $\{ \Upsilon^{(k)}_t \}^K_{k=1}$ and the corresponding divergence distance $h_{\mathcal{D},t}$, then estimate the empirical mean and standard error of $\{h_{\mathcal{D},1}, \dots, h_{\mathcal{D},100}\}$, denoted as $\widebar{h}_{\mathcal{D}}$ and $\text{SE}_{h_{\mathcal{D}}}$, respectively. We visualize the error bars as $\widebar{h}_{\mathcal{D}} \pm \frac{2}{\sqrt{100}} \, \text{SE}_{h_{\mathcal{D}}}$. Additionally, we report the results for various values of $p$ to examine the effect of different $L_p$ norms.

The results are presented in Figure~\ref{fig:assump_compare}. We observe that the pattern of how $h_{\mathcal{D}}$ increases with $h$ depends on the choice of $p$, which defines the $L_p$ norm. Notably, when $p = 0$, $h_{\mathcal{D}}$ directly quantifies the sparsity level of the divergence matrices. In this case, even for small values of $h$, the sparsity of the divergence matrices can be relatively high, aligning with our earlier discussion.

These observations suggest that there are scenarios where $h$ is small while $h_{\mathcal{D}}$ remains large. However, this does not imply that \myalg consistently outperforms Trans-CLIME. In fact, both algorithms achieve minimax optimality within certain parameter spaces. Our primary point is that \myalg has the potential to perform better than Trans-CLIME under specific conditions. Intuitively, when $h$ is small but $h_{\mathcal{D}}$ is large, \myalg is more likely to perform well; conversely, when $h_{\mathcal{D}}$ is small but $h$ is large, Trans-CLIME may be favored. In practice, the choice between these similarity assumptions should be guided by domain-specific knowledge. A more comprehensive theoretical and empirical comparison between \myalg and Trans-CLIME is an important direction for future research.

}

%\clearpage

\begin{figure}[t]
\centering
\includegraphics[width=0.9\textwidth]{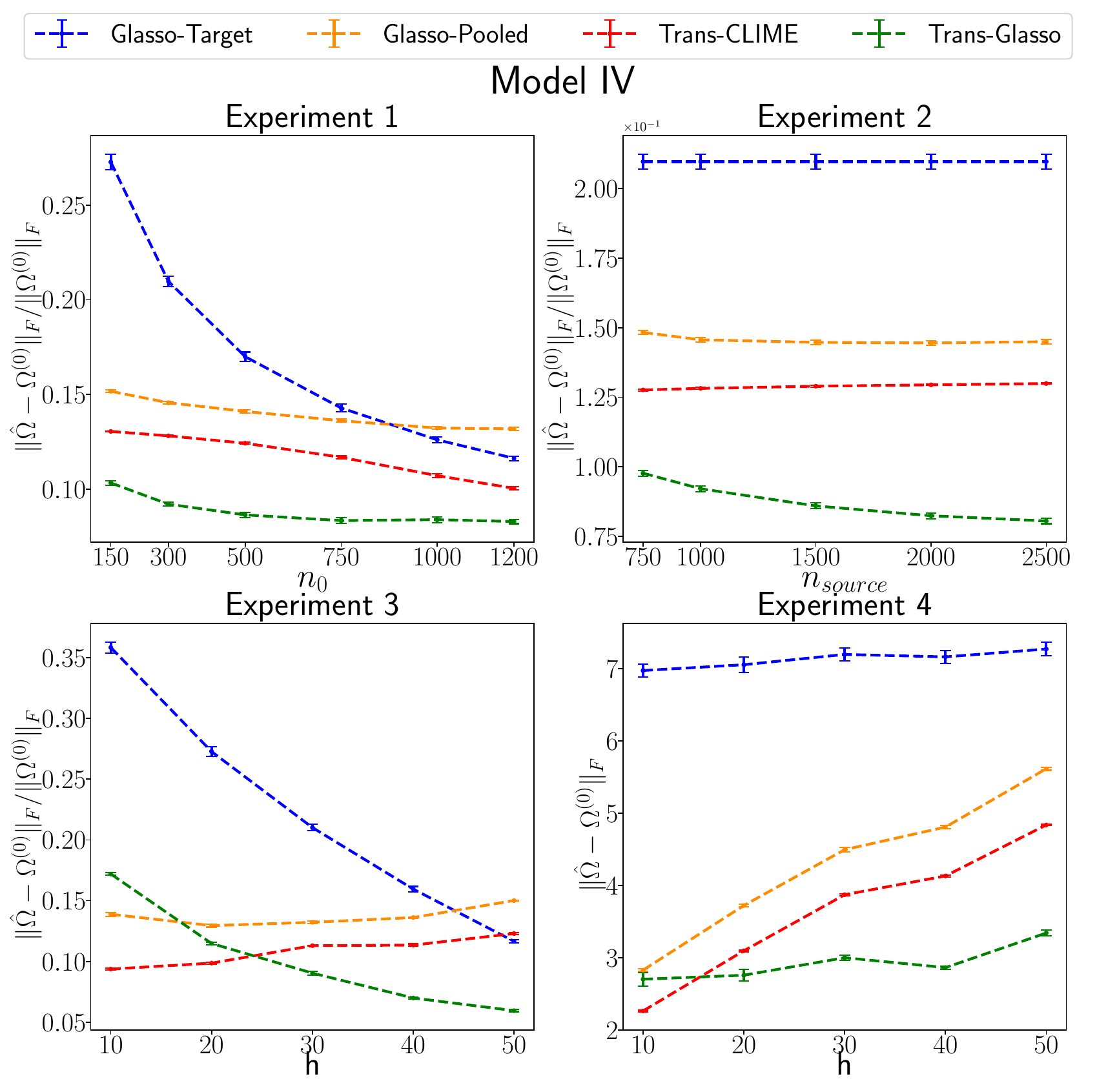}
\caption{Simulation results for Model IV.}
\label{fig:simu-exp-model4}
\end{figure}

\section{More Simulation Details}

In this section, we provide more simulation details.

For the simulation for Model I, the default setting is $n_0=300$, $n_{\text{source}}=1000$ and $h=40$. In the first experiment, we increase $n_0$ while fixing $n_{\text{source}}$ and $h$. In the second experiment, we increase $n_{\text{source}}$ while fixing $n_0$ and $h$. In the third experiment, we increase both $n_0$ and $n_{\text{source}}$ while increasing $h$. More specifically, we let $n_{\text{source}}=3 n_0$, and $n_0=70$ when $h=10$, $n_0=150$ when $h=20$, $n_0=300$ when $h=30$, $n_0=600$ when $h=40$ and $n_0=1200$ when $h=50$. In the fourth experiment, we fix both $n_0$ and $n_{\text{source}}$ while increasing $h$. 
% Each dot represents the empirical mean across $30$ repetitions and the vertical bar represents $\text{Mean} \pm \frac{2}{\sqrt{30}} \times \text{Standard Error}$.

For the simulation for Model II, the default setting is $n_0=750$, $n_{\text{source}}=2000$ and $h=40$. In the first experiment, we increase $n_0$ while fixing $n_{\text{source}}$ and $h$. In the second experiment, we increase $n_{\text{source}}$ while fixing $n_0$ and $h$. In the third experiment, we increase both $n_0$ and $n_{\text{source}}$ while increasing $h$. More specifically, we let $n_{\text{source}}=3 n_0$, and $n_0=100$ when $h=20$, $n_0=200$ when $h=30$, $n_0=300$ when $h=40$, $n_0=500$ when $h=50$, $n_0=800$ when $h=60$, $n_0=1000$ when $h=70$ and $n_0=1200$ when $h=80$. In the fourth experiment, we fix both $n_0$ and $n_{\text{source}}$ while increasing $h$.
% Each dot represents the empirical mean across $30$ repetitions and the vertical bar represents $\text{Mean} \pm \frac{2}{\sqrt{30}} \times \text{Standard Error}$.

For the simulation for Model III, the default setting is $n_0=150$, $n_{\text{source}}=1000$ and $h=40$. In the first experiment, we increase $n_0$ while fixing $n_{\text{source}}$ and $h$. In the second experiment, we increase $n_{\text{source}}$ while fixing $n_0$ and $h$. In the third experiment, we increase both $n_0$ and $n_{\text{source}}$ while increasing $h$. 
More specifically, we let $n_{\text{source}}=4 n_0$, and $n_0=15$ when $h=20$, $n_0=30$ when $h=30$, $n_0=80$ when $h=40$, $n_0=300$ when $h=50$, and $n_0=1000$ when $h=60$. In the fourth experiment, we fix both $n_0$ and $n_{\text{source}}$ while increasing $h$.

For the simulation results when the informative set $\mathcal{A}$ is unknown. We set $n_0=500$ and $n_{\text{source}}=1500$ for Model I; $n_0=750$ and $n_{\text{source}}=2500$ for Model II; and $n_0=100$ and $n_{\text{source}}=500$ for Model III.

In Figure~\ref{fig:simu-exp-model1}--\ref{fig:simu-exp-unknownA}, each dot represents the empirical mean across $30$ repetitions, and the vertical bar represents $\text{Mean} \pm \frac{2}{\sqrt{30}} \times \text{Standard Error}$.

\section{Simulation with Smaller Scales}

To investigate the impact of the precision matrix scale on simulation results, we conduct additional experiments in this section. The data are generated using the same setup as Model I in Section~\ref{sec:data-generate-models}, with a single modification: we set $\tilde{\Omega}_{ij} = 2 \times 0.6^{\vert i - j \vert} \mathds{1}( \vert i - j \vert \leq 1)$ for $1 \leq i,j \leq d$, and draw $u_{ij}$ values from $\text{Unif}[-1,1]$. This adjustment reduces the overall scale of the precision matrix, making it more comparable to the setting examined in the simulation experiments of~\cite{li2023transfer}. We refer to this data generation setup as Model IV. The results are presented in Figure~\ref{fig:simu-exp-model4}. As shown, the outcomes closely resemble those of Model I in Figure~\ref{fig:simu-exp-model1}, suggesting that the performance of the various algorithms remains generally robust to changes in the scale of the precision matrix.

{\newText

\section{Ablation Study of the Refinement Step}

We empirically assess the contribution of the refinement step by evaluating the performance of the initial estimator $\widecheck{\Omega}^{(0)}$, which is the direct output of \myalgMT. By comparing the performance of $\widecheck{\Omega}^{(0)}$ with that of the final estimator $\widehat{\Omega}^{(0)}$, we analyze the impact of the refinement step via differential network estimation. We adopt the same simulation settings as in Section~\ref{sec:simulation} and evaluate empirical performance across all three data generation models.

The results are presented in Figures~\ref{fig:simu-exp-model1-transmt}--\ref{fig:simu-exp-model3-transmt}. As shown, \myalg consistently outperforms \myalgMT, underscoring the crucial role of the refinement step in enhancing estimation accuracy.

\begin{figure}[t]
\centering
\includegraphics[width=.9\textwidth]{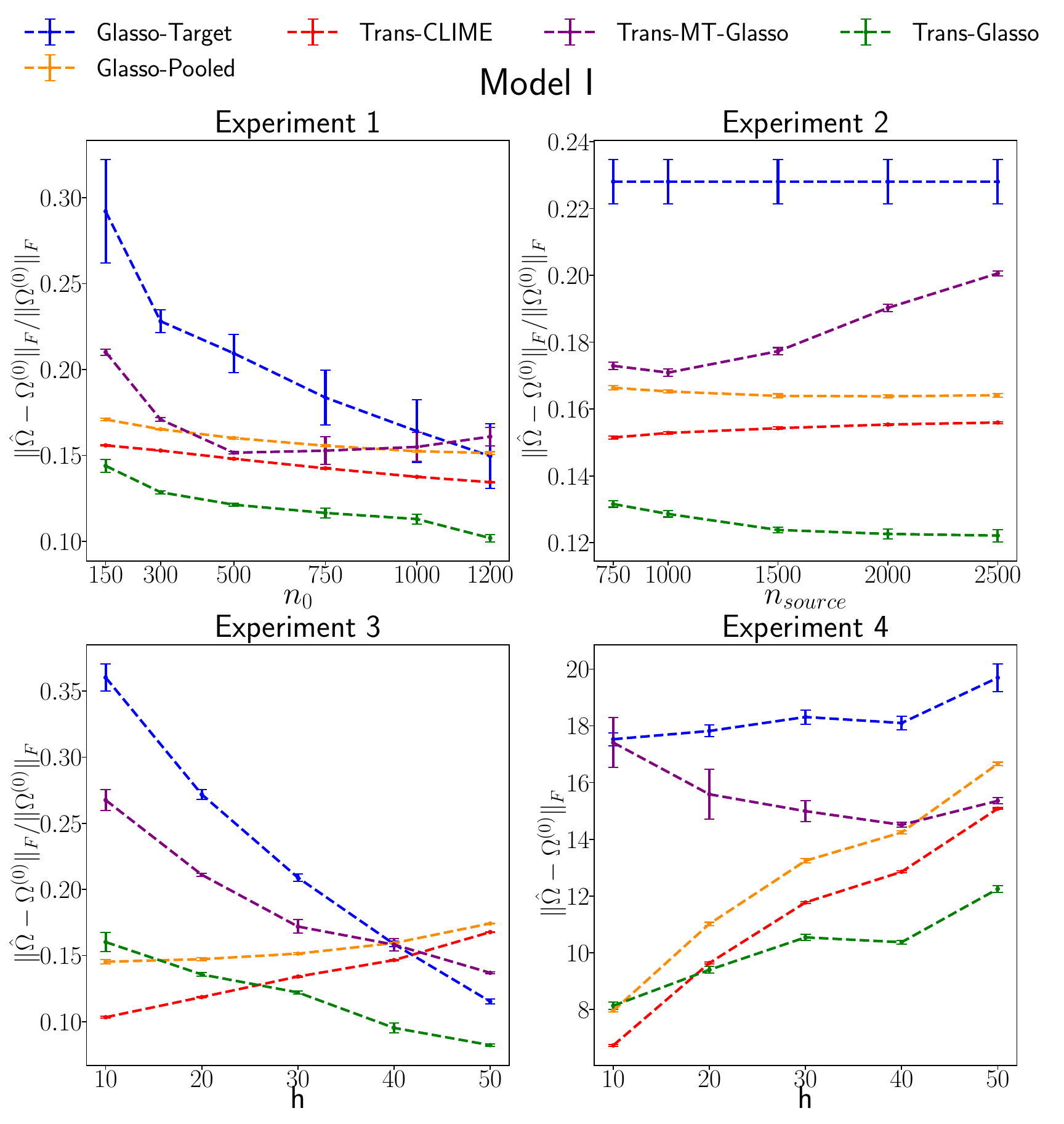}
\caption{Simulation results of \myalgMT for Model I.}
\label{fig:simu-exp-model1-transmt}
\end{figure}

\begin{figure}[t]
\centering
\includegraphics[width=.9\textwidth]{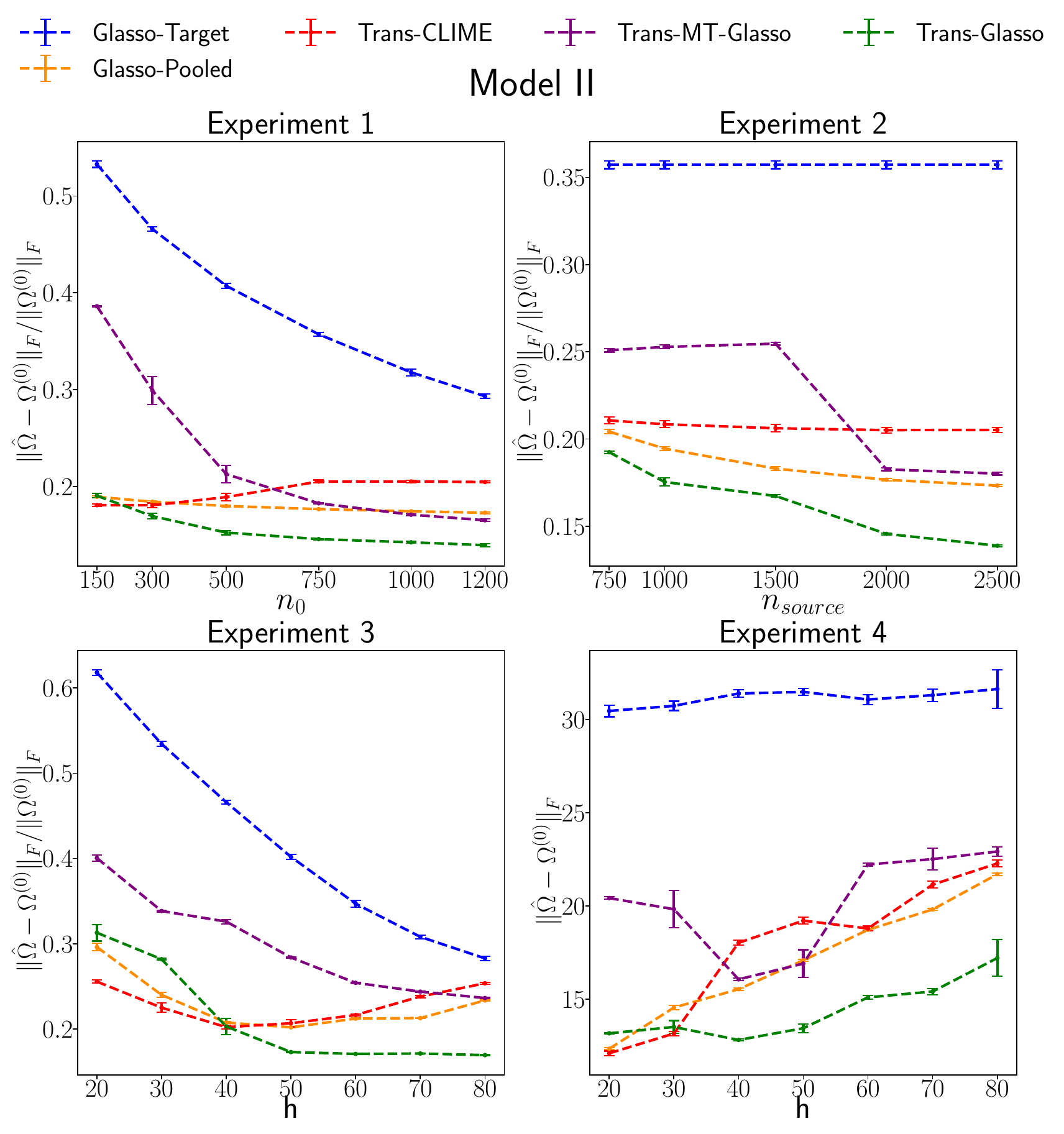}
\caption{Simulation results of \myalgMT for Model II.}
\label{fig:simu-exp-model2-transmt}
\end{figure}

\begin{figure}[t]
\centering
\includegraphics[width=.9\textwidth]{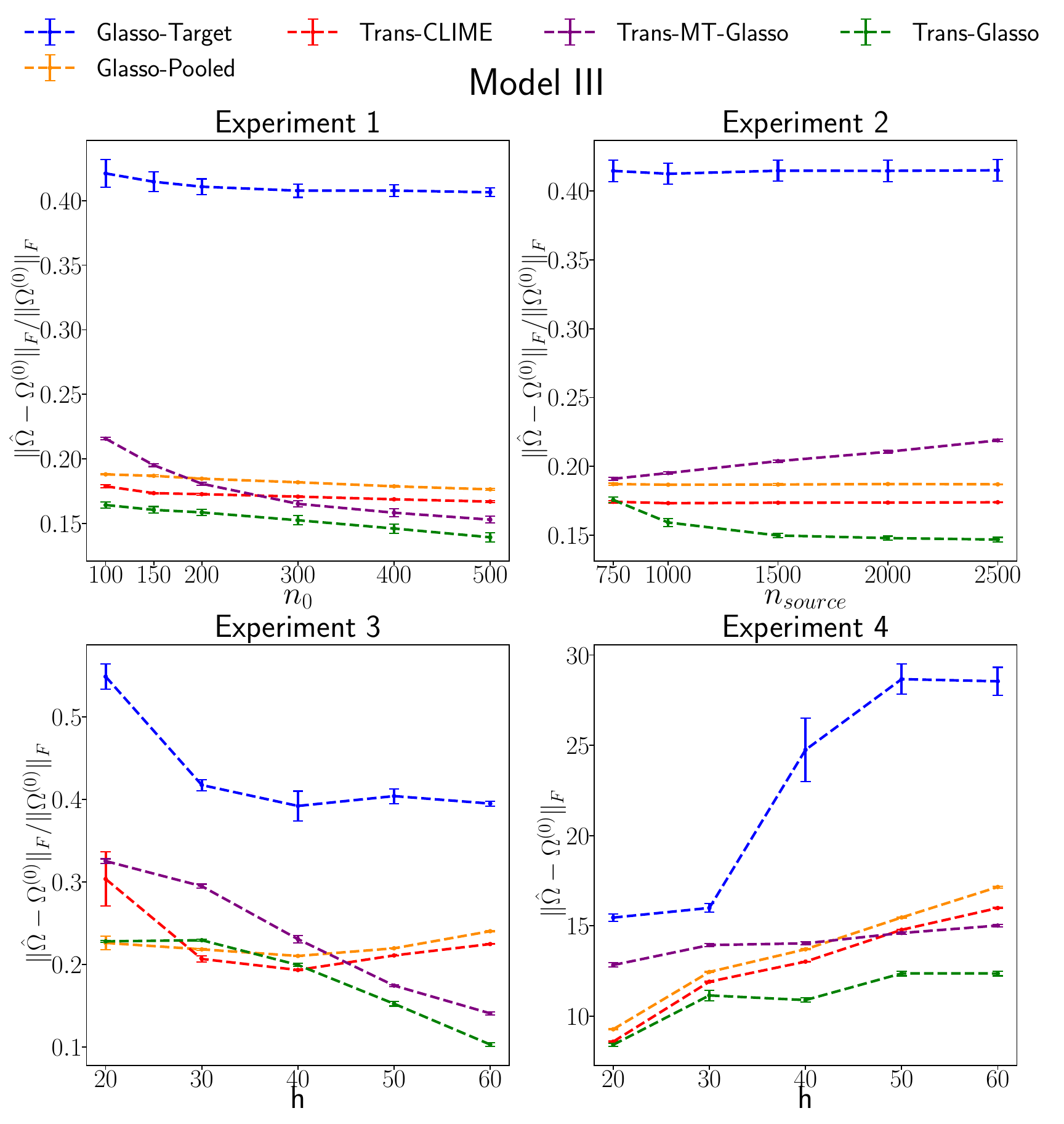}
\caption{Simulation results of \myalgMT for Model III.}
\label{fig:simu-exp-model3-transmt}
\end{figure}

}

\clearpage

{\newText

\section{Comparative Analysis of Protein Networks Data Using Trans-CLIME}

We apply the Trans-CLIME algorithm proposed by~\cite{li2023transfer} to the protein network data analyzed in Section~\ref{sec:real-world-protein}. Consistent with the methodology described in Section~\ref{sec:real-world-protein}, we select the top 20 edges exhibiting the largest absolute values in the estimated precision matrices. The resulting graphs are presented in Figure~\ref{fig:AML_TransCLIME}.

Comparing these results to those in Figure~\ref{fig:AML}, we observe that both \myalg and Trans-CLIME produce identical graphs for subtypes M0 and M2, suggesting a high degree of similarity between these two subtypes in terms of their protein interaction networks. However, the graphs obtained via Trans-CLIME exhibit fewer shared edges overall, indicating that this method tends to highlight greater differences among subtypes. In contrast, the graphs estimated by \myalg display greater similarity across different subtypes.

\begin{figure}[t]
\centering
\includegraphics[width=.7\textwidth]{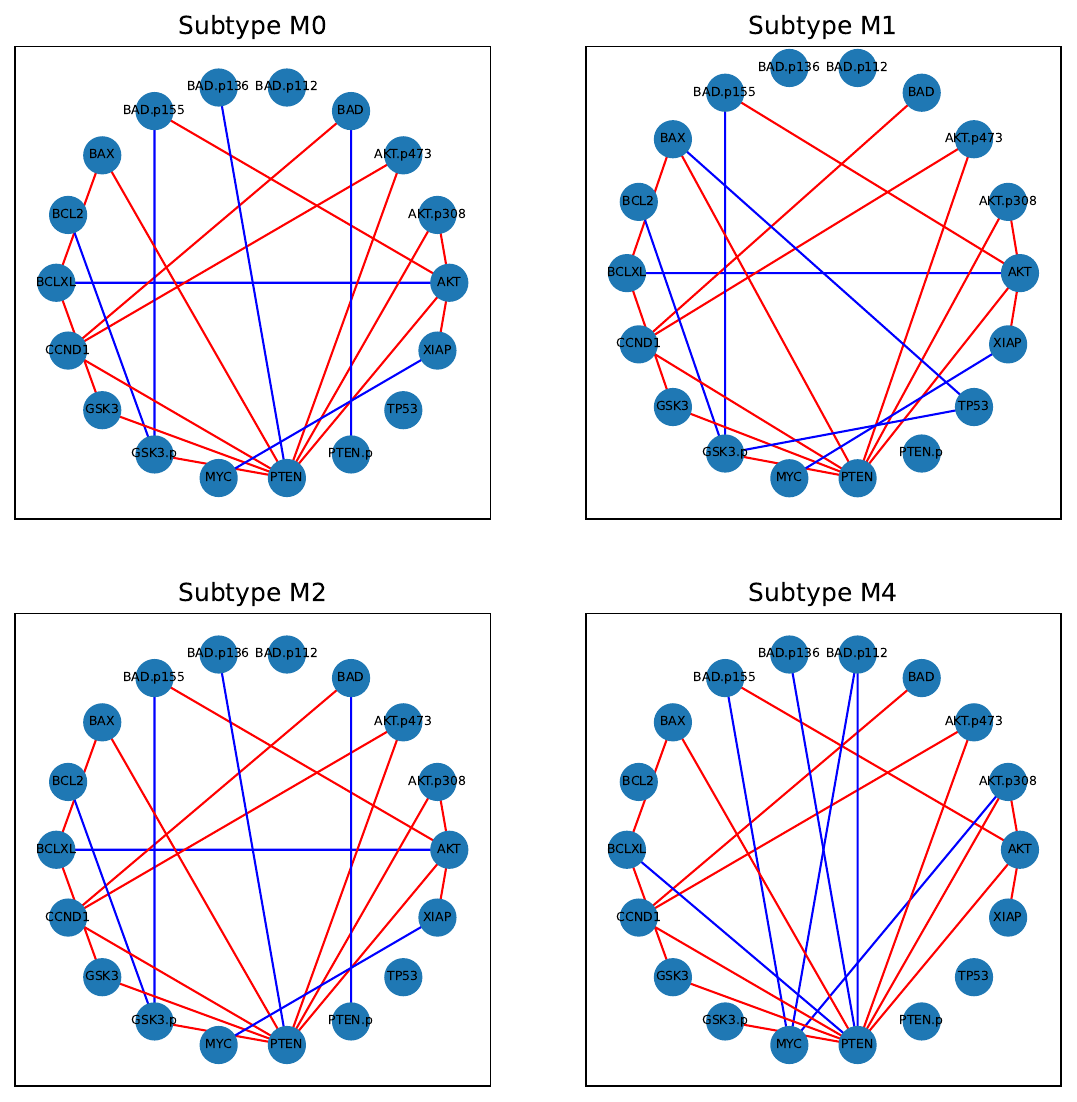}
\caption{Protein networks for four AML subtypes obtained by Trans-CLIME. Red edges are shared by all subtypes, blue edges are not.}
\label{fig:AML_TransCLIME}
\end{figure}

This difference in outcomes can be attributed to the distinct assumptions underlying each algorithm. Specifically, \myalg assumes that most entries in the precision matrices are shared across subtypes, leading to more similar graph structures. In contrast, Trans-CLIME relies on the assumption that the $L_p$-norm of the divergence matrix is small, which does not necessarily imply structural similarity. Determining which assumption is more biologically relevant to protein interactions requires deeper biological investigation, a question we leave for future work.

}

\clearpage

\section{Preliminary Lemmas}
We first collect several inequalities related to matrix norms. 
\begin{lemma}
\label{lemma:infty-err-prop}
For any two matrices $A,B \in \mathbb{R}^{d \times d}$, we have
\begin{equation}
\label{eq:infty-err-prop-1}
\vert A B \vert_{\infty} \leq \min \left\{ \Vert A \Vert_{\infty} \vert B \vert_{\infty}, \Vert B \Vert_1 \vert A \vert_{\infty}  \right\} .
\end{equation}
\end{lemma}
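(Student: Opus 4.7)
The plan is to directly expand $(AB)_{ij}$ as a sum and bound each term in two complementary ways, taking the maximum over $(i,j)$ at the end. Both bounds follow the same template: factor the product inside the sum so that one factor becomes a row or column sum of one matrix and the other factor becomes an elementwise bound on the other matrix.

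First I would write $(AB)_{ij} = \sum_{k=1}^{d} A_{ik} B_{kj}$ and apply the triangle inequality to get $|(AB)_{ij}| \leq \sum_{k} |A_{ik}|\,|B_{kj}|$. For the first bound in the minimum, I would upper bound $|B_{kj}|$ by $|B|_\infty$ uniformly in $k$, pull it out of the sum, and obtain $|(AB)_{ij}| \leq |B|_\infty \sum_k |A_{ik}|$. Taking the maximum over $i$ yields $\sum_k |A_{ik}| \leq \|A\|_\infty$ by the definition of the induced $\ell_\infty$-norm as the maximum row sum, and then taking the maximum over $j$ (which is trivial since the right side no longer depends on $j$) gives $|AB|_\infty \leq \|A\|_\infty |B|_\infty$.

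For the second bound, I would instead pull out $|A_{ik}| \leq |A|_\infty$ uniformly in $k$, leaving $|(AB)_{ij}| \leq |A|_\infty \sum_k |B_{kj}|$. Then $\sum_k |B_{kj}| \leq \|B\|_1$ by the definition of the induced $\ell_1$-norm as the maximum column sum, so $|AB|_\infty \leq |A|_\infty \|B\|_1$. Combining the two inequalities gives the stated minimum.

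There is essentially no obstacle here: the lemma is a standard consequence of H\"older's inequality applied to each entry of the product matrix, with the two bounds corresponding to placing the $\ell_\infty$ norm on either the row of $A$ or the column of $B$. The only thing to be careful about is not confusing the paper's notation, where $|\cdot|_\infty$ denotes the elementwise maximum while $\|\cdot\|_\infty$ and $\|\cdot\|_1$ denote the induced operator norms (max row sum and max column sum, respectively).
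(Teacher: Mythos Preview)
Your proposal is correct and essentially the same as the paper's proof: both unpack $|AB|_\infty$ via the triangle inequality and the definitions of the norms. The only stylistic difference is that the paper proves the first inequality in column form ($|AB|_\infty=\max_j\|AB_{:j}\|_\infty\le\|A\|_\infty|B|_\infty$) and then obtains the second by transposition ($|AB|_\infty=|B^\top A^\top|_\infty$, $\|B^\top\|_\infty=\|B\|_1$), whereas you argue both bounds directly entry-by-entry; this is a cosmetic distinction, not a substantive one.
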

\begin{proof}
For $1\leq j \leq d$, denote by $B_{:j}$ the $j$-th column of $B$. 
We then have
\begin{align*}
\vert A B \vert_{\infty} & = \max_{1 \leq j \leq d} \Vert A B_{:j} \Vert_{\infty} \\
& \leq \max_{1 \leq j \leq d} \Vert A \Vert_{\infty} \Vert B_{:j} \Vert_{\infty} \\
& = \Vert A \Vert_{\infty} \max_{1 \leq j \leq d}  \Vert B_{:j} \Vert_{\infty} \\
& = \Vert A \Vert_{\infty} \vert B \vert_{\infty}.
\end{align*}
The other claim follows from the facts that $\vert A B \vert_{\infty} = \vert B^\top A^\top \vert_{\infty}$, and $\Vert B^\top \Vert_{\infty} = \Vert B \Vert_1$. 
\end{proof}

\begin{lemma}
\label{lemma:mat-vec-infty-bd}
For any matrix $A \in \mathbb{R}^{m \times n}$ and any vector $v \in \mathbb{R}^n$, we have $\Vert Av \Vert_{\infty} \leq \vert A \vert_{\infty} \Vert v \Vert_1$.
\end{lemma}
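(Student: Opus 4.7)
The plan is to expand $\Vert Av \Vert_{\infty}$ by its definition as the maximum absolute coordinate of the vector $Av$, so that the problem reduces to bounding $\bigl| \sum_{j=1}^n A_{ij} v_j \bigr|$ uniformly in $i$. First I would apply the triangle inequality to move the absolute value inside the sum, obtaining the intermediate upper bound $\max_{1 \leq i \leq m} \sum_{j=1}^n |A_{ij}| \, |v_j|$. Then I would use the crude but tight pointwise bound $|A_{ij}| \leq \vert A \vert_{\infty}$, valid for every pair $(i,j)$ by the definition of the elementwise $\infty$-norm recalled in the Notation paragraph.

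With $|A_{ij}|$ uniformly replaced by $\vert A \vert_{\infty}$, the inner sum factors as $\vert A \vert_{\infty} \sum_{j=1}^n |v_j| = \vert A \vert_{\infty} \, \Vert v \Vert_1$, a quantity that no longer depends on $i$. Hence the outer maximum over $i$ is redundant and the chain of inequalities collapses to $\Vert Av \Vert_{\infty} \leq \vert A \vert_{\infty} \, \Vert v \Vert_1$, which is exactly the claim.

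There is no genuine obstacle here: the statement is a one-line consequence of the triangle inequality combined with the definition of the elementwise supremum norm, and it appears in the preliminary section only as a black-box auxiliary bound used later in the appendix. The single point worth minor care is to keep the norm conventions from the Notation paragraph straight, in particular that $\vert \cdot \vert_{\infty}$ denotes the elementwise maximum modulus, not the operator $\infty$-norm $\Vert \cdot \Vert_{\infty}$ which coincides with the maximum absolute row sum and would give a strictly sharper (but unneeded) bound via Lemma~\ref{lemma:infty-err-prop} applied to $v$ viewed as an $n \times 1$ matrix.
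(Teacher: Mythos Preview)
Your proof is correct and essentially identical to the paper's: the paper writes $\Vert Av \Vert_{\infty} = \max_{i} |\langle A_{i\cdot}, v\rangle| \leq \max_{i} \Vert A_{i\cdot} \Vert_{\infty} \Vert v \Vert_1 = \vert A \vert_{\infty} \Vert v \Vert_1$, which is just your triangle-inequality-plus-pointwise-bound argument packaged as H\"older's inequality for the $\ell_1/\ell_\infty$ pair.
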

\begin{proof}
Let $A_{i\cdot}$ denote the $i$-th row of $A$, we then have
\begin{align*}
\Vert Av \Vert_{\infty} & = \max_{1 \leq i \leq m} \left\vert \langle A_{i\cdot}, v \rangle \right\vert \leq \max_{1 \leq i \leq m} \Vert A_{i\cdot} \Vert_{\infty} \Vert v \Vert_1 = \vert A \vert_{\infty} \Vert v \Vert_1,
\end{align*}
which completes the proof.
\end{proof}

The next lemma requires the following definition of the sub-Gaussian random variable.
\begin{definition}
[Sub-Gaussian Random Variable, Definition~2 of~\cite{ravikumar2011high}]
\label{def:sub-Gaussian-rv}
We say that a random variable $X \in \mathbb{R}$ is sub-Gaussian with parameter $\sigma$ if
\begin{equation*}
\mathbb{E}\left[ \exp \left( \lambda X \right) \right] \leq \exp \left( \sigma^2 \lambda / 2 \right) \quad \text{for all } \lambda \in \mathbb{R}.
\end{equation*}
\end{definition}

\begin{lemma}
\label{lemma:avg-emp-cov-infty}
Assume that we obtain samples from $K$ distributions, each with a mean of zero and a covariance matrix of $\Sigma^{(k)}$. 
Let $\{X^{(k)}_i=(X^{(k)}_{i1},\ldots,X^{(k)}_{id})^{\top}\}_{i=1}^{n_k}$ represent $n_k$ independently distributed samples from the $k$-th distribution. In addition, we assume that $X^{(k)}_{ij}/\sqrt{\Sigma^{(k)}_{jj}}$ is sub-Gaussian with parameter $\sigma$ as defined in Definition~\ref{def:sub-Gaussian-rv}.
% , that is,
% \begin{equation*}
% \mathbb{E}\left[ \exp \left( \lambda X^{(k)}_{ij}/\sqrt{\Sigma^{(k)}_{jj}} \right) \right] \leq \exp \left( \sigma^2 \lambda^2 / 2 \right) \quad \text{for all } \lambda \in \mathbb{R} \text{ and } 1 \leq k \leq K.
% \end{equation*}
Let
\begin{equation*}
\widehat{\Sigma}^{(k)}=\frac{1}{n_k}\sum^{n_k}_{i=1} X^{(k)}_{i} X^{(k)\top}_i,
\quad
\widehat{\Sigma}=\sum^K_{k=1} \alpha_k \widehat{\Sigma}^{(k)}, 
\quad 
\text{and}
\quad
\Sigma^{\star} = \sum^K_{k=1} \alpha_k \Sigma^{(k)},
\end{equation*}
where $\alpha_k = {n_k}/{N}$ and $N = \sum^K_{k=1} n_k$.
For a fixed $\delta \in (0,1]$, we have
\begin{equation*}
\mathbb{P} \left\{ \left\vert  \widehat{\Sigma} - \Sigma^{\star} \right\vert_{\infty} > 16(1+4\sigma^2) \left\{ \max_{1 \leq k \leq K} \left\vert \Sigma^{(k)} \right\vert_{\infty} \right\} \max \left\{  \sqrt{ \frac{ \log (2 d^2 / \delta) }{ 2 N } } , \, \frac{ \log (2 d^2 / \delta) }{ N } \right\} \right\} \leq \delta.
\end{equation*}
\end{lemma}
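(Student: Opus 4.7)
The plan is to reduce the element-wise maximum bound to a Bernstein-type tail estimate for a sum of independent sub-exponential random variables, followed by a union bound over the $d^2$ entries. The key observation is that for every index pair $(i,j)$,
\begin{equation*}
\bigl(\widehat{\Sigma} - \Sigma^\star\bigr)_{ij} = \frac{1}{N} \sum_{k=1}^{K} \sum_{l=1}^{n_k} W^{(k)}_{l,ij}, \qquad W^{(k)}_{l,ij} := X^{(k)}_{li} X^{(k)}_{lj} - \Sigma^{(k)}_{ij},
\end{equation*}
so the weighted-average deviation collapses to the scaled sum of $N$ mutually independent, zero-mean random variables. The weights $\alpha_k$ disappear because of the $1/N$ normalization, which is the reason the effective sample size in the final bound is $N$ rather than some harmonic combination of the $n_k$'s.

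First, I would show that each $W^{(k)}_{l,ij}$ is sub-exponential with parameters depending on $k$ only through the uniform envelope $M_* := \max_k |\Sigma^{(k)}|_\infty$. By the sub-Gaussian hypothesis of Definition~\ref{def:sub-Gaussian-rv}, the normalized coordinates $X^{(k)}_{li}/\sqrt{\Sigma^{(k)}_{ii}}$ are sub-Gaussian with parameter $\sigma$. Applying the polarization identity $4 X^{(k)}_{li} X^{(k)}_{lj} = (X^{(k)}_{li} + X^{(k)}_{lj})^2 - (X^{(k)}_{li} - X^{(k)}_{lj})^2$ and the standard fact that the square of a sub-Gaussian random variable is sub-exponential, one obtains (following the line of reasoning in Lemma~1 of \cite{ravikumar2011high}) an MGF bound of the form
\begin{equation*}
\mathbb{E}\bigl[\exp(\lambda W^{(k)}_{l,ij})\bigr] \leq \exp\!\bigl( c_1 (1+4\sigma^2)^2 M_*^2 \lambda^2 \bigr) \quad\text{for all } |\lambda| \leq \frac{c_2}{(1+4\sigma^2) M_*},
\end{equation*}
with absolute constants $c_1, c_2 > 0$. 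The bound holds uniformly in $(k,l,i,j)$ because the right-hand side depends on the index only through $M_*$.

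Second, I would apply Bernstein's inequality for sums of independent (but not identically distributed) zero-mean sub-exponential variables to the sum $\tfrac{1}{N}\sum_{k,l} W^{(k)}_{l,ij}$. This gives
\begin{equation*}
\mathbb{P}\!\left( \bigl|(\widehat{\Sigma} - \Sigma^\star)_{ij}\bigr| > t \right) \leq 2\exp\!\left( -N \min\!\left\{ \frac{t^2}{c_3 (1+4\sigma^2)^2 M_*^2}, \frac{t}{c_4 (1+4\sigma^2) M_*} \right\} \right).
\end{equation*}
Equating the right-hand side with $\delta/d^2$ yields a threshold
\begin{equation*}
t \lesssim (1+4\sigma^2) M_* \max\!\left\{ \sqrt{\tfrac{\log(2d^2/\delta)}{2N}}, \ \tfrac{\log(2d^2/\delta)}{N} \right\},
\end{equation*}
and a union bound over the $d^2$ entries promotes this to a uniform bound on $|\widehat{\Sigma} - \Sigma^\star|_\infty$ with probability at least $1-\delta$.

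The main obstacle is the bookkeeping of absolute constants. Extracting the prefactor $16(1+4\sigma^2)$ claimed in the lemma, rather than a looser numerical factor, requires choosing the scaling in the polarization identity so that both $X^{(k)}_{li}$ and $X^{(k)}_{lj}$ contribute comparably, applying a tight form of sub-exponential Bernstein, and using that $|\Sigma^{(k)}_{ij}| \leq |\Sigma^{(k)}|_\infty$ to replace diagonal bounds of the form $\max_i \Sigma^{(k)}_{ii}$ by $M_*$. The heterogeneity across sources introduces no structural difficulty because Bernstein requires only independence and a common sub-exponential parameter, both of which are provided by the uniform envelope $M_*$.
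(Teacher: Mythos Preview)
Your proposal is correct and follows essentially the same route as the paper: polarization identity to write $X^{(k)}_{li}X^{(k)}_{lj}$ as a difference of squares of sub-Gaussian sums, sub-exponential MGF bounds for those squares (the paper invokes Proposition~2.9 and (2.18) of \citet{wainwright2019high}, which is the same device as the Ravikumar-style lemma you cite), multiplication of MGFs across all $N$ independent summands, a Bernstein-type tail bound, and a union bound over the $d^2$ entries. The paper obtains the exact constant $16(1+4\sigma^2)$ by identifying $\nu=\phi=16(1+4\sigma^2)$ directly from Wainwright's bound for centered squares of $\sigma$-sub-Gaussians, then replacing $\sqrt{\Sigma^{(k)}_{jj}\Sigma^{(k)}_{ll}}$ by $\tau_\infty=\max_k|\Sigma^{(k)}|_\infty$ before applying Proposition~2.9 once more for the tail; this is precisely the ``bookkeeping of absolute constants'' you flag as the remaining chore.
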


\begin{proof}
Let $\widebar{X}^{(k)}_{ij} = {X^{(k)}_{ij}}/{ \sqrt{\Sigma^{(k)}_{jj}}  }$,
$i=1,\ldots,n_k$, $k=1, \ldots, K$, $j=1,\ldots,d$. Define
$U^{(k)}_{i,jl} = \widebar{X}^{(k)}_{ij} + \widebar{X}^{(k)}_{il}$,
$V^{(k)}_{i,jl} = \widebar{X}^{(k)}_{ij} - \widebar{X}^{(k)}_{il}$,
and 
$\rho^{(k)}_{jl}= {\Sigma^{(k)}_{jl}}/{ \sqrt{ \Sigma^{(k)}_{jj} \cdot \Sigma^{(k)}_{ll} } }$.
Then it follows from Proposition 2.9 and (2.18) in \citet{wainwright2019high}, that 
\begin{align*}
& \mathbb{E} \left[ \exp \left\{  \lambda \left[ \left( U^{(k)}_{i,jl} \right)^2 - 2 \left( 1 + \rho^{(k)}_{jl} \right)  \right] \right\}  \right] \leq \exp \left( \frac{\lambda^2 \nu^2}{2} \right), \quad \text{and}\\
& \mathbb{E} \left[ \exp \left\{  \lambda \left[ \left( V^{(k)}_{i,jl} \right)^2 - 2 \left( 1 - \rho^{(k)}_{jl} \right)  \right] \right\}  \right] \leq \exp \left( \frac{\lambda^2 \nu^2}{2} \right),
\end{align*}
for all $\vert \lambda \vert < {1}/{\phi}$ where $\nu=\phi=16(1+4\sigma^2)$. Since
\begin{equation*}
\sum^{n_k}_{i=1} \left( \widebar{X}^{(k)}_{ij} \widebar{X}^{(k)}_{il} - \rho^{(k)}_{jl} \right) = \frac{1}{4} \sum^{n_k}_{i=1} \left( \left( U^{(k)}_{i,jl} \right)^2 - 2 \left( 1 + \rho^{(k)}_{jl} \right) \right) - \frac{1}{4} \sum^{n_k}_{i=1} \left( \left( V^{(k)}_{i,jl} \right)^2 - 2 \left( 1 - \rho^{(k)}_{jl} \right) \right),
\end{equation*}
we have
\begin{align*}
\mathbb{E} &\left[ \exp \left\{  \lambda \left[  \sum^{n_k}_{i=1} \left( \widebar{X}^{(k)}_{ij} \widebar{X}^{(k)}_{il} - \rho^{(k)}_{jl} \right) \right] \right\}  \right] \\
& = \mathbb{E} \left[ \exp \left\{  \frac{\lambda}{4} \left[  \sum^{n_k}_{i=1} \left( \left( U^{(k)}_{i,jl} \right)^2 - 2 \left( 1 + \rho^{(k)}_{jl} \right) \right) \right] \right\} \cdot \exp \left\{  - \frac{\lambda}{4} \left[  \sum^{n_k}_{i=1} \left( \left( V^{(k)}_{i,jl} \right)^2 - 2 \left( 1 - \rho^{(k)}_{jl} \right) \right) \right] \right\}  \right] \\
& \leq \left\{ \mathbb{E} \left[ \exp \left\{  \frac{\lambda}{2} \left[  \sum^{n_k}_{i=1} \left( \left( U^{(k)}_{i,jl} \right)^2 - 2 \left( 1 + \rho^{(k)}_{jl} \right) \right) \right] \right\} \right] \right\}^{\frac{1}{2}} \times \\
& \quad \quad \left\{ \mathbb{E} \left[ \exp \left\{  -\frac{\lambda}{2} \left[  \sum^{n_k}_{i=1} \left( \left( V^{(k)}_{i,jl} \right)^2 - 2 \left( 1 - \rho^{(k)}_{jl} \right) \right) \right] \right\} \right] \right\}^{\frac{1}{2}} \\
& \leq \exp \left( \frac{n_k \nu^2 \lambda^2 }{8} \right) \quad \text{for all } \vert \lambda \vert < \frac{2}{\phi}.
\end{align*}
Let $\tau_{\infty}=\max_{1 \leq k \leq K}\vert \Sigma^{(k)} \vert_{\infty}$.
It then follows that
\begin{align*}
\mathbb{E} & \left[ \exp \left\{  \lambda \left[  \sum^{n_k}_{i=1} \left( X^{(k)}_{ij} X^{(k)}_{il} - \Sigma^{(k)}_{jl} \right) \right] \right\}  \right] \\
& = \mathbb{E} \left[ \exp \left\{  \lambda \sqrt{ \Sigma^{(k)}_{jj} } \sqrt{ \Sigma^{(k)}_{ll} }  \left[  \sum^{n_k}_{i=1} \left( \widebar{X}^{(k)}_{ij} \widebar{X}^{(k)}_{il} - \rho^{(k)}_{jl} \right) \right] \right\}  \right] \\
& \leq \mathbb{E} \left[ \exp \left\{  \lambda \vert \Sigma^{(k)} \vert_{\infty} \left[  \sum^{n_k}_{i=1} \left( \widebar{X}^{(k)}_{ij} \widebar{X}^{(k)}_{il} - \rho^{(k)}_{jl} \right) \right] \right\}  \right] \\
& \leq \mathbb{E} \left[ \exp \left\{  \lambda \tau_{\infty} \left[  \sum^{n_k}_{i=1} \left( \widebar{X}^{(k)}_{ij} \widebar{X}^{(k)}_{il} - \rho^{(k)}_{jl} \right) \right] \right\}  \right] \\
& \leq \exp \left( \frac{n_k \nu^2 \tau^2_{\infty} \lambda^2 }{8  } \right) \quad \text{for all } \vert \lambda \vert < \frac{2}{\phi \tau_{\infty} },
\end{align*}
and
\begin{align*}
\mathbb{E} \left[ \exp \left\{ \lambda N \left( \widehat{\Sigma}_{jl} - \Sigma^{\star}_{jl} \right) \right\} \right] & = \mathbb{E} \left[ \exp \left\{ \lambda \left( \sum^K_{k=1} \sum^{n_k}_{i=1} \left( X^{(k)}_{ij} X^{(k)}_{il} - \Sigma^{(k)}_{jl} \right)  \right) \right\} \right] \\
& = \prod^K_{k=1} \mathbb{E} \left[ \exp \left\{ \lambda \left( \sum^{n_k}_{i=1} \left( X^{(k)}_{ij} X^{(k)}_{il} - \Sigma^{(k)}_{jl} \right)  \right) \right\} \right] \\
& \leq \prod^K_{k=1} \exp \left( \frac{n_k \nu^2 \tau^2_{\infty} \lambda^2 }{8  } \right) \\
& = \exp \left( \frac{N \nu^2 \tau^2_{\infty}  \lambda^2 }{8 } \right)
\end{align*}
when $\vert \lambda \vert < 2  / (\tau_{\infty} \phi)$.
Using Proposition 2.9 in \citet{wainwright2019high}, for $ t \geq 0$, we have that
% \begin{equation*}
% \mathbb{P} \left\{ \left\vert N \left( \widehat{\Sigma}_{jl} - \Sigma^{\star}_{jl} \right) \right\vert > t \right\} \leq 2 \exp \left\{ - \frac{1}{2} \min \left( \frac{4  t^2}{ N \nu^2 \tau^2_{\infty}}, \, \frac{2  t }{\phi\tau_{\infty} } \right)  \right\} \text{ for all } t \geq 0,
% \end{equation*}
% which implies that
\begin{equation*}
\mathbb{P} \left\{ \left\vert  \widehat{\Sigma}_{jl} - \Sigma^{\star}_{jl} \right\vert > t \right\} = \mathbb{P} \left\{ \left\vert N \left( \widehat{\Sigma}_{jl} - \Sigma^{\star}_{jl} \right) \right\vert > N t \right\} \leq 2 \exp \left\{ - \frac{1}{2} \min \left( \frac{4 N t^2}{ \nu^2 \tau^2_{\infty}}, \, \frac{2 N  t }{\phi \tau_{\infty}} \right)  \right\}.
\end{equation*}
A union bound then gives us
\begin{equation*}
\mathbb{P} \left\{ \left\vert  \widehat{\Sigma} - \Sigma^{\star} \right\vert_{\infty} > t \right\} \leq 2 d^2 \exp \left\{ - \frac{1}{2} \min \left( \frac{4 N t^2}{ \nu^2 \tau^2_{\infty}}, \, \frac{2 N  t }{\phi \tau_{\infty}} \right)  \right\}.
\end{equation*}
Rewriting the above equation, we complete the proof. 
% Given $\delta \in (0,1]$, letting the right hand side of the above inequality be smaller or equal than $\delta$, we then have
% \begin{equation*}
% t \geq 16(1+4\sigma^2) \left\{ \max_{1 \leq k \leq K}\vert \Sigma^{(k)} \vert_{\infty} \right\} \max \left\{  \sqrt{ \frac{ \log (2 d^2 / \delta) }{ 2 N } } , \, \frac{ \log (2 d^2 / \delta) }{ N } \right\}
% \end{equation*}
% where we have used the fact that $\nu=\phi=16(1+4\sigma^2)$ and $\tau_{\infty}=\max_{1 \leq k \leq K}\vert \Sigma^{(k)} \vert_{\infty}$. Thus, we have
% \begin{equation*}
% \mathbb{P} \left\{ \left\vert  \widehat{\Sigma} - \Sigma^{\star} \right\vert_{\infty} > 16(1+4\sigma^2) \left\{ \max_{1 \leq k \leq K} \left\vert \Sigma^{(k)} \right\vert_{\infty} \right\} \max \left\{  \sqrt{ \frac{ \log (2 d^2 / \delta) }{ 2 N } } , \, \frac{ \log (2 d^2 / \delta) }{ N } \right\} \right\} \leq \delta.
% \end{equation*}
\end{proof}

\begin{lemma}
\label{lemma:infty-cov-err}
Consider a zero mean random vector $X=(X_1,\ldots,X_d)^{\top}$ with covariance $\Sigma^{\star}$ such that each $X_j/\sqrt{\Sigma^{\star}_{jj}}$ is sub-Gaussian with parameter $\sigma$ as defined in Definition~\ref{def:sub-Gaussian-rv}.
% that is,
% \begin{equation*}
% \mathbb{E}\left[ \exp \left( t X_j/\sqrt{\Sigma^{\star}_{jj}} \right) \right] \leq \exp \left( \sigma^2 t^2 / 2 \right) \quad \text{for all } t \in \mathbb{R}.
% \end{equation*}
Let $\{ X(i) \}^n_{i=1}$ be $n$ i.i.d.~copies of $X$ and let $\widehat{\Sigma}_n=\frac{1}{n}\sum^n_{i=1} X(i) X(i)^{\top}$.
For $\delta \in (0,1]$, we have
\begin{equation*}
\mathbb{P} \left\{ \left\vert  \widehat{\Sigma}_n - \Sigma^{\star} \right\vert_{\infty} > 16(1+4\sigma^2) \left\vert \Sigma^{\star} \right\vert_{\infty}  \max \left\{  \sqrt{ \frac{ \log (2 d^2 / \delta) }{ 2 n } } , \, \frac{ \log (2 d^2 / \delta) }{ n } \right\} \right\} \leq \delta.
\end{equation*}
\end{lemma}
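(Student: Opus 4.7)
The plan is to observe that Lemma~\ref{lemma:infty-cov-err} is exactly the single-distribution specialization of Lemma~\ref{lemma:avg-emp-cov-infty}. Setting $K=1$, $n_1 = n$ gives $N = n$, $\alpha_1 = 1$, $\widehat{\Sigma} = \widehat{\Sigma}_n$, the weighted population covariance $\sum_k \alpha_k \Sigma^{(k)}$ collapses to $\Sigma^\star$, and $\max_k |\Sigma^{(k)}|_\infty = |\Sigma^\star|_\infty$. Substituting these reductions into the tail bound of Lemma~\ref{lemma:avg-emp-cov-infty} reproduces the claimed inequality verbatim, so a one-line invocation completes the proof.

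For self-containment one can also rederive it directly. I would normalize $\bar{X}_j(i) = X_j(i)/\sqrt{\Sigma^\star_{jj}}$, so each $\bar{X}_j(i)$ is sub-Gaussian with parameter $\sigma$, and then apply the polarization identity
\[
\bar{X}_j(i)\bar{X}_l(i) - \rho_{jl} = \tfrac{1}{4}\bigl[(\bar{X}_j(i)+\bar{X}_l(i))^2 - 2(1+\rho_{jl})\bigr] - \tfrac{1}{4}\bigl[(\bar{X}_j(i)-\bar{X}_l(i))^2 - 2(1-\rho_{jl})\bigr],
\]
where $\rho_{jl} = \Sigma^\star_{jl}/\sqrt{\Sigma^\star_{jj}\Sigma^\star_{ll}}$. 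Each centered square on the right is a sub-exponential random variable: Proposition~2.9 and equation~(2.18) of \cite{wainwright2019high} yield MGF bounds with Bernstein parameters $(\nu,\phi) = (16(1+4\sigma^2), 16(1+4\sigma^2))$.

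Combining the two squared terms via Cauchy--Schwarz on MGFs, summing over the $n$ independent copies (which multiplies the variance proxy by $n$ and leaves the Orlicz scale fixed), and rescaling by $\sqrt{\Sigma^\star_{jj}\Sigma^\star_{ll}} \le |\Sigma^\star|_\infty$ produces a sub-exponential MGF bound for $n(\widehat{\Sigma}_{n,jl} - \Sigma^\star_{jl})$ with effective variance scale $n (\nu |\Sigma^\star|_\infty)^2$ and Orlicz parameter $\phi |\Sigma^\star|_\infty$. The standard Bernstein inequality then produces the characteristic two-regime bound (sub-Gaussian for small $t$, sub-exponential for large $t$) encoded by the $\max$ in the statement, and a union bound over the $d^2$ entries turns $\delta$ into $2d^2 \exp(\cdots)$, absorbing the $\log(2d^2/\delta)$ factor.

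There is no real obstacle here, since the argument is strictly contained in the proof of Lemma~\ref{lemma:avg-emp-cov-infty}; the only minor bookkeeping is tracking the constant $16(1+4\sigma^2)$ and ensuring the maximum of the two regimes is written correctly. Because all of this calculation was already carried out in the general $K$-distribution case above, no new ideas are needed.
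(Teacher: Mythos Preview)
Your proposal is correct and matches the paper's own proof exactly: the paper simply states that the result follows directly from Lemma~\ref{lemma:avg-emp-cov-infty}, which is precisely your specialization $K=1$. The optional direct rederivation you sketch is also faithful to the argument used in the proof of Lemma~\ref{lemma:avg-emp-cov-infty}.
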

\begin{proof}
The proof follows directly from Lemma~\ref{lemma:avg-emp-cov-infty}.
\end{proof}

In the next lemma, we use $\mathbb{S}^d_{+}$ to denote the set of positive definite matrices with dimension $d$.

\begin{lemma}
\label{lemma:likelihood-taylor-series}
Let $A \in \mathbb{S}^d_{+}$.
Define $f \left( \Omega \right) \coloneqq \langle \Omega, A \rangle - \log \det \Omega$ for $\Omega \in \mathbb{S}^d_{+}$.
Given $\Omega_0 \in \mathbb{S}^d_{+}$ and $\Delta_\Omega \in \mathbb{R}^{d \times d}$ such that $\Omega_0 + \Delta_\Omega \in \mathbb{S}^d_{+}$, there exists $t \in (0,1)$ such that
\begin{equation*}
f \left( \Omega_0 + \Delta_\Omega \right) - f \left(  \Omega_0 \right) - \left\langle \nabla f ( \Omega_0 ) , \Delta_\Omega \right\rangle = \frac{1}{2} {\rm vec} \left( \Delta_\Omega \right)^{\top} \left\{ \left( \Omega_0 + t \Delta_\Omega \right)^{-1} \otimes \left( \Omega_0 + t \Delta_\Omega \right)^{-1} \right\} {\rm vec} \left( \Delta_\Omega \right)
\end{equation*}
\end{lemma}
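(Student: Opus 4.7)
The plan is to reduce the multivariate Taylor expansion to the one dimensional mean value form by restricting $f$ to the line segment joining $\Omega_0$ and $\Omega_0+\Delta_\Omega$. Define $\Omega_t := \Omega_0 + t \Delta_\Omega$ and $g(t) := f(\Omega_t)$ for $t \in [0,1]$. Since $\mathbb{S}^d_+$ is convex and both endpoints $\Omega_0$ and $\Omega_0+\Delta_\Omega$ lie in $\mathbb{S}^d_+$, the whole segment $\{\Omega_t\}_{t\in[0,1]}$ is contained in $\mathbb{S}^d_+$, so $g$ is well defined and in fact smooth on $[0,1]$. Once this is set up, the Lagrange remainder form of Taylor's theorem applied to $g$ on $[0,1]$ yields some $t \in (0,1)$ with $g(1) = g(0) + g'(0) + \tfrac{1}{2} g''(t)$, and the lemma will follow once $g'(0)$ and $g''(t)$ are identified with the quantities in the statement.

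Next I would compute these derivatives. The linear term $\langle \Omega, A\rangle$ is handled trivially: its contribution to $g(t)$ is $\langle \Omega_0, A\rangle + t\langle \Delta_\Omega, A\rangle$, giving a first derivative $\langle A, \Delta_\Omega \rangle$ and zero second derivative. For the $-\log\det$ piece, the standard identity $\nabla(-\log\det\Omega) = -\Omega^{-1}$ gives $\nabla f(\Omega_0) = A - \Omega_0^{-1}$, hence $g'(0) = \langle \nabla f(\Omega_0), \Delta_\Omega \rangle$. Differentiating again using $\tfrac{d}{dt}\Omega_t^{-1} = -\Omega_t^{-1}\Delta_\Omega \Omega_t^{-1}$ yields
\begin{equation*}
g''(t) \;=\; \bigl\langle \Omega_t^{-1}\Delta_\Omega \Omega_t^{-1},\; \Delta_\Omega \bigr\rangle.
\end{equation*}

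To match the Kronecker product expression in the statement, I would apply the standard vectorization identity $\text{vec}(ABC) = (C^\top \otimes A)\,\text{vec}(B)$. Because $\Omega_t$ and hence $\Omega_t^{-1}$ are symmetric, this gives $\text{vec}(\Omega_t^{-1}\Delta_\Omega \Omega_t^{-1}) = (\Omega_t^{-1}\otimes \Omega_t^{-1})\,\text{vec}(\Delta_\Omega)$, so
\begin{equation*}
g''(t) \;=\; \text{vec}(\Delta_\Omega)^{\top}\bigl(\Omega_t^{-1}\otimes \Omega_t^{-1}\bigr)\,\text{vec}(\Delta_\Omega).
\end{equation*}
Plugging this and $g'(0) = \langle \nabla f(\Omega_0), \Delta_\Omega\rangle$ into $g(1) - g(0) = g'(0) + \tfrac{1}{2}g''(t)$ produces exactly the claimed identity.

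The routine pieces (convexity of $\mathbb{S}^d_+$, differentiability of $g$, Taylor's theorem) are textbook, so the only genuine obstacle is the bookkeeping for the second derivative of $-\log\det\Omega_t$. I would either carry it out by differentiating the matrix inverse directly, as above, or, equivalently, by expanding $\log\det \Omega_t = \log\det \Omega_0 + \log\det(I + t\,\Omega_0^{-1/2}\Delta_\Omega\Omega_0^{-1/2})$ and differentiating the eigenvalue sum; either route gives the symmetric form $\langle \Omega_t^{-1}\Delta_\Omega\Omega_t^{-1},\Delta_\Omega\rangle$. After that, the vectorization step is purely algebraic and concludes the proof.
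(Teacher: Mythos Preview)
Your proposal is correct and follows essentially the same approach as the paper: the paper's proof simply notes that, viewing $f$ as a function of $\mathrm{vec}(\Omega)$, the Hessian is $\Omega^{-1}\otimes\Omega^{-1}$ and then invokes Taylor's theorem with Lagrange remainder. Your one-dimensional restriction to the segment, explicit computation of $g''(t)$, and subsequent vectorization identity amount to spelling out exactly that argument in more detail; there is no substantive difference.
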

\begin{proof}
One can view $f$ as a function of $\text{vec} ( \Omega )$. Note that $\nabla^2 f( \text{vec} ( \Omega ) )=\Omega^{-1}\otimes \Omega^{-1}$. The rest of the proof then follows Taylor's Theorem~\citep[Theorem 2.1]{nocedal2006numerical}.
\end{proof}

% \begin{lemma}[Theorem 6.1 of~\citet{cai2016estimating}]
% \label{lemma:cai2016estimating-thm6.1}
% Let
% \begin{multline*}
% \tilde{\mathcal{G}} \left( s \right) \coloneqq \left\{ \Omega \in \mathbb{R}^{d \times d} \, : \, \max_{j} \sum^d_{i=1} \mathds{1} \left\{ \Omega_{ij} \neq 0 \right\} \leq s, \, \left\Vert \Omega \right\Vert_1 \leq C , \, \lambda_{\max}(\Omega) / \lambda_{\min}(\Omega) \leq C^{\prime}, \, \Omega \succ 0  \right\},
% \end{multline*}
% where $C,C^{\prime}$ are universal constants.
% Suppose that we have $n$ i.i.d. observations from $N(0,\{\Omega\}^{-1})$, then we have
% \begin{equation*}
% \inf_{\widehat{\Omega}} \sup_{\Omega \in \tilde{\mathcal{G}} ( s )} \mathbb{E} \left[ \frac{1}{d} \left\Vert \widehat{\Omega} - \Omega \right\Vert^2_{\mathrm{F}} 
%  \right] \gtrsim \frac{ s \log d }{ n }.
% \end{equation*}
% \end{lemma}

\begin{lemma}
\label{lemma:cov-number}
Let $\mathbb{S}^p (s) \coloneqq \left\{ \theta \in \mathbb{R}^p: \, \Vert \theta \Vert_0 \leq s , \, \Vert \theta \Vert_2 \leq 1 \right\}$. There exists $\{ \theta^0, \theta^1,\ldots,\theta^M \} \subseteq \mathbb{S}^p (s)$ such that
\begin{enumerate}[label=(\roman*)]
\item $\theta^0=0$;
\item $\left\Vert \theta^j - \theta^k \right\Vert_2 \geq \frac{1}{4}$ for all $0 \leq j \neq k \leq M$;
\item $\log (M+1) \geq \frac{s}{2} \log \left( \frac{d-s}{s} \right)$.
\end{enumerate}
\end{lemma}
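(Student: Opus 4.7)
}

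The plan is a Varshamov--Gilbert style packing argument using rescaled indicator vectors of $s$-element subsets of $[d]$, with the scaling chosen so that both the membership in $\mathbb{S}^d(s)$ and the separation $1/4$ are easy to verify.

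First, I would fix the scaling. Set $\theta^0 = 0$, and for each subset $S \subseteq [d]$ with $|S| = s$, define $\theta_S := (2\sqrt{s})^{-1} \mathbf{1}_S$, where $\mathbf{1}_S$ denotes the indicator vector of $S$. Then $\|\theta_S\|_0 = s$ and $\|\theta_S\|_2 = 1/2 \leq 1$, so every candidate lies in $\mathbb{S}^d(s)$. Moreover, $\|\theta_S - \theta^0\|_2 = 1/2 \geq 1/4$, which automatically verifies the separation condition whenever one of the two points is $\theta^0$.

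Second, I would translate the remaining pairwise separation condition into a combinatorial statement. For any two distinct $s$-subsets $S, S' \subseteq [d]$, a direct computation gives
\[
\|\theta_S - \theta_{S'}\|_2^{2} \;=\; \frac{|S \triangle S'|}{4s},
\]
so the requirement $\|\theta_S - \theta_{S'}\|_2 \geq 1/4$ is equivalent to $|S \triangle S'| \geq s/4$. Producing $\theta^1, \ldots, \theta^M$ therefore reduces to exhibiting $M$ subsets of $[d]$ of size $s$ that are pairwise at symmetric-difference distance at least $s/4$, with $\log(M+1) \geq (s/2)\log((d-s)/s)$.

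Third, I would build such a packing greedily (this is essentially the classical Varshamov--Gilbert lemma for constant-weight codes; see e.g.\ Lemma~4.10 of Massart or Lemma~2.9 of Tsybakov). Starting from an arbitrary $s$-subset and repeatedly adding any $s$-subset that lies outside all previously-formed balls of symmetric-difference radius less than $s/4$, maximality yields $M \cdot B \geq \binom{d}{s}$, where
\[
B \;=\; \sum_{k < s/8}\binom{s}{k}\binom{d-s}{k}
\]
upper bounds the number of $s$-subsets within such a ball of a fixed center. Combining this with standard binomial estimates gives the desired lower bound on $\log(M+1)$.

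The main obstacle is the combinatorial bookkeeping in this final step: one must bound $B$ tightly enough to guarantee $\binom{d}{s}/B \geq ((d-s)/s)^{s/2}$, which requires careful use of inequalities like $\binom{n}{k} \leq (en/k)^k$ and $\binom{d}{s} \geq ((d-s)/s)^s$, and a little care to control the $2^s$-type slack introduced by summing over $k < s/8$. No new idea is needed beyond the classical Varshamov--Gilbert computation; alternatively, one can shortcut the bookkeeping by directly quoting the constant-weight packing lemma from the references above.
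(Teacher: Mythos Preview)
Your proposal is correct, and it takes a somewhat different route from the paper's proof.

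The paper does not construct the packing from scratch. It first quotes Example~15.16 in \citet{wainwright2019high} to obtain a $1/2$-packing $\{\tilde\theta^0,\dots,\tilde\theta^M\}\subseteq\mathbb{S}^p(s)$ with $\log(M+1)\ge\tfrac{s}{2}\log\tfrac{d-s}{s}$. It then reorders so that $\tilde\theta^0$ has the smallest norm, replaces $\tilde\theta^0$ by $0$, and checks via a short two-case argument (depending on whether $\|\tilde\theta^0\|_2\ge 1/4$ or not) that the resulting set is still a $1/4$-packing. By contrast, you bake $\theta^0=0$ into the construction from the outset: scaling the indicator vectors to norm $1/2$ makes the separation from the origin automatic, and the remaining pairwise separations reduce to the constant-weight symmetric-difference packing you describe.

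Both arguments rest on the same combinatorial fact; the difference is where the work happens. The paper's version is shorter on the page because all the Varshamov--Gilbert bookkeeping is hidden in the citation, and the post-processing is a three-line case split. Your version avoids that case split by the scaling trick, which is neat, but you then have to either carry out the VG counting yourself (the part you correctly flag as the main obstacle) or cite the constant-weight packing lemma. One caveat if you do cite: to match the \emph{exact} constant $\tfrac{s}{2}\log\tfrac{d-s}{s}$ in the statement, you should use the version in Wainwright rather than Massart or Tsybakov, whose stated constants are slightly different and would require an extra line of comparison.
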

\begin{proof}
It follows from Example 15.16 in \citet{wainwright2019high} that there exists a ${1}/{2}$-packing of $\mathbb{S}^p (s)$, which we denote as $\{ \tilde{\theta}^0, \tilde{\theta}^1,\ldots,\tilde{\theta}^M \}$, such that
\begin{equation}
\label{eq:lemma-cov-number-1}
\left\Vert \tilde{\theta}^j - \tilde{\theta}^k \right\Vert_2 \geq \frac{1}{2} \qquad \text{for all } 0 \leq j \neq k \leq M 
\qquad\text{ and }\qquad
\log (M+1) \geq \frac{s}{2} \log \left( \frac{d-s}{s} \right).
\end{equation}
Without loss of generality, assume that $\Vert \tilde{\theta}^0 \Vert_2 \leq \Vert \tilde{\theta}^j \Vert_2$ for $1 \leq j \leq M$. 
Let $\theta^0=0$ and let $\theta^j=\tilde{\theta}^j$ for $1 \leq j \leq M$.
Then the set $\{ \theta^0, \theta^1,\ldots,\theta^M \}$ satisfies (i)--(iii). 
To prove the claim, by our construction of the set and~\eqref{eq:lemma-cov-number-1}, we only need to verify that $\Vert \theta^j - \theta^0  \Vert_2 = \Vert \theta^j  \Vert_2 \geq \frac{1}{4}$ for all $1 \leq j \leq M$. We prove the result in two cases.

\textit{Case 1: $\Vert \tilde{\theta}^0 \Vert_2 \geq \frac{1}{4}$.} Since $\Vert \tilde{\theta}^0 \Vert_2 \leq \Vert \tilde{\theta}^j \Vert_2$, we have that $\Vert \theta^j  \Vert_2 = \Vert \tilde{\theta}^j  \Vert_2 \geq \Vert \tilde{\theta}^0  \Vert_2 \geq \frac{1}{4}$ for $1 \leq j \leq M$.

\textit{Case 2: $\Vert \tilde{\theta}^0 \Vert_2 < \frac{1}{4}$.} We have
\begin{equation*}
\left\Vert \theta^j  \right\Vert_2 = \left\Vert \tilde{\theta}^j  \right\Vert_2 \geq \left\Vert \tilde{\theta}^j - \tilde{\theta}^0 \right\Vert_2 - \left\Vert \tilde{\theta}^0  \right\Vert_2 > \frac{1}{2} - \frac{1}{4} = \frac{1}{4} \quad \text{for } 1 \leq j \leq M,
\end{equation*}
where the last inequality follows from~\eqref{eq:lemma-cov-number-1}.

Combining Case 1 and Case 2, we have proved that $\Vert \theta^j  \Vert_2 \geq \frac{1}{4}$ for $1 \leq j \leq M$, which completes the proof.
\end{proof}

\begin{lemma}
\label{lemma:KL-divergence}
Let $m \geq 2$ be an even integer and set $d = 2m$. Let $B \in \mathbb{R}^{m \times m}$  be such that $\Vert B \Vert_{\mathrm{F}} \leq 1/4$ and define
\begin{equation*}
\Omega = 
\begin{bmatrix}
I_m & B \\
B^{\top} & I_m
\end{bmatrix}.
\end{equation*}
Then $\Omega \succ 0$ and
$
D_{\text{KL}} \left( N \left( 0, \Omega^{-1} \right) \Vert  N \left( 0, I_d \right) \right) \leq \frac{16}{15} \left\Vert B \right\Vert^2_{\mathrm{F}}.
$
\end{lemma}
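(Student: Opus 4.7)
The plan is to first establish $\Omega \succ 0$ and then to compute the KL divergence by diagonalizing $\Omega$ via the SVD of $B$. For positive definiteness, I would invoke the Schur complement: since $\|B\|_2 \le \|B\|_{\mathrm{F}} \le 1/4 < 1$, the Schur complement $I_m - B^\top B$ of the upper-left block is strictly positive definite, which together with $I_m \succ 0$ gives $\Omega \succ 0$.

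To compute the eigenvalues of $\Omega$, let $B = U\Sigma V^\top$ be an SVD with singular values $\sigma_1,\dots,\sigma_m \in [0,1/4]$. Conjugating by the orthogonal matrix $\mathrm{diag}(U,V)$ reduces $\Omega$ to $\begin{bmatrix} I_m & \Sigma\\ \Sigma & I_m\end{bmatrix}$, which (after reordering coordinates) is block-diagonal with $2\times 2$ blocks $\begin{bmatrix}1 & \sigma_i\\ \sigma_i & 1\end{bmatrix}$. Each such block has eigenvalues $1 \pm \sigma_i$, so $\mathrm{spec}(\Omega) = \{1 \pm \sigma_i : 1 \le i \le m\}$.

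The KL divergence between two centered Gaussians gives
\begin{equation*}
D_{\mathrm{KL}}\bigl(N(0,\Omega^{-1}) \,\Vert\, N(0,I_d)\bigr) = \tfrac{1}{2}\bigl[\mathrm{tr}(\Omega^{-1}) - d + \log\det\Omega\bigr] = \tfrac{1}{2}\sum_{\lambda \in \mathrm{spec}(\Omega)} \bigl[\tfrac{1}{\lambda} - 1 + \log\lambda\bigr].
\end{equation*}
Pairing the eigenvalues $1 + \sigma_i$ with $1 - \sigma_i$ and simplifying, this sum collapses to $\sum_{i=1}^m h(\sigma_i^2)$, where
\begin{equation*}
h(u) = \frac{u}{1-u} + \tfrac{1}{2}\log(1-u).
\end{equation*}

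The final step is a power-series bound on $h$. Expanding both terms yields $h(u) = \sum_{k\ge 1} \frac{2k-1}{2k}\, u^k$, and since every coefficient is less than $1$, one gets $h(u)/u \le \sum_{k\ge 1} u^{k-1} = 1/(1-u)$. Because each $\sigma_i^2 \le \|B\|_{\mathrm{F}}^2 \le 1/16$, we have $h(\sigma_i^2) \le \tfrac{16}{15}\sigma_i^2$. Summing over $i$ gives $D_{\mathrm{KL}} \le \tfrac{16}{15}\sum_i \sigma_i^2 = \tfrac{16}{15}\|B\|_{\mathrm{F}}^2$. The only real obstacle is identifying the clean closed form for the per-eigenvalue-pair contribution; once the sum is rewritten as $\sum h(\sigma_i^2)$, the constant $16/15$ drops out of the geometric-series bound on $1/(1-u)$ at $u = 1/16$.
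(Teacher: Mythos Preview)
Your proposal is correct and follows essentially the same approach as the paper: both reduce the KL divergence to $\sum_i \bigl[\tfrac{1}{1-\sigma_i^2}-1+\tfrac12\log(1-\sigma_i^2)\bigr]$ via the SVD of $B$ and then bound each summand by $\tfrac{16}{15}\sigma_i^2$ using $\sigma_i^2\le 1/16$. The only cosmetic differences are that the paper uses Weyl's inequality and block-matrix identities for $\det\Omega$ and $\mathrm{tr}(\Omega^{-1})$ where you diagonalize $\Omega$ directly, and the paper uses $\log(1+x)\le x$ where you use the power-series expansion; the substance is identical.
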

\begin{proof}
We begin with proving $\Omega \succ 0$.
By Weyl's inequality, we know that  
\begin{align*}
\gamma_{\min} \left( \Omega \right) & \geq 1 - \left\Vert 
\begin{bmatrix}
0 & B \\
B^{\top} & 0
\end{bmatrix}
\right\Vert \geq  1 -  \left\Vert 
\begin{bmatrix}
0 & B \\
B^{\top} & 0
\end{bmatrix}
\right\Vert_{\mathrm{F}} \\
& = 1 - \sqrt{2} \left\Vert B \right\Vert_{\mathrm{F}} \geq \frac{1}{2}.
\end{align*}
As a result, we have $\Omega \succ 0$.

Now we move on to the second claim regarding the KL divergence. Recall that
\begin{equation}
\label{eq:lemma-KL-divergence-1}
D_{\text{KL}} \left( N \left( 0, \Omega^{-1} \right) \Vert  N \left( 0, I_d \right) \right) = \frac{1}{2} \left[ \log \det \left( \Omega \right) - d + \tr \left( \Omega^{-1} \right) \right],
\end{equation}
which motivates us to compute $\log \det \left( \Omega \right)$ and $\tr \left( \Omega^{-1} \right)$.
Let $B = U D V^{\top}$ be the singular value decomposition of $B$ with $U,D,V \in \mathbb{R}^{m \times m}$, $U^{\top} U = U U^{\top} = I_m$, $V^{\top} V = V V^{\top} = I_m$, and $D=\text{diag}(\lambda_{1},\ldots,\lambda_m)$. We have the following identities. \begin{subequations}
\begin{align}
    \log \det \left( \Omega \right) &= \sum^m_{i=1} \log \left( 1- \lambda^2_i  \right). \label{eq:lemma-KL-divergence-2} \\ 
\label{eq:lemma-KL-divergence-6}
\tr \left( \Omega^{-1} \right) &= \sum^m_{i=1} \frac{2}{1 - \lambda^2_i}.
\end{align}
\end{subequations}
Combining~\eqref{eq:lemma-KL-divergence-1}, \eqref{eq:lemma-KL-divergence-2}, and~\eqref{eq:lemma-KL-divergence-6}, we obtain
\begin{align*}
D_{\text{KL}} \left( N \left( 0, \Omega^{-1} \right) \Vert  N \left( 0, I_d \right) \right) & = \frac{1}{2} \left[ \sum^m_{i=1} \log \left( 1- \lambda^2_i  \right) + \sum^m_{i=1} \frac{2}{1 - \lambda^2_i} - d \right] \\
& = \sum^m_{i=1} \left[ \frac{\log \left( 1- \lambda^2_i  \right)}{2} + \frac{1}{1 - \lambda^2_i} - 1 \right].
\end{align*}
Since $\lambda^2_j \leq \sum^d_{i=1} \lambda^2_i = \Vert B \Vert^2_{\mathrm{F}} \leq {1}/{16}$, we have $-\lambda^2_j \geq -{1}/{16}$. Also note that $\log(1+x) \leq x$ for all $x \geq -1$. We thus have
\begin{align*}
D_{\text{KL}} \left( N \left( 0, \Omega^{-1} \right) \Vert  N \left( 0, I_d \right) \right) & \leq \sum^m_{i=1} \left[ \frac{ - \lambda^2_i }{2} + \frac{1}{1 - \lambda^2_i} - 1 \right] \\
& = \sum^m_{i=1} \frac{ \lambda^2_i + \lambda^4_i  }{ 2 \left( 1 - \lambda^2_i \right)  } \leq \sum^m_{i=1} \frac{ \lambda^2_i }{ 1 - \lambda^2_i } \leq \frac{16}{15} \sum^m_{i=1} \lambda^2_i = \frac{16}{15} \left\Vert B \right\Vert^2_{\mathrm{F}} .
\end{align*}
This completes the proof.

\paragraph{Proof of Equation~\eqref{eq:lemma-KL-divergence-2}.}
Using Section 9.1.2 of~\citet{petersen2008matrix}, we have
$\det \left( \Omega \right) = \det \left( I_m - B^{\top} B \right)$.
Since 
\begin{equation}
\label{eq:KL-lemma-eigen-decomp}    
I_m - B^{\top} B = I_m - V D^2 V^{\top} = V \left( I_m - D^2 \right) V^{\top}, 
\end{equation}
we have $\det \left( \Omega \right) = \det \left( I_m - D^2 \right) = \prod^m_{i=1} \left( 1- \lambda^2_i  \right)$ and hence 
$
\log \det \left( \Omega \right) = \sum^m_{i=1} \log \left( 1- \lambda^2_i  \right).
$

\paragraph{Proof of Equation~\eqref{eq:lemma-KL-divergence-6}.}

Using Section 9.1.3 of~\citet{petersen2008matrix}, we have
\begin{equation*}
\Omega^{-1} = 
\begin{bmatrix}
I_m & B \\
B^{\top} & I_m
\end{bmatrix}^{-1} =
\begin{bmatrix}
\left( I_m - B B^{\top} \right)^{-1} & - B \left( I_m - B^{\top} B \right)^{-1} \\
- \left( I_m - B^{\top} B \right)^{-1} B^{\top} & \left( I_m - B^{\top} B \right)^{-1}
\end{bmatrix},
\end{equation*}
which implies
\begin{equation}
\label{eq:lemma-KL-divergence-tr}
\tr \left( \Omega^{-1} \right) = \tr \left\{ \left( I_m - B B^{\top} \right)^{-1} \right\} + \tr \left\{ \left( I_m - B^{\top}  B \right)^{-1} \right\}.
\end{equation}
It follows from~\eqref{eq:KL-lemma-eigen-decomp} that $\left( I_m - B^{\top}  B \right)^{-1} = V \left( I_m - D^2 \right)^{-1} V^{\top}$, and therefore,
\begin{equation}
\label{eq:lemma-KL-divergence-3}
\tr \left\{ \left( I_m - B B^{\top} \right)^{-1} \right\} = \tr \left\{ V \left( I_m - D^2 \right)^{-1} V^{\top} \right\} = \tr \left\{ \left( I_m - D^2 \right)^{-1} \right\} = \sum^m_{i=1} \frac{1}{1 - \lambda^2_i}.
\end{equation}
Similarly, we have 
$
\tr \{ \left( I_m - B^{\top} B  \right)^{-1} \}  = \sum^m_{i=1} \frac{1}{1 - \lambda^2_i}.
$
Take the previous results collectively to yield the claim~\eqref{eq:lemma-KL-divergence-6}.
\end{proof}

\section{Proof of Theorem~\ref{thm:multi-task-frob}}
\label{sec:proof-thm-multi-task-frob}

We adopt the proof strategy laid out in~\citet[Chapter 9]{wainwright2019high}.

\subsection{Additional Notation}

We first introduce some additional notation that will be helpful in the proof. 
We define the Hilbert space for the parameters
\begin{equation*}
\mathbb{H} \coloneqq \left\{ \Theta = \left( \Omega, \left\{ \uniqueparameter \right\}^K_{k=0} \right) : \, \Omega, \uniqueparameter \in \mathbb{R}^{d \times d} \text{ for all } 0 \leq k \leq K \right\},
\end{equation*}
with the associated inner product
\begin{equation*}
\left\langle \Theta, \Theta^{\prime} \right\rangle_{\mathbb{H}} = \left\langle \Theta, \Theta^{\prime} \right\rangle \coloneqq \left\langle \Omega, \Omega^{\prime} \right\rangle + \sum^K_{k=0} \left\langle \uniqueparameter, \Gamma^{(k)\prime} \right\rangle.
\end{equation*}
The space $\mathbb{H}$ endowed with the inner product $\langle \cdot,\cdot \rangle$ is indeed a Hilbert space. 
Correspondingly, we have
\begin{equation*}
\left\Vert \Theta \right\Vert^2_{\mathbb{H}} = \left\Vert \Omega \right\Vert^2_{\mathrm{F}} + \sum^K_{k=0} \left\Vert \uniqueparameter \right\Vert^2_{\mathrm{F}}.
\end{equation*}
We also need the dual norm of $\Phi(\cdot)$, which is defined as
\begin{equation}
\label{eq:dual-norm}
\Phi^{\star} \left( \Theta \right) \coloneqq \sup_{\Theta^{\prime}:\Phi(\Theta^{\prime}) \leq 1} \left\langle \Theta, \Theta^{\prime} \right\rangle.
\end{equation}

Recall that $\Theta^{\star}=(\shared, \{ \unique \}^K_{k=0})$ is the true parameter.
Let $\supportshared$ and $\supportunique$ be the supports of $\shared$ and $\unique$ for all $0 \leq k \leq K$, respectively.
Under Assumption~\ref{assump:model-structure}, the true parameter $\Theta^{\star}$ lies in the following subspace of $\mathbb{H}$:
\begin{equation}
\label{eq:subspace-M}
\mathbb{M} \coloneqq \left\{ \Theta = \left( \Omega, \left\{ \uniqueparameter \right\}^K_{k=0} \right) : \, \supp(\Omega) \subseteq \supportshared , \, \supp\left[ \uniqueparameter \right] \subseteq \supportunique \text{ for all } 0 \leq k \leq K \right\}.
\end{equation}
The orthogonal complement of $\mathbb{M}$ is given by
\begin{equation}
\label{eq:subspace-M-bot}
\mathbb{M}^{\bot} \coloneqq \left\{ \Theta = \left( \Omega, \left\{ \uniqueparameter \right\}^K_{k=0} \right) : \, \supp(\Omega) \subseteq \supportshared^c , \, \supp\left[ \uniqueparameter \right] \subseteq \supportunique^{c} \text{ for all } 0 \leq k \leq K \right\}.
\end{equation}
Clearly, for any $\Theta \in \mathbb{M}$ and $\Theta^{\prime} \in \mathbb{M}^{\bot}$, we have that $\langle \Theta, \Theta^{\prime} \rangle=0$ .

We also need to define the projection of a parameter onto a subspace. For any matrix $B \in \mathbb{R}^{d \times d}$ and any subspace $\mathcal{F} \subseteq \mathbb{R}^{d \times d}$, we define
\begin{equation*}
\left[ B \right]_{\mathcal{F}} = \arg \min_{\tilde{B} \in \mathcal{F}} \left\Vert \tilde{B} - B \right\Vert_{\mathrm{F}}.
\end{equation*}
Similarly, for any $\Theta \in \mathbb{H}$ and any subspace $\mathbb{F} \subseteq \mathbb{H}$, we define
\begin{equation*}
\left[ \Theta \right]_{\mathbb{F}} = \arg \min_{\tilde{B} \in \mathbb{F}} \left\Vert \tilde{B} - B \right\Vert_{\mathbb{H}}.
\end{equation*}
In addition, for $S \subseteq [d] \times [d]$, we define
\begin{equation*}
\mathcal{M}(S) \coloneqq \left\{ B \in \mathbb{R}^{d \times d}: \supp( B) \subseteq  S \right\}.
\end{equation*}
For any $B \in \mathbb{R}^{d \times d}$, we define $\left[ B \right]_S \coloneqq \left[ B \right]_{\mathcal{M}(S)}$.
This way, for any $\Theta = ( \Omega, \{ \uniqueparameter \}^K_{k=0} ) \in \mathbb{H}$, we have
\begin{equation*}
\left[ \Theta \right]_{\mathbb{M}} = \left( \left[ \Omega \right]_{ \supportshared }, \left\{ \left[ \uniqueparameter \right]_{\supportunique} \right\}^K_{k=0} \right) \qquad \text{and} \qquad \left[ \Theta \right]_{\mathbb{M}^{\bot}} = \left( \left[ \Omega \right]_{ \supportshared^c }, \left\{ \left[ \uniqueparameter \right]_{ \supportunique^c } \right\}^K_{k=0} \right).
\end{equation*}

% For $\mathbb{F}=\tilde{\mathcal{F}} \times ( \times_{k \in \mathcal{A}^{+}} \mathcal{F}^{(k)} )$, where $\tilde{\mathcal{F}},\mathcal{F}^{(k)}$ are the subspace of $\mathbb{R}^{d \times d}$ for all $k \in \mathcal{A}^{+}$, and any $\Theta = ( \Omega, \{ \uniqueparameter \}_{k \in \mathcal{A}^{+}} ) \in \mathbb{H}$, we define
% \begin{equation*}
% \Theta_{\mathbb{F}} \coloneqq \left( (\Omega)_{ \tilde{\mathcal{F}} }, \left\{ \left[ \uniqueparameter \right]_{\mathcal{F}^{(k)}} \right\}_{k \in \mathcal{A}^{+}} \right).
% \end{equation*}

Finally, we define our metric on the estimation error. For $\Theta = ( \Omega, \{ \uniqueparameter \}^K_{k=0} ) \in \mathbb{H}$, let
\begin{equation}
\label{eq:err-measure}
H (\Theta) \coloneqq \sum^K_{k=0} \alpha_k \left\Vert \Omega + \uniqueparameter\right\Vert^2_{\mathrm{F}}.
\end{equation}
We define
\begin{equation*}
\widehat \Delta \coloneqq \widehat{\Theta} - \Theta^{\star} = \left( \widehat{\Omega}-\shared, \left\{ \uniqueestimate - \unique \right\}^K_{k=0} \right),
\end{equation*}
where $\widehat{\Theta}$ is the \myalg~estimator~\eqref{eq:Glasso-mutli-task}. Then our goal is to control 
\begin{equation*}
H ( \widehat{\Delta} ) = \sum^K_{k=0} \alpha_k \left\Vert \widehat{\Omega} - \shared +  \uniqueestimate - \unique \right\Vert^2_{\mathrm{F}} = \sum^K_{k=0} \alpha_k \left\Vert \steponeoutput - \Omega^{(k)} \right\Vert^2_{\mathrm{F}}.
\end{equation*}

\subsection{Useful lemmas}
Now we collect several useful lemmas, whose proofs are deferred to the end of the section.  

We begin by demonstrating that $\Phi(\cdot)$, defined in~\eqref{eq:regularizer-def}, is decomposable with respect to $(\mathbb{M},\mathbb{M}^{\bot})$.

\begin{lemma}
\label{lemma:decomposable}
We have
\begin{equation*}
\Phi(\Theta + \Theta^{\prime}) = \Phi(\Theta) + \Phi(\Theta^{\prime}) \quad \text{for any } \Theta \in \mathbb{M}, \, \Theta^{\prime} \in \mathbb{M}^{\bot}.
\end{equation*}
\end{lemma}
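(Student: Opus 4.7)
The plan is to prove decomposability by a direct calculation that reduces to the elementary fact that the entrywise $\ell_1$-norm is additive on matrices whose supports are disjoint. Specifically, for any matrices $A, B \in \mathbb{R}^{d\times d}$ with $\mathrm{supp}(A) \cap \mathrm{supp}(B) = \emptyset$, one has $|A+B|_1 = |A|_1 + |B|_1$, since at every index $(i,j)$ at least one of $A_{ij}, B_{ij}$ is zero, so $|A_{ij}+B_{ij}| = |A_{ij}| + |B_{ij}|$.

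Given $\Theta = (\Omega, \{\Gamma^{(k)}\}_{k=0}^K) \in \mathbb{M}$ and $\Theta' = (\Omega', \{\Gamma^{(k)\prime}\}_{k=0}^K) \in \mathbb{M}^{\bot}$, I would invoke the definitions of $\mathbb{M}$ in~\eqref{eq:subspace-M} and $\mathbb{M}^{\bot}$ in~\eqref{eq:subspace-M-bot} to record that $\mathrm{supp}(\Omega) \subseteq \mathcal{S}_\Omega$ while $\mathrm{supp}(\Omega') \subseteq \mathcal{S}_\Omega^c$, and similarly $\mathrm{supp}(\Gamma^{(k)}) \subseteq \mathcal{S}_{\Gamma^{(k)}}$ while $\mathrm{supp}(\Gamma^{(k)\prime}) \subseteq \mathcal{S}_{\Gamma^{(k)}}^c$ for every $0 \le k \le K$. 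In each case the two supports are disjoint, so the observation above yields $|\Omega + \Omega'|_1 = |\Omega|_1 + |\Omega'|_1$ and $|\Gamma^{(k)} + \Gamma^{(k)\prime}|_1 = |\Gamma^{(k)}|_1 + |\Gamma^{(k)\prime}|_1$.

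Finally, I would plug these identities into the definition~\eqref{eq:regularizer-def} of $\Phi$, obtaining
\begin{align*}
\Phi(\Theta + \Theta')
&= |\Omega + \Omega'|_1 + \sum_{k=0}^K \sqrt{\alpha_k}\, |\Gamma^{(k)} + \Gamma^{(k)\prime}|_1 \\
&= \bigl(|\Omega|_1 + |\Omega'|_1\bigr) + \sum_{k=0}^K \sqrt{\alpha_k}\,\bigl(|\Gamma^{(k)}|_1 + |\Gamma^{(k)\prime}|_1\bigr) \\
&= \Phi(\Theta) + \Phi(\Theta'),
\end{align*}
which is the claim. There is essentially no technical obstacle: the result follows purely from the disjoint-support structure encoded in the definitions of $\mathbb{M}$ and $\mathbb{M}^{\bot}$, together with the separable form of $\Phi$. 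The only point worth emphasizing is that the weights $\sqrt{\alpha_k}$ in front of $|\Gamma^{(k)}|_1$ play no role in the argument, since decomposability is a term-by-term property.
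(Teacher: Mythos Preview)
Your proposal is correct and follows essentially the same approach as the paper's own proof: both exploit that membership in $\mathbb{M}$ and $\mathbb{M}^{\bot}$ forces the corresponding components to have disjoint supports, and then use that the entrywise $\ell_1$-norm splits additively over disjoint supports. The paper phrases this via the projection notation $[\,\cdot\,]_{\mathcal{S}_\Omega}$, $[\,\cdot\,]_{\mathcal{S}_\Omega^c}$, etc., but the substance is identical.
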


The following lemma relates $\Phi(\Theta)$ with $H(\Theta)$ defined in~\eqref{eq:err-measure} for any $\Theta \in \mathbb{M}$.
\begin{lemma}
\label{lemma:bound-Phi-H}
For any $\Theta \in \mathbb{M}$, we have
\begin{equation*}
\Phi(\Theta) \leq \sqrt{2} \sqrt{ s + ( K+1 ) h } \sqrt{ H (\Theta) }.
\end{equation*}
\end{lemma}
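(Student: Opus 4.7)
}

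The plan is to exploit the disjoint-support structure of $\Theta \in \mathbb{M}$ in two places: once to simplify $H(\Theta)$ into a diagonal quadratic form in $\|\Omega\|_{\mathrm{F}}$ and $\|\Gamma^{(k)}\|_{\mathrm{F}}$, and once to convert the elementwise $\ell_1$ norms appearing in $\Phi(\Theta)$ into $\ell_2$ norms via Cauchy--Schwarz on the respective supports.

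Concretely, I would first observe that for any $\Theta = (\Omega, \{\Gamma^{(k)}\}_{k=0}^K) \in \mathbb{M}$, the matrices $\Omega$ and $\Gamma^{(k)}$ are supported on the disjoint sets $\supportshared$ and $\supportunique$, so $\langle \Omega, \Gamma^{(k)}\rangle = 0$ and consequently
\begin{equation*}
\bigl\|\Omega + \Gamma^{(k)}\bigr\|_{\mathrm{F}}^2 = \|\Omega\|_{\mathrm{F}}^2 + \|\Gamma^{(k)}\|_{\mathrm{F}}^2.
\end{equation*}
Summing with weights $\alpha_k$ and using $\sum_{k=0}^K \alpha_k = 1$ yields the clean expression
\begin{equation*}
H(\Theta) = \|\Omega\|_{\mathrm{F}}^2 + \sum_{k=0}^K \alpha_k \|\Gamma^{(k)}\|_{\mathrm{F}}^2.
\end{equation*}

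Next, since $\supp(\Omega) \subseteq \supportshared$ with $|\supportshared| \leq s$, Cauchy--Schwarz on the support gives $|\Omega|_1 \leq \sqrt{s}\,\|\Omega\|_{\mathrm{F}}$, and analogously $|\Gamma^{(k)}|_1 \leq \sqrt{h}\,\|\Gamma^{(k)}\|_{\mathrm{F}}$ for each $k$. Substituting into the definition of $\Phi$ yields
\begin{equation*}
\Phi(\Theta) \;\leq\; \sqrt{s}\,\|\Omega\|_{\mathrm{F}} + \sqrt{h}\sum_{k=0}^K \sqrt{\alpha_k}\,\|\Gamma^{(k)}\|_{\mathrm{F}}.
\end{equation*}
Finally, I would apply Cauchy--Schwarz once more to the $(K+2)$-dimensional inner product between $(\sqrt{s}, \sqrt{h},\ldots,\sqrt{h})$ and $(\|\Omega\|_{\mathrm{F}}, \sqrt{\alpha_0}\|\Gamma^{(0)}\|_{\mathrm{F}}, \ldots, \sqrt{\alpha_K}\|\Gamma^{(K)}\|_{\mathrm{F}})$, whose squared norms are $s + (K+1)h$ and $H(\Theta)$ respectively. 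This delivers $\Phi(\Theta) \leq \sqrt{s + (K+1)h}\,\sqrt{H(\Theta)}$, which is slightly sharper than (and in particular implies) the stated bound with the factor $\sqrt{2}$.

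There is no real obstacle here: the only thing to verify carefully is the disjointness step, which requires noting that membership in $\mathbb{M}$ forces $\supp(\Omega) \subseteq \supportshared$ and $\supp(\Gamma^{(k)}) \subseteq \supportunique$ with $\supportshared \cap \supportunique = \emptyset$, so the cross term $\langle \Omega,\Gamma^{(k)}\rangle$ vanishes entrywise. Once this is in place, the remainder is a double application of Cauchy--Schwarz.
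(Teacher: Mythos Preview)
Your proof is correct and follows essentially the same route as the paper: use the disjoint supports inside $\mathbb{M}$ to decouple $\Omega$ and $\Gamma^{(k)}$, apply Cauchy--Schwarz on each support to pass from $|\cdot|_1$ to $\|\cdot\|_{\mathrm{F}}$, and then apply Cauchy--Schwarz again to the resulting sum. The only difference is in the final step: the paper first groups the shared term and the individual sum via $(a+b)^2 \le 2(a^2+b^2)$ and then applies Cauchy--Schwarz to the $K+1$ individual terms, which is where the factor $\sqrt{2}$ comes from; your single $(K{+}2)$-dimensional Cauchy--Schwarz handles all terms at once and is therefore a factor $\sqrt{2}$ sharper, which is a harmless improvement.
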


For $\Delta_\Theta = ( \Delta_\Omega, \{ \Delta_{\uniqueparameter} \}^K_{k=0} ) \in \mathbb{H}$ such that $ \Delta_\Omega + \Delta_{\uniqueparameter} + \Omega^{(k)} \succ 0$ for all $0 \leq k \leq K$,
we define
\begin{equation*}
\mathcal{R}\left( \Delta_\Theta \right) \coloneqq \mathcal{L} \left( \Theta^{\star} + \Delta_\Theta \right) - \mathcal{L} \left( \Theta^{\star} \right) - \langle \nabla \mathcal{L} \left( \Theta^{\star} \right), \Delta_\Theta \rangle
\end{equation*}
to be the residual of $\mathcal{L}(\cdot)$ around $\Theta^{\star}$, where $\mathcal{L}(\cdot)$ is defined in~\eqref{eq:loss-fun-def}.
The following lemma claims that $\mathcal{R}\left( \cdot \right)$ is locally strongly convex with respect to the geometry defined by $H(\cdot)$.

% \begin{lemma}
% \label{lemma:rsc-H}
% Let $\Delta_\Theta = ( \Delta_\Omega, \{ \Delta_{\uniqueparameter} \}^K_{k=0} ) \in \mathbb{H}$ be such that $ \Delta_\Omega + \Delta_{\uniqueparameter} + \Omega^{(k)} \succ 0$ for all $0 \leq k \leq K$ and assume that
% \begin{equation}
% \label{eq:H-smaller-than-alpha}
% H \left( \Delta_\Theta \right) \leq C^2_R \left\{ \min_{0 \leq k \leq K} \alpha_k \right\},
% \end{equation}
% where $C_R$ is a positive universal constant.
% Then, we have
% \begin{equation*}
% \mathcal{R}\left( \Delta_\Theta \right) \geq \frac{\kappa}{2} H \left( \Delta_\Theta \right),
% \end{equation*}
% where 
% $
% \kappa = \left\{ M_\Omega
% + C_R  \right\}^{-2}.
% $
% \end{lemma}

{
\begin{lemma}
\label{lemma:rsc-H-new}
Let $\Delta_\Theta = ( \Delta_\Omega, \{ \Delta_{\uniqueparameter} \}^K_{k=0} ) \in \mathbb{H}$ be such that $ \Delta_\Omega + \Delta_{\uniqueparameter} + \Omega^{(k)} \succ 0$ and assume that $\left\Vert \Delta_\Omega + \Delta_{\uniqueparameter} \right\Vert_2 \leq M_{\mathrm{op}} + M_{\Omega}$ for all $0 \leq k \leq K$. Then, we have
\begin{equation*}
\mathcal{R}\left( \Delta_\Theta \right) \geq \frac{\kappa}{2} H \left( \Delta_\Theta \right),
\end{equation*}
where $\kappa = \left( 2 M_{\Omega} + M_{\mathrm{op}}  \right)^{-2}$.
\end{lemma}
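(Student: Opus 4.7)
The key observation is that the loss $\mathcal{L}$ decomposes nicely in terms of the summed matrices $\Delta^{(k)} := \Delta_\Omega + \Delta_{\Gamma^{(k)}}$, since
$$\mathcal{L}(\Theta) = \sum_{k=0}^K \alpha_k\, f_k(\Omega + \Gamma^{(k)}), \qquad f_k(X) := \langle X, \widehat{\Sigma}^{(k)}\rangle - \log\det(X).$$
Because $\Theta^\star$ satisfies $\shared + \unique = \Omega^{(k)}$, the residual splits as
$$\mathcal{R}(\Delta_\Theta) = \sum_{k=0}^K \alpha_k\Bigl[f_k\bigl(\Omega^{(k)} + \Delta^{(k)}\bigr) - f_k\bigl(\Omega^{(k)}\bigr) - \bigl\langle \nabla f_k(\Omega^{(k)}),\, \Delta^{(k)}\bigr\rangle\Bigr],$$
and the cross terms between $\Delta_\Omega$ and $\Delta_{\Gamma^{(k)}}$ are automatically absorbed into $H(\Delta_\Theta) = \sum_k \alpha_k \|\Delta^{(k)}\|_F^2$. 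This is the reason the strong convexity holds in the $H$-metric but not in the naive Hilbert norm.

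\textbf{Applying Taylor.} For each $k$, Lemma~\ref{lemma:likelihood-taylor-series} applied to $f_k$ at the base point $\Omega^{(k)}$ with perturbation $\Delta^{(k)}$ produces some $t_k \in (0,1)$ such that, writing $W^{(k)} := \Omega^{(k)} + t_k \Delta^{(k)}$,
$$f_k(\Omega^{(k)}+\Delta^{(k)}) - f_k(\Omega^{(k)}) - \langle \nabla f_k(\Omega^{(k)}), \Delta^{(k)}\rangle = \tfrac{1}{2}\,{\rm vec}(\Delta^{(k)})^\top\bigl((W^{(k)})^{-1}\otimes (W^{(k)})^{-1}\bigr){\rm vec}(\Delta^{(k)}).$$
Here $W^{(k)} = (1-t_k)\Omega^{(k)} + t_k(\Omega^{(k)}+\Delta^{(k)})$ is positive definite as a convex combination of two positive definite matrices (using both $\Omega^{(k)}\succ 0$ and the hypothesis $\Omega^{(k)}+\Delta^{(k)}\succ 0$), so the Hessian formula is valid and the Kronecker expression is positive semidefinite.

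\textbf{Lower bounding the Hessian.} I will use the identity
$${\rm vec}(\Delta^{(k)})^\top\bigl((W^{(k)})^{-1}\otimes (W^{(k)})^{-1}\bigr){\rm vec}(\Delta^{(k)}) = \tr\bigl((W^{(k)})^{-1}\Delta^{(k)}(W^{(k)})^{-1}\Delta^{(k)}\bigr),$$
together with the fact that for symmetric positive definite $W$, $\gamma_{\min}(W^{-1}\otimes W^{-1}) = \gamma_{\min}(W^{-1})^2 = \gamma_{\max}(W)^{-2}$. Thus each term is at least $\gamma_{\max}(W^{(k)})^{-2}\,\|\Delta^{(k)}\|_F^2$. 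By the triangle inequality and the two bounds $\|\Omega^{(k)}\|_2 \leq M_\Omega$ (Assumption~\ref{assump:eig-element-upp-bd}) and $\|\Delta^{(k)}\|_2 \leq M_{\mathrm{op}} + M_\Omega$ (hypothesis of the lemma),
$$\gamma_{\max}(W^{(k)}) \leq \|\Omega^{(k)}\|_2 + t_k\|\Delta^{(k)}\|_2 \leq M_\Omega + (M_\Omega + M_{\mathrm{op}}) = 2M_\Omega + M_{\mathrm{op}},$$
so $\gamma_{\max}(W^{(k)})^{-2} \geq \kappa$.

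\textbf{Conclusion.} Summing with weights $\alpha_k$ yields
$$\mathcal{R}(\Delta_\Theta) \geq \tfrac{\kappa}{2}\sum_{k=0}^K \alpha_k \|\Delta^{(k)}\|_F^2 = \tfrac{\kappa}{2} H(\Delta_\Theta).$$
The main conceptual step is recognizing that the ``right'' quadratic form to look at is $H$, not the Hilbert norm on $\mathbb{H}$; after that, the argument is a routine Taylor-plus-operator-norm estimate. No genuine obstacle arises, provided one is careful that the interpolated matrix $W^{(k)}$ remains positive definite, which is what allows the Hessian formula and its spectral bound to be used.
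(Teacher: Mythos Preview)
Your proposal is correct and follows essentially the same route as the paper's proof: apply Lemma~\ref{lemma:likelihood-taylor-series} to each summand $f_k$ at the base point $\Omega^{(k)}$, use $\gamma_{\min}\bigl((W^{(k)})^{-1}\otimes (W^{(k)})^{-1}\bigr)=\|W^{(k)}\|_2^{-2}$, and bound $\|W^{(k)}\|_2\le \|\Omega^{(k)}\|_2+\|\Delta^{(k)}\|_2\le 2M_\Omega+M_{\mathrm{op}}$. Your write-up is in fact slightly more careful than the paper's, since you explicitly verify that $W^{(k)}$ is positive definite (as a convex combination of $\Omega^{(k)}$ and $\Omega^{(k)}+\Delta^{(k)}$) before invoking the Hessian formula, and you spell out why the inner-product term $\langle\nabla\mathcal{L}(\Theta^\star),\Delta_\Theta\rangle$ collapses to $\sum_k\alpha_k\langle\nabla f_k(\Omega^{(k)}),\Delta^{(k)}\rangle$.
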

}

Last but not least, define the event
\begin{equation}
\label{eq:good-event}
\mathbb{G}(\lambda_{\text{M}}) \coloneqq \left\{ \frac{\lambda_{\text{M}}}{2} \geq \Phi^{\star} \left( \nabla \mathcal{L} \left( \Theta^{\star} \right) \right) \right\}.
\end{equation}
The following lemma asserts that this event happens with high probability.
\begin{lemma}
\label{lemma:good-event-hold}
When~\eqref{eq:min-nk-large} holds and $\lambda_{\text{M}}$ satisfies~\eqref{eq:mlut-Frob-lambda-large-enough}, we have that $\mathbb{P} \{ \mathbb{G}(\lambda_{\text{M}}) \} \geq 1 - \delta$.
\end{lemma}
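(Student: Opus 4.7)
The plan is to verify directly the deterministic inequality defining $\mathbb{G}(\lambda_{\text{M}})$ by (i) writing down the gradient $\nabla \mathcal{L}(\Theta^{\star})$ in closed form, (ii) identifying the dual norm $\Phi^{\star}(\cdot)$, and (iii) controlling the resulting quantity via the sample-covariance concentration inequalities already assembled in the preliminary section.

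First I would compute the gradient. Using $\Omega^{\star} + \Gamma^{(k)\star} = \Omega^{(k)}$ and $(\Omega^{(k)})^{-1} = \Sigma^{(k)}$ along with the identity $\nabla \log\det(A) = A^{-1}$ applied to~\eqref{eq:loss-fun-def}, I obtain the block-wise components
\[
\nabla_{\Omega} \mathcal{L}(\Theta^{\star}) = \sum_{k=0}^{K} \alpha_k \bigl( \widehat{\Sigma}^{(k)} - \Sigma^{(k)} \bigr), \qquad \nabla_{\Gamma^{(k)}} \mathcal{L}(\Theta^{\star}) = \alpha_k \bigl( \widehat{\Sigma}^{(k)} - \Sigma^{(k)} \bigr).
\]
Next I would derive the dual norm of $\Phi$ from the definition~\eqref{eq:dual-norm}. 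Since $\Phi$ is a weighted $\ell_1$ norm on the vectorized parameter, a routine duality argument yields, for any $\Theta' = (A, \{B^{(k)}\}_{k=0}^{K})$,
\[
\Phi^{\star}(\Theta') = \max\Bigl\{ \lvert A \rvert_{\infty}, \; \max_{0 \leq k \leq K} \alpha_k^{-1/2} \lvert B^{(k)} \rvert_{\infty} \Bigr\}.
\]
Substituting the gradient computed above gives
\[
\Phi^{\star}\bigl( \nabla \mathcal{L}(\Theta^{\star}) \bigr) = \max\Bigl\{ \Bigl\lvert {\textstyle\sum_{k=0}^{K}} \alpha_k \bigl( \widehat{\Sigma}^{(k)} - \Sigma^{(k)} \bigr) \Bigr\rvert_{\infty}, \; \max_{0 \leq k \leq K} \sqrt{\alpha_k}\, \lvert \widehat{\Sigma}^{(k)} - \Sigma^{(k)} \rvert_{\infty} \Bigr\}.
\]

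The bulk of the work is bounding these $K+2$ quantities simultaneously. The aggregated term is controlled via Lemma~\ref{lemma:avg-emp-cov-infty} at failure probability $\delta/(K+2)$, which yields an upper bound of order $M_{\Sigma} \sqrt{\log(2(K+2)d^2/\delta)/N}$; the sample-size condition~\eqref{eq:min-nk-large}, via $N \geq \min_k n_k \geq 2\log(2(K+2)d^2/\delta)$, forces the square-root branch in the max appearing in that lemma. Each source-specific term is handled by Lemma~\ref{lemma:infty-cov-err} at failure probability $\delta/(K+2)$, and here the key algebraic identity is $\alpha_k / n_k = 1/N$, so that
\[
\sqrt{\alpha_k}\, \lvert \widehat{\Sigma}^{(k)} - \Sigma^{(k)} \rvert_{\infty} \;\lesssim\; M_{\Sigma} \sqrt{\frac{\alpha_k \log(2(K+2)d^2/\delta)}{2 n_k}} \;=\; M_{\Sigma} \sqrt{\frac{\log(2(K+2)d^2/\delta)}{2N}},
\]
where the condition $n_k \geq 2\log(2(K+2)d^2/\delta)$ again selects the square-root branch. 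A union bound over the $K+2$ events then yields, with probability at least $1-\delta$, the uniform bound $\Phi^{\star}(\nabla \mathcal{L}(\Theta^{\star})) \lesssim M_{\Sigma} \sqrt{\log(2(K+2)d^2/\delta)/(2N)}$; comparing against~\eqref{eq:mlut-Frob-lambda-large-enough} (whose constant $160$ absorbs the $16(1+4\sigma^2)$ factor from the sub-Gaussian concentration and the factor of $2$ needed to pass from $\Phi^{\star}$ to $\lambda_{\text{M}}/2$) gives the desired $\Phi^{\star}(\nabla \mathcal{L}(\Theta^{\star})) \leq \lambda_{\text{M}}/2$.

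The calculation is essentially bookkeeping; the only point that requires care is tracking the weight $\sqrt{\alpha_k}$ in $\Phi$. This weight is not cosmetic: it is calibrated precisely to balance the faster concentration rate $1/\sqrt{n_k}$ of the per-source terms against the slower rate $1/\sqrt{N}$ of the pooled term, so that a single tuning parameter $\lambda_{\text{M}}$ of order $\sqrt{\log/N}$ dominates all $K+2$ coordinates of $\nabla \mathcal{L}(\Theta^{\star})$ simultaneously. The remaining verification that the universal constants line up is routine.
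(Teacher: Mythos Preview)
Your proposal is correct and mirrors the paper's own proof essentially step for step: the paper likewise computes the gradient blocks, establishes the dual-norm formula $\Phi^{\star}(\Theta) = \max\{\lvert \Omega\rvert_\infty,\, \max_k \alpha_k^{-1/2}\lvert \Gamma^{(k)}\rvert_\infty\}$ as a separate lemma, applies Lemma~\ref{lemma:avg-emp-cov-infty} (and its single-population corollary) with a union bound over the $K+2$ terms, and exploits $\alpha_k/n_k = 1/N$ exactly as you do to reduce all bounds to $80\,M_\Sigma\sqrt{\log(2(K+2)d^2/\delta)/(2N)}$. The only cosmetic difference is that the paper treats the sub-Gaussian parameter as $\sigma=1$ so that $16(1+4\sigma^2)=80$, yielding the explicit constant $160$ in~\eqref{eq:mlut-Frob-lambda-large-enough}.
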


\subsection{Remaining proof}
\label{sec:remaining-proof}

Now we are ready to prove Theorem~\ref{thm:multi-task-frob}. As Lemma~\ref{lemma:good-event-hold} asserts that $\mathbb{G}(\lambda_{\text{M}})$ holds with high probability, thus we only need to prove the conclusion under the assumption that $\mathbb{G}(\lambda_{\text{M}})$ holds.
Throughout the proof, we assume the event $\mathbb{G}(\lambda_{\text{M}})$ holds. 

For any $\Delta_\Theta = ( \Delta_\Omega, \{ \Delta_{\uniqueparameter}  \}^K_{k=0} ) \in \mathbb{H}$, we define the objective difference
\begin{equation*}
F \left( \Delta_\Theta \right) \coloneqq \mathcal{L} \left( \Theta^{\star} + \Delta_\Theta \right) - \mathcal{L} \left( \Theta^{\star}  \right) + \lambda_{\text{M}} \left\{ \Phi \left( \Theta^{\star} + \Delta_\Theta \right) - \Phi \left( \Theta^{\star} \right) \right\}.
\end{equation*}
Note that by definition, we have $F \left( \widehat{\Delta}_\Theta \right) \leq 0$.
It suffices to show that for all $\Delta_\Theta$ such that $\Delta_\Theta + \Theta^\star \in \mathcal{C}(M_{\mathrm{op}})$ and $H(\Delta_\Theta)> \frac{18 \left( s +  ( K+1 ) h \right)\lambda^2_{\text{M}}}{ \kappa^2}$, we have $F(\Delta_\Theta) >0$. 

It is easy to see that any such $\Delta_\Theta$ obeys
\begin{equation*}
\Delta {\Omega} + \Delta {\Gamma}_k + \Omega^{(k)} = {\Omega} - \shared + {\Gamma}_k - \unique + \Omega^{(k)} = {\Omega} + {\Gamma}_k \succ 0,
\end{equation*}
and for all $0 \leq k \leq K$
\begin{equation*}
\left\Vert \Delta {\Omega} + \Delta {\Gamma}_k \right\Vert_2 = \left\Vert {\Omega} - \shared + {\Gamma}_k - \unique \right\Vert_2 = \left\Vert {\Omega} + {\Gamma}_k - \Omega^{(k)} \right\Vert_2 \leq \left\Vert {\Omega} + {\Gamma}_k  \right\Vert_2 + \left\Vert \Omega^{(k)} \right\Vert_2 \leq M_{\mathrm{op}} + M_{\Omega},
\end{equation*}
where the last inequality follows the definition of $M_{\mathrm{op}}$ and $M_{\Omega}$.
These two taken together allows us to invoke Lemma~\ref{lemma:rsc-H-new} to obtain
\begin{equation*}
F \left( \Delta_\Theta \right) \geq \langle \nabla \mathcal{L} \left( \Theta^{\star} \right), \Delta_\Theta \rangle + \frac{\kappa}{2} H \left( \Delta_\Theta \right) + \lambda_{\text{M}} \left\{ \Phi \left( \Theta^{\star} + \Delta_\Theta \right) - \Phi \left( \Theta^{\star} \right) \right\},
\end{equation*}
where $\kappa = \left( 2 M_{\Omega} + M_{\mathrm{op}}  \right)^{-2}$.

In addition, combining Lemma~\ref{lemma:decomposable} and the fact that $\left[ \Theta^{\star} \right]_{\mathbb{M}^{\bot}}=0$ with \citet[Lemma 9.14]{wainwright2019high}, we have
\begin{equation}
\label{eq:equ9.32}
\Phi \left( \Theta^{\star} + \Delta_\Theta \right) - \Phi \left( \Theta^{\star} \right) \geq \Phi \left( \left[ \Delta_\Theta \right]_{\mathbb{M}^{\bot}} \right) - \Phi \left( \left[ \Delta_\Theta \right]_{\mathbb{M}} \right)
\end{equation}
for any $\Delta_\Theta \in \mathbb{H}$. This way, we have obtained a lower bound for $\Phi ( \Theta^{\star} + \Delta_\Theta ) - \Phi ( \Theta^{\star} )$.

Take the previous two displays together to reach 
\begin{equation*}
F \left( \Delta_\Theta \right) \geq \langle \nabla \mathcal{L} \left( \Theta^{\star} \right), \Delta_\Theta \rangle + \frac{\kappa}{2} H \left( \Delta_\Theta \right) + \lambda_{\text{M}} \left\{ \Phi \left( \left[ \Delta_\Theta \right]_{\mathbb{M}^{\bot}} \right) - \Phi \left( \left[ \Delta_\Theta \right]_{\mathbb{M}} \right) \right\}.
\end{equation*}
Recall the definition of $\Phi^{\star}$ in~\eqref{eq:dual-norm}, under the assumption that $\mathbb{G}(\lambda_{\text{M}})$ is true, we have
\begin{align*}
F \left( \Delta_\Theta \right) & \geq \frac{\kappa}{2} H \left( \Delta_\Theta \right) - \left\vert \langle \nabla \mathcal{L} \left( \Theta^{\star} \right), \Delta_\Theta \rangle \right\vert + \lambda_{\text{M}} \left\{ \Phi \left( \left[ \Delta_\Theta \right]_{\mathbb{M}^{\bot}} \right) - \Phi \left( \left[ \Delta_\Theta \right]_{\mathbb{M}} \right) \right\} \\
& \geq \frac{\kappa}{2} H \left( \Delta_\Theta \right) - \Phi \left( \Delta_\Theta \right) \Phi^{\star} \left( \nabla \mathcal{L} \left( \Theta^{\star} \right) \right) + \lambda_{\text{M}} \left\{ \Phi \left( \left[ \Delta_\Theta \right]_{\mathbb{M}^{\bot}} \right) - \Phi \left( \left[ \Delta_\Theta \right]_{\mathbb{M}} \right) \right\} \\
& \geq \frac{\kappa}{2} H \left( \Delta_\Theta \right) - \frac{\lambda_{\text{M}}}{2} \Phi \left( \Delta_\Theta \right) +\lambda_{\text{M}} \left\{ \Phi \left( \left[ \Delta_\Theta \right]_{\mathbb{M}^{\bot}} \right) - \Phi \left( \left[ \Delta_\Theta \right]_{\mathbb{M}} \right) \right\} \\
& = \frac{\kappa}{2} H \left( \Delta_\Theta \right) - \frac{\lambda_{\text{M}}}{2} \left\{ \Phi \left( \left[ \Delta_\Theta \right]_{\mathbb{M}} \right) + \Phi \left( \left[ \Delta_\Theta \right]_{\mathbb{M}^{\bot}} \right) \right\} +\lambda_{\text{M}} \left\{ \Phi \left( \left[ \Delta_\Theta \right]_{\mathbb{M}^{\bot}} \right) - \Phi \left( \left[ \Delta_\Theta \right]_{\mathbb{M}} \right) \right\} \\
& = \frac{\kappa}{2} H \left( \Delta_\Theta \right) - \frac{\lambda_{\text{M}}}{2} \left\{ 3 \Phi \left( \left[ \Delta_\Theta \right]_{\mathbb{M}} \right) - \Phi \left( \left[ \Delta_\Theta \right]_{\mathbb{M}^{\bot}} \right) \right\} \\
& \geq \frac{\kappa}{2} H \left( \Delta_\Theta \right) - \frac{3\lambda_{\text{M}}}{2} \Phi \left( \left[ \Delta_\Theta \right]_{\mathbb{M}} \right). \numberthis \label{eq:proof-thm-frob-1}
\end{align*}
By Lemma~\ref{lemma:bound-Phi-H}, we have
\begin{equation*}
\Phi \left( \left[ \Delta_\Theta \right]_{\mathbb{M}} \right) \leq \sqrt{ s +  ( K+1 ) h } \sqrt{ H \left( \left[ \Delta_\Theta \right]_{\mathbb{M}} \right) } \leq \sqrt{2} \sqrt{ s + ( K+1 ) h } \sqrt{ H \left( \Delta_\Theta \right) },
\end{equation*}
where the last inequality follows from $H(\Delta_\Theta)=H \left( \left[ \Delta_\Theta \right]_{\mathbb{M}} \right) + H \left( \left[ \Delta_\Theta \right]_{\mathbb{M}^{\bot}} \right) \geq H \left( \left[ \Delta_\Theta \right]_{\mathbb{M}} \right)$.
Plugging the above inequality into~\eqref{eq:proof-thm-frob-1}, we arrive at the conclusion that 
\begin{align*}
F \left( \Delta_\Theta \right) &\geq \frac{\kappa}{2} H \left( \Delta_\Theta \right) - \frac{3\lambda_{\text{M}}}{\sqrt{2}} \sqrt{ s +  ( K+1 ) h } \sqrt{ H \left( \Delta_\Theta \right) } \\
&= \frac{1}{2} \sqrt{ H \left( \Delta_\Theta \right) } \left\{ \kappa \sqrt{H \left( \Delta_\Theta \right)} -  3 \sqrt{2} \lambda_{\text{M}} \sqrt{ s +  ( K+1 ) h }\right\} >0,
\end{align*}
where the last relation arises from the assumption 
\begin{equation}
\label{eq:proof-thm-frob-4}
H \left( \Delta_\Theta \right) > \frac{18 \left( s +  ( K+1 ) h \right)\lambda^2_{\text{M}}}{\kappa^2}. 
\end{equation}
This finishes the proof. 

\subsection{Proof of Useful Lemmas}
In this section, we collect the proof of useful lemmas. 

\subsubsection{Proof of Lemma~\ref{lemma:decomposable}}
\label{sec:proof-lemma-decomposable}

By the definition of $\Phi(\cdot)$ and the fact that $\Theta \in \mathbb{M}$ and $\Theta^{\prime} \in \mathbb{M}^{\bot}$, we have
\begin{align*}
\Phi(\Theta + \Theta^{\prime}) & = \Phi \left( \left[ \Theta \right]_{\mathbb{M}} + \left[ \Theta^{\prime} \right]_{\mathbb{M}^{\bot}} \right) \\
& = \left\vert \left[ \Omega \right]_{\supportshared} + \left[ \Omega^{\prime} \right]_{\supportshared^c} \right\vert_1 +  \sum^K_{k=0} \sqrt{\alpha_k} \left\vert \left[ \uniqueparameter \right]_{\supportunique} + \left[ \Gamma^{(k)\prime} \right]_{ \supportunique^c  } \right\vert_1 \\
& = \left\vert \left[ \Omega \right]_{\supportshared} \right\vert_1 + \left\vert \left[ \Omega^{\prime} \right]_{\supportshared^c} \right\vert_1 +  \sum^K_{k=0} \sqrt{\alpha_k} \left\vert \left[ \uniqueparameter \right]_{\supportunique} \right\vert_1 +  \sum^K_{k=0} \sqrt{\alpha_k} \left\vert \left[ \Gamma^{(k)\prime} \right]_{ \supportunique^c  } \right\vert_1 \\
& = \Phi \left(  \left[ \Theta \right]_{\mathbb{M}} \right) + \Phi \left( \left[ \Theta^{\prime} \right]_{\mathbb{M}^{\bot}} \right) \\
& = \Phi(\Theta ) + \Phi(\Theta^{\prime}).
\end{align*}

\subsubsection{Proof of Lemma~\ref{lemma:bound-Phi-H}}
\label{sec:proof-lemma-bound-Phi-H}

For any $\Theta \in \mathbb{M}$, we have $\Omega=\left[ \Omega \right]_{\supportshared}$ and $\uniqueparameter=\left[ \uniqueparameter \right]_{ \supportunique }$.
Thus
\begin{align*}
\Phi ( \Theta ) & = \left\vert \left[ \Omega \right]_{\supportshared}  \right\vert_1 +  \sum^K_{k=0} \sqrt{\alpha_k}  \left\vert \left[ \uniqueparameter \right]_{ \supportunique }  \right\vert_1 \\
& \leq \vert \supportshared \vert^{\frac{1}{2}} \left\Vert \left[ \Omega \right]_{\supportshared}  \right\Vert_{\mathrm{F}} +  \sum^K_{k=0} \sqrt{\alpha_k} \left\vert \supportunique \right\vert^{\frac{1}{2}} \left\Vert \left[ \uniqueparameter \right]_{ \supportunique }  \right\Vert_{\mathrm{F}} 
&& (\text{Jensen's inequality})
\\
& \overset{(ii)}{\leq} \sqrt{s} \left\Vert \left[ \Omega \right]_{\supportshared}  \right\Vert_{\mathrm{F}} +  \sqrt{h} \sum^K_{k=0} \sqrt{\alpha_k}  \left\Vert \left[ \uniqueparameter \right]_{ \supportunique }  \right\Vert_{\mathrm{F}} .
&& (\text{by Assumption~\ref{assump:model-structure}})
\end{align*}
Furthermore, by Jensen's inequality, we have
\begin{align*}
\frac{1}{2}\Phi^2 ( \Theta ) & \leq  s \left\Vert \left[ \Omega \right]_{\supportshared}  \right\Vert^2_{\mathrm{F}} +  h \left( \sum^K_{k=0} \sqrt{\alpha_k}  \left\Vert \left[ \uniqueparameter \right]_{ \supportunique }  \right\Vert_{\mathrm{F}} \right)^2  \\
& \leq  s \left\Vert \left[ \Omega \right]_{\supportshared}  \right\Vert^2_{\mathrm{F}} +  h ( K+1 ) \sum^K_{k=0} \alpha_k  \left\Vert \left[ \uniqueparameter \right]_{ \supportunique }  \right\Vert^2_{\mathrm{F}}   \\
& \leq \left( s +  ( K+1 ) h \right) \left\{ \left\Vert \left[ \Omega \right]_{\supportshared}  \right\Vert^2_{\mathrm{F}} + \sum^K_{k=0} \alpha_k  \left\Vert \left[ \uniqueparameter \right]_{ \supportunique }  \right\Vert^2_{\mathrm{F}} \right\} \\
& = \left( s + ( K+1 ) h \right) \sum^K_{k=0} \alpha_k \left( \left\Vert \left[ \Omega \right]_{\supportshared}  \right\Vert^2_{\mathrm{F}} + \left\Vert \left[ \uniqueparameter \right]_{ \supportunique }  \right\Vert^2_{\mathrm{F}} \right).
\end{align*}
By the assumption that the supports of $\shared$ and $\unique$ are disjoint and the fact that $\Omega=\left[ \Omega \right]_{\supportshared}$ and $\uniqueparameter=\left[ \uniqueparameter \right]_{ \supportunique }$, we then have
\begin{align*}
\Phi^2 ( \Theta ) & \leq 2 \left( s +  ( K+1 ) h \right) \sum^K_{k=0} \alpha_k \left\Vert  \left[ \Omega \right]_{\supportshared} + \left[ \uniqueparameter \right]_{ \supportunique }  \right\Vert^2_{\mathrm{F}} \\
& \leq 2 \left( s +  ( K+1 ) h \right) \sum^K_{k=0} \alpha_k \left\Vert  \Omega  + \uniqueparameter  \right\Vert^2_{\mathrm{F}} \\
& = 2 \left( s +  ( K+1 ) h \right) H \left( \Theta \right).
\end{align*}

\subsubsection{Proof of Lemma~\ref{lemma:rsc-H-new}}
\label{sec:proof-lemma-rsc-H-new}

By Lemma~\ref{lemma:likelihood-taylor-series}, we have
\begin{multline}
\label{eq:residual-lw-bd-step1-new}
\mathcal{R}\left( \Delta_\Theta \right) 
= \sum^K_{k=0} \frac{\alpha_k}{2} \text{vec} \left( \Delta_\Omega + \Delta_{\uniqueparameter} \right)  \\ 
\times \left\{ \left( \Omega^{(k)} + t_k \left( \Delta_\Omega + \Delta_{\uniqueparameter} \right) \right)^{-1} \otimes \left( \Omega^{(k)} + t_k \left( \Delta_\Omega + \Delta_{\uniqueparameter} \right) \right)^{-1}  \right\} \\
\times \text{vec} \left( \Delta_\Omega + \Delta_{\uniqueparameter} \right),
\end{multline}
where $t_k \in (0,1)$, $0 \leq k \leq K$.
Since $\gamma_{\min} ( A^{-1} \otimes A^{-1} )= \Vert A \Vert^{-2}_2$ for any $A \succ 0$, we have
\begin{align*}
\gamma_{\min} & \left( \left( \Omega^{(k)} + t_k \left( \Delta_\Omega + \Delta_{\uniqueparameter} \right) \right)^{-1} \otimes \left( \Omega^{(k)} + t_k \left( \Delta_\Omega + \Delta_{\uniqueparameter} \right) \right)^{-1}  \right) \\
& = \left\{ \left\Vert \Omega^{(k)} + t_k \left( \Delta_\Omega + \Delta_{\uniqueparameter} \right) \right\Vert_2  \right\}^{-2} \\
& \geq \left\{ \left\Vert \Omega^{(k)} \right\Vert_2 + t_k \left\Vert  \Delta_\Omega + \Delta_{\uniqueparameter} \right\Vert_2  \right\}^{-2} \\
& \geq \left\{ \left\Vert \Omega^{(k)} \right\Vert_2 +  \left\Vert  \Delta_\Omega + \Delta_{\uniqueparameter} \right\Vert_2  \right\}^{-2} \\
& \geq \left( 2 M_{\Omega} + M_{\mathrm{op}}   \right)^{-2},
\end{align*}
where the last line follows the definition of $ M_{\Omega}$ in~\eqref{eq:def-Msigma-Momega} and the assumption that $\left\Vert  \Delta_\Omega + \Delta_{\uniqueparameter} \right\Vert_2 \leq  M_{\Omega} +  M_{\mathrm{op}}$ for all $0 \leq k \leq K$. 
Let $\kappa = \left( 2 M_{\Omega} + M_{\mathrm{op}}  \right)^{-2}$.
Then
\begin{align*}
\mathcal{R}\left( \Delta_\Theta \right) & \geq \frac{\kappa}{2} \sum^K_{k=0} \alpha_k \left\Vert \text{vec} \left( \Delta_\Omega + \Delta_{\uniqueparameter} \right) \right\Vert^2_2.
\end{align*}
The final result follows by noting that $\Vert \text{vec} ( \Delta_\Omega + \Delta_{\uniqueparameter} ) \Vert_2=\Vert \Delta_\Omega + \Delta_{\uniqueparameter} \Vert_{\mathrm{F}}$.

\subsubsection{Proof of Lemma~\ref{lemma:good-event-hold}}

We first state and prove the following lemma that gives the closed-form expression of $\Phi^{\star}(\cdot)$. 
% The proof is given in Appendix~\ref{sec:proof-lemma-dual-norm}.
\begin{lemma}
\label{lemma:dual-norm}
For the dual norm defined in~\eqref{eq:dual-norm}, we have
\begin{equation*}
\Phi^{\star} \left( \Theta \right) = \max \left\{ \left\vert \Omega \right\vert_{\infty} ,  \max_{ 0 \leq k \leq K } \frac{\left\vert \uniqueparameter \right\vert_{\infty}}{ \sqrt{\alpha_k} } \right\}.
\end{equation*}
\end{lemma}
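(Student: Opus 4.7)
\medskip

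\noindent\textbf{Proof proposal for Lemma~\ref{lemma:dual-norm}.}
The plan is to recognize $\Phi$ as a weighted $\ell_1$ norm on the concatenation of the entries of $\Omega$ and $\{\Gamma^{(k)}\}_{k=0}^K$, and then invoke the standard fact that the dual of a weighted $\ell_1$ norm $\sum_i w_i|x_i|$ (with $w_i>0$) is the weighted $\ell_\infty$ norm $\max_i |y_i|/w_i$. The argument splits into a matching upper bound and a matching lower bound.

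First I would establish the upper bound $\Phi^\star(\Theta) \leq \max\{|\Omega|_\infty, \max_k |\Gamma^{(k)}|_\infty/\sqrt{\alpha_k}\}$. For any $\Theta' = (\Omega', \{\Gamma^{(k)\prime}\}_{k=0}^K)$ with $\Phi(\Theta') \leq 1$, expand
\begin{equation*}
\langle \Theta, \Theta'\rangle = \langle \Omega, \Omega'\rangle + \sum_{k=0}^K \langle \Gamma^{(k)}, \Gamma^{(k)\prime}\rangle.
\end{equation*}
By the elementary inequality $|\langle A, B\rangle| \leq |A|_\infty |B|_1$, the right-hand side is at most $|\Omega|_\infty |\Omega'|_1 + \sum_{k=0}^K |\Gamma^{(k)}|_\infty |\Gamma^{(k)\prime}|_1$. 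Writing $|\Gamma^{(k)}|_\infty = \sqrt{\alpha_k} \cdot (|\Gamma^{(k)}|_\infty/\sqrt{\alpha_k})$ and bounding each weighted factor by $M \coloneqq \max\{|\Omega|_\infty, \max_k |\Gamma^{(k)}|_\infty/\sqrt{\alpha_k}\}$, one obtains $\langle \Theta, \Theta'\rangle \leq M \cdot \Phi(\Theta') \leq M$.

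Next I would show the reverse inequality by exhibiting a feasible $\Theta'$ achieving the bound. Suppose the maximum $M$ is attained at $|\Omega|_\infty$, say $|\Omega_{ij}| = M$. Choose $\Omega' = \operatorname{sign}(\Omega_{ij}) e_i e_j^\top$ and $\Gamma^{(k)\prime} = 0$ for all $k$; then $\Phi(\Theta') = 1$ and $\langle \Theta, \Theta'\rangle = |\Omega_{ij}| = M$. If instead the maximum is attained at $|\Gamma^{(k_0)}|_\infty/\sqrt{\alpha_{k_0}}$ at entry $(i,j)$, take $\Gamma^{(k_0)\prime} = \alpha_{k_0}^{-1/2}\operatorname{sign}(\Gamma^{(k_0)}_{ij}) e_i e_j^\top$ and set all other components of $\Theta'$ to zero; again $\Phi(\Theta') = \sqrt{\alpha_{k_0}} \cdot \alpha_{k_0}^{-1/2} = 1$ and $\langle \Theta, \Theta'\rangle = |\Gamma^{(k_0)}_{ij}|/\sqrt{\alpha_{k_0}} = M$.

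This proof is essentially mechanical; I do not anticipate any real obstacle. The only mild subtlety is handling the weight $\sqrt{\alpha_k}$ uniformly, which is why the reciprocal weight $1/\sqrt{\alpha_k}$ appears on the dual side. A minor bookkeeping issue is the $k=0$ term where $\Gamma^{(0)}$ is a free variable in the optimization domain even though the final estimator satisfies $\widehat\Psi^{(0)} = 0$; this does not affect the dual-norm calculation since the lemma is a purely algebraic statement about the regularizer $\Phi$.
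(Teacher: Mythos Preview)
Your proposal is correct and follows essentially the same approach as the paper: the upper bound via the H\"older-type inequality $|\langle A,B\rangle|\le |A|_\infty|B|_1$ and then factoring out the maximum is exactly what the paper does, and the paper simply asserts that equality is achievable where you explicitly construct the maximizing $\Theta'$. Your remark about $\widehat\Psi^{(0)}=0$ is a slight confusion (that refers to the differential network, not $\Gamma^{(0)}$), but as you note it is irrelevant to this purely algebraic lemma.
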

\begin{proof}
For any $\Theta,\Theta^{\prime} \in \mathbb{H}$, we have
\begin{align*}
\left\langle \Theta, \Theta^{\prime} \right\rangle &= \left\langle \Omega, \Omega^{\prime} \right\rangle + \sum^K_{k=0} \left\langle \uniqueparameter, \Gamma^{(k)\prime} \right\rangle \\
& \leq \left\vert \Omega \right\vert_{\infty} \left\vert \Omega^{\prime} \right\vert_1 + \sum^K_{k=0} \left\vert \uniqueparameter \right\vert_{\infty} \left\vert \Gamma^{\prime} \right\vert_1 \\
& = \left\vert \Omega \right\vert_{\infty} \left\vert \Omega^{\prime} \right\vert_1 + \sum^K_{k=0} \sqrt{\alpha_k} \, \left\vert \Gamma^{\prime} \right\vert_1 \frac{ \vert \uniqueparameter \vert_{\infty} }{  \sqrt{\alpha_k} } \\
& \leq \left\vert \Omega \right\vert_{\infty} \left\vert \Omega^{\prime} \right\vert_1 + \left\{  \max_{ 0 \leq k \leq K} \frac{ \left\vert \uniqueparameter \right\vert_{\infty} }{  \sqrt{\alpha_k} } \right\}  \sum^K_{k=0} \sqrt{\alpha_k} \, \left\vert \Gamma^{\prime} \right\vert_1 \\
& \leq \max \left\{ \left\vert \Omega \right\vert_{\infty},   \max_{0 \leq k \leq K} \frac{ \left\vert \uniqueparameter \right\vert_{\infty} }{  \sqrt{\alpha_k} } \right\} \left( \left\vert \Omega^{\prime} \right\vert_1 +  \sum^K_{k=0} \sqrt{\alpha_k} \, \left\vert \Gamma^{\prime} \right\vert_1 \right) \\
& = \max \left\{ \left\vert \Omega \right\vert_{\infty},  \max_{0 \leq k \leq K} \frac{ \left\vert \uniqueparameter \right\vert_{\infty} }{  \sqrt{\alpha_k} } \right\} \Phi \left( \Theta^{\prime} \right).
\end{align*}
Thus, we have
\begin{equation*}
\Phi^{\star} \left( \Theta \right) \coloneqq \sup_{\Theta^{\prime}:\Phi(\Theta^{\prime}) \leq 1} \left\langle \Theta, \Theta^{\prime} \right\rangle \leq \max \left\{ \left\vert \Omega \right\vert_{\infty}, \,  \max_{0 \leq k \leq K} \frac{ \left\vert \uniqueparameter \right\vert_{\infty} }{  \sqrt{\alpha_k} } \right\}.
\end{equation*}
Finally, it is easy to see that the equality is achievable.
\end{proof}

Now we are ready to prove Lemma~\ref{lemma:good-event-hold}. Note that
\begin{align*}
\nabla_{\Omega} \mathcal{L} \left( \Theta \right) &= \sum^K_{k=0} \alpha_k \left( \widehat{\Sigma}^{(k)} - \left( \Omega + \uniqueparameter \right)^{-1} \right), \\
\nabla_{\uniqueparameter} \mathcal{L} \left( \Theta \right) &= \alpha_k \left( \widehat{\Sigma}^{(k)} - \left( \Omega + \uniqueparameter \right)^{-1} \right) \quad \text{for all } 0 \leq k \leq K.
\end{align*}
Then by Lemma~\ref{lemma:dual-norm}, we have
\begin{equation}
\label{eq:good-event-closed-form}
\mathbb{G}(\lambda_{\text{M}}) = \left\{ \frac{\lambda_{\text{M}}}{2} \geq \max \left\{ \left\vert \sum^K_{k=0} \alpha_k \left( \widehat{\Sigma}^{(k)} - \Sigma^{(k)} \right) \right\vert_{\infty}, \,  \max_{0 \leq k \leq K}  \sqrt{\alpha_k} \left\vert   \widehat{\Sigma}^{(k)} - \Sigma^{(k)}  \right\vert_{\infty} \right\}  \right\}.
\end{equation}

% So far, our argument has been deterministic.
% To reach the final conclusion, we need to only show that when $\lambda_{\text{M}}$ satisfies~\eqref{eq:mlut-Frob-lambda-large-enough}, we have that $\mathbb{G}(\lambda_{\text{M}})$ holds with probability $1-\delta$. 
By Lemma~\ref{lemma:avg-emp-cov-infty} and the union bound, we have
\begin{align*}
\left\vert \sum^K_{k=0} \alpha_k \left( \widehat{\Sigma}^{(k)} - \Sigma^{(k)} \right) \right\vert_{\infty} \leq 80 M_\Sigma \max \left\{  \sqrt{ \frac{ \log (2 \left( K + 2 \right) d^2 / \delta) }{ 2 N } } , \, \frac{ \log (2 \left( K + 2 \right) d^2 / \delta) }{ N } \right\}
\end{align*}
and
\begin{equation*}
\left\vert  \widehat{\Sigma}^{(k)} - \Sigma^{(k)} \right\vert_{\infty} \leq 80 M_\Sigma \max \left\{  \sqrt{ \frac{ \log (2 \left( K + 2 \right) d^2 / \delta) }{ 2 n_k } } , \, \frac{ \log (2 \left( K + 2 \right) d^2 / \delta) }{ n_k } \right\}
\end{equation*}
for all $0 \leq k \leq K$ hold simultaneously with probability at least $1-\delta$. 
When $\min_{0 \leq k \leq K} n_k$ is large enough such that
\begin{equation*}
\frac{ \log (2 \left( K + 2 \right) d^2 / \delta) }{ \min_{0 \leq k \leq K} n_k } \leq \frac{1}{2},
\end{equation*}
we then have that
\begin{align*}
& \left\vert \sum^K_{k=0} \alpha_k \left( \widehat{\Sigma}^{(k)} - \Sigma^{(k)} \right) \right\vert_{\infty} \leq 80 M_\Sigma \sqrt{ \frac{ \log (2 \left( K + 2 \right) d^2 / \delta) }{ 2 N } }, \\
& \left\vert  \widehat{\Sigma}^{(k)} - \Sigma^{(k)} \right\vert_{\infty} \leq 80 M_\Sigma  \sqrt{ \frac{ \log (2 \left( K + 2 \right) d^2 / \delta) }{ 2 n_k } } \quad \text{for all } 0 \leq k \leq K . \numberthis \label{eq:proof-thm-frob-8}
\end{align*}
hold simultaneously with probability at least $1-\delta$.
Note that~\eqref{eq:proof-thm-frob-8} implies that
\begin{align*}
\sqrt{\alpha_k} \left\vert  \widehat{\Sigma}^{(k)} - \Sigma^{(k)} \right\vert_{\infty} \leq 80 M_\Sigma \sqrt{ \frac{ \log (2 \left( K + 2 \right) d^2 / \delta) }{ 2 N } } \quad \text{for all } 0 \leq k \leq K .
\end{align*}
Thus, when
\begin{equation}
\label{eq:proof-thm-frob-9}
\lambda_{\text{M}} \geq 160 M_\Sigma \sqrt{ \frac{ \log (2 \left( K + 2 \right) d^2 / \delta) }{ 2 N } },
\end{equation}
by~\eqref{eq:good-event-closed-form}, we have $\mathbb{G}(\lambda_{\text{M}})$ hold with probability $1-\delta$.

% We will show that $\mathbb{G}(\lambda_{\text{M}})$ can hold with high probability.

\section{Corollary~\ref{corollary:analysis-Dtrace} and Its Proof}
\label{sec:proof-corollary:analysis-Dtrace}

The following corollary of Theorem~10 in~\citet{zhao2022fudge} provides a high-probability error bound for the D-Trace Loss estimator.
\begin{corollary}
\label{corollary:analysis-Dtrace}
Suppose that Assumption~\ref{assump:model-structure} and Assumption~\ref{assump:eig-element-upp-bd} hold.
Let $C_{\gamma}=M^{-1}_{\Omega}$.
Furthermore, suppose that for a given $\delta \in (0,1]$ and all $ k \in [K]$ we have 
\begin{equation}
\label{eq:corollary:analysis-Dtrace-requir-1}
\frac{ \log \left( 2 (K+1) d^2 / \delta \right)  }{  \min \{ n_k, n_0 \} } \leq \min \left\{ 1, \frac{ \gamma^2_{\min} (\Sigma^{(k)}) \gamma^2_{\min} (\Sigma^{(0)})  }{ C^{\prime} h^2 
M_\Sigma^4
%\left\{ \max_{0 \leq k \leq K} \left\vert \Sigma^{(k)} \right\vert^4_{\infty} \right\} 
} \right\}.
\end{equation}
If 
\begin{equation*}
\lambda^{(k)}_{\Gamma} \geq 2 C_{\gamma} \left[ (C_{\gamma}+2)  
M_\Sigma
%\left\{ \max_{0 \leq k \leq K} \left\vert \Sigma^{(k)} \right\vert_{\infty} \right\} 
M_{\Gamma} + 2 \right] 
M_\Sigma
% \left\{ \max_{0 \leq k \leq K} \left\vert \Sigma^{(k)} \right\vert_{\infty} \right\} 
% \times 
\sqrt{ \frac{ \log \left( 2 (K+1)  d^2 / \delta \right)  }{ 2 \min \{ n_k, n_0 \} } }
\end{equation*}
for all $k \in [K]$, then 
\begin{align*}
\left\Vert \diffestimate - \diff \right\Vert_{\mathrm{F}} \leq \frac{C^{\prime\prime} \sqrt{h} \lambda^{(k)}_{\Gamma} }{ \gamma_{\min} (\Sigma^{(k)}) \gamma_{\min} (\Sigma^{(0)}) }  \quad \text{for all } 0 \leq k \leq K
\end{align*}
hold simultaneously with probability at least $1-\delta$, where $C^{\prime},C^{\prime\prime}$ are universal constants that depend on $C_{\gamma}$.
\end{corollary}

\begin{proof}
The D-Trace Loss estimator is a special case of the FuDGE estimator proposed in~\citet{zhao2019direct,zhao2022fudge}. We adapt the proof of Theorem~10 in~\citet{zhao2022fudge} here.

Given $\delta \in (0,1]$, we define the event
\begin{multline*}
\mathcal{E}^1 (\delta) \coloneqq \Bigg\{  \left\vert \widehat{\Sigma}^{(k)} - \Sigma^{(k)} \right\vert_{\infty} 
\leq C_{\gamma} M_\Sigma \cdot
\max \Bigg\{  \sqrt{ \frac{ \log \left( 2 ( K+1 )  d^2 / \delta \right)  }{ 2 n_k } } , \, \frac{ \log \left( 2 ( K+1 ) d^2 / \delta \right)  }{  n_k }  \Big\} \\
\text{for all } 0 \leq k \leq K \Bigg\},
\end{multline*}
where $C_{\gamma}$ is a universal constant.
By Lemma~\ref{lemma:infty-cov-err} and the union bound, we have $\mathbb{P}(\mathcal{E}^1(\delta)) \geq 1 - \delta$. 
In the following, we work on the event $\mathcal{E}^1(\delta)$.

Let
\begin{equation*}
\psi_k = C_{\gamma} M_{\Sigma}  \sqrt{ \frac{ \log \left( 2 ( K+1 )  d^2 / \delta \right)  }{ 2 \min \{ n_k, n_0 \} } } \,.
\end{equation*}
On the event $\mathcal{E}^1$ and when the condition~\eqref{eq:corollary:analysis-Dtrace-requir-1} holds with $C^{\prime}=4096 C^2_{\gamma} (C_{\gamma} + 2)^2$, we have
\begin{equation*}
\max \left\{ \left\vert \widehat{\Sigma}^{(k)} - \Sigma^{(k)} \right\vert_{\infty}, \left\vert \widehat{\Sigma}^{(0)} - \Sigma^{(0)} \right\vert_{\infty}  \right\} \leq \psi_k.
\end{equation*}
When~\eqref{eq:corollary:analysis-Dtrace-requir-1} holds, we have 
\begin{equation*}
\psi_k \leq C_{\gamma} \cdot \max_{0 \leq k \leq K} \left\vert \Sigma^{(k)} \right\vert_{\infty}
\quad\text{and}\quad
\psi_k \leq \frac{ \gamma_{\min} (\Sigma^{(k)}) \gamma_{\min} (\Sigma^{(0)})  }{  64 (C_{\gamma}+2) h M_\Sigma }.
\end{equation*}
Thus, we have 
\begin{equation*}
\kappa^{(k)}_{\mathcal{L}} \coloneqq \frac{1}{2} \gamma_{\min} (\Sigma^{(k)}) \gamma_{\min} (\Sigma^{(0)}) - 16 h \left( \psi^2_k + 2 \psi_k M_\Sigma \right) \geq \frac{1}{4} \gamma_{\min} (\Sigma^{(k)}) \gamma_{\min} (\Sigma^{(0)}),
\end{equation*}
where $\kappa^{(k)}_{\mathcal{L}}$ is the restricted convexity parameter defined in the proof of Theorem~10 of~\cite{zhao2022fudge}.
Set 
\begin{equation*}
\lambda^{(k)}_{\Gamma} \geq 4 \left[ (C_{\gamma}+2)   M_\Sigma  M_{\Gamma} + 1 \right] \psi_k\,.
\end{equation*}
Following the proof of Theorem~10 of~\cite{zhao2022fudge}, we have
\begin{align*}
\left\Vert \diffestimate - \diff \right\Vert^2_{\mathrm{F}} \leq \frac{18 h}{ \frac{1}{16} \gamma^2_{\min} (\Sigma^{(k)}) \gamma^2_{\min} (\Sigma^{(0)}) } \left( \lambda^{(k)}_{\Gamma} \right)^2,
\end{align*}
which completes the proof.
\end{proof}

\section{Proof of Lemma~\ref{lemma:low-bd-shared}}
\label{sec:proof-thm-low-bd-shared}

Let $R=\lfloor s / d \rfloor$ and let $r=s - R d$.
We follow the construction of the hard-case collection of precision matrices in~Step~2 of Proof of Theorem~4.1 in \citet{cai2016estimating} to get $\mathcal{F}_{\star}$. 
We set $M_{n,p}$ and $c_{n,p}$ therein as $M_{n,p}=\text{some constant}$ and $c_{n,p}=R$.
By (4.13) and (4.15) in~\cite{cai2016estimating}, for any $\Omega \in \mathcal{F}_{\star}$, we have
\begin{equation*}
0 < c_1 \leq  \gamma_{\min} ( \Omega ) \leq \gamma_{\max} ( \Omega ) \leq c_2 < \infty \, .
\end{equation*}
Furthermore, we have $\vert \Omega \vert_0 \leq c_{n,p} d \leq s$. 
Thus, $\mathcal{F}_{\star} \subseteq \mathcal{G}_1$.
Following Theorem 6.1 of~\cite{cai2016estimating}, we have
\begin{align*}
\inf_{\widehat{\Omega}} \sup_{\Omega \in \mathcal{G}_1} \mathbb{E} \left[ \left\Vert \widehat{\Omega} - \Omega \right\Vert^2_{\mathrm{F}} \right] & \geq \inf_{\widehat{\Omega}} \sup_{\Omega \in \mathcal{F}_{\star} } \mathbb{E} \left[ \left\Vert \widehat{\Omega} - \Omega \right\Vert^2_{\mathrm{F}} \right] \gtrsim d c_{n,p} \frac{\log d}{n} = \frac{ d R }{ s } \cdot \frac{s \log d}{n} \geq \frac{1}{2} \frac{s \log d}{n} \, ,
\end{align*}
which completes the proof.

\section{Proof of Lemma~\ref{lemma:lower-bd-diff-net}}
\label{sec:proof-thm-lower-bd-diff-net}

Without loss of generality, we assume that $d$ and $h$ are even positive numbers.
Let $m={d}/{2}$.
By Lemma~\ref{lemma:cov-number}, there exist $\{ b^0, b^1, \ldots, b^M \} \subseteq \mathbb{R}^{m^2}$ such that
\begin{enumerate}[label=(\roman*)]
\item $\Vert b^j \Vert_0 \leq \frac{h}{2}$ and $\Vert b^j \Vert_2 \leq 1$ for all $0 \leq j \leq M$;
\item $b^0=0$;
\item $\Vert b^j - b^k \Vert_2 \geq \frac{1}{4}$ for all $0 \leq j \neq k \leq M$;
\item $\log (M+1) \geq \frac{h}{4} \log \left( \frac{d^2/4-h/2}{h/2 } \right)$.
\end{enumerate}

For $j=0,\ldots,M$, let $B^j \in \mathbb{R}^{m \times m}$ be such that $\text{vec}(B^j)=\delta \cdot b^j$, where $\delta$ is a positive number that depends on $n,d,h$ and will be specified later. 
By~\eqref{eq:lower-bd-diff-net-cond}, $d \geq 4h$ and 
\begin{equation*}
\log (M+1) \geq \frac{h}{4} \log \left( \frac{d^2/4-h/2}{h/2 } \right) = \frac{h}{4} \log \left( \frac{d^2}{2 h} - 1 \right) \geq \frac{h}{4} \log \left( \frac{d^2}{4 h} \right) \geq \frac{h}{4} \log d.
\end{equation*}
Furthermore, we have
\begin{gather*}
\label{eq:proof-thm-lower-bd-diff-net-1}
\left\vert B^j \right\vert_0 \leq \frac{h}{2}, 
\quad
\left\Vert B^j \right\Vert_{\mathrm{F}} \leq \delta,
\quad
\text{for } 0 \leq j \leq M, \\
\quad
B^0 = 0, 
\quad
\left\Vert B^j - B^k \right\Vert_{\mathrm{F}} \geq \frac{\delta}{4},
\quad 
\text{for } 0 \leq j \neq k \leq M.
\end{gather*}

For $j = 0, \ldots, M$, let
\begin{equation*}
\Omega^j = 
\begin{bmatrix}
I_m & B^j \\
( B^j )^{\top} & I_m
\end{bmatrix}.
\end{equation*}
We next verify that $\Omega^j \in \mathcal{G}_2$ when
\begin{equation}
\label{eq:proof-thm-lower-bd-diff-net-6}
\delta \leq \min \left\{ \frac{C_{\Gamma}}{\sqrt{2 h}}, \frac{1 - c_1}{2}, \frac{c_2 - 1}{2} \right\},
\end{equation}
where $\mathcal{G}_2$ is defined in~\eqref{eq:low-bd-param-space-2}. 
First note that $\text{diag}(\Omega^j)=I_d$, $\left\vert \Omega^j - I_d \right\vert_0 = 2 \left\vert B^j \right\vert_0 \leq h$, and
\begin{equation*}
\left\vert \Omega^j - I_d \right\vert_1 = 2 \left\vert B^j \right\vert_1 \leq 2 \sqrt{ \frac{h}{2} } \left\Vert B^j \right\Vert_{\mathrm{F}} \leq \sqrt{2 h} \delta \leq C_{\Gamma},
\end{equation*}
using the choice of $\delta$ in~\eqref{eq:proof-thm-lower-bd-diff-net-6}.
Furthermore, we have
\begin{align*}
\gamma_{\min} \left( \Omega^j \right) & = \gamma_{\min} \left(  
\begin{bmatrix}
I_m & 0 \\
0 & I_m
\end{bmatrix}
+
\begin{bmatrix}
0 & B^j \\
(B^j)^{\top} & 0
\end{bmatrix}
\right) \\
& \geq 1 -  \left\Vert 
\begin{bmatrix}
0 & B^j \\
(B^j)^{\top} & 0
\end{bmatrix}
\right\Vert_2 \\
& \geq  1 -  \left\Vert 
\begin{bmatrix}
0 & B^j \\
(B^j)^{\top} & 0
\end{bmatrix}
\right\Vert_{\mathrm{F}} \\
& = 1 - 2 \left\Vert B^j \right\Vert_{\mathrm{F}} \\
& \geq c_1,
\end{align*}
again using the choice of $\delta$ in~\eqref{eq:proof-thm-lower-bd-diff-net-6}. Similarly, we have $\gamma_{\max} \left( \Omega^j \right) \leq c_2$.
Therefore, when \eqref{eq:proof-thm-lower-bd-diff-net-6} holds, we have that $\Omega^j \in \mathcal{G}_2$.

Let $\mathbb{P}_j$ denote the probability measure of $N \left( 0, \{ \Omega^j \}^{-1}  \right)$ and let $\mathbb{P}^n_j$ denote the product probability measure of $(X_1,\ldots,X_n)$ where $X_1,\ldots,X_n \overset{i.i.d.}{\sim} \mathbb{P}_j$. Note that $\mathbb{P}_0=N(0,I_d)$.
When $\delta \leq {1}/{4}$, Lemma~\ref{lemma:KL-divergence} gives us
\begin{equation*}
D_{\text{KL}} \left( \mathbb{P}^n_j \, \Vert \, \mathbb{P}^n_0  \right) = n D_{\text{KL}} \left( \mathbb{P}_j \, \Vert \, \mathbb{P}_0  \right) \leq \frac{16}{15} n \left\Vert B^j \right\Vert^2_{\mathrm{F}} \leq \frac{16}{15} n \delta^2,
\end{equation*}
which implies that
\begin{equation}
\label{eq:proof-thm-lower-bd-diff-net-2}
\frac{1}{M} \sum^M_{j=1} D_{\text{KL}} \left( \mathbb{P}^n_j \, \Vert \, \mathbb{P}^n_0  \right) \leq \frac{16}{15} n \delta^2.
\end{equation}
From condition~\eqref{eq:lower-bd-diff-net-cond}, we have $h \log d \geq 8 \log 3$ and thus $\log (M+1) \geq \log 9$, which implies that $M \geq 8$. Furthermore, we have 
\begin{equation*}
\log M = \frac{ \log M}{ \log (M+1) } \log (M+1) \geq \frac{ \log 8 }{ \log 9 } \log (M+1) \geq \frac{4}{5} \log (M+1) \geq \frac{h}{5} \log d.
\end{equation*}

We set
\begin{equation*}
\delta = \frac{1}{4 \sqrt{2}} \sqrt{  \frac{h \log d}{n} }.
\end{equation*}
By~\eqref{eq:lower-bd-diff-net-cond}, we have that~\eqref{eq:proof-thm-lower-bd-diff-net-6} holds and that $\delta \leq {1}/{4}$. Thus~\eqref{eq:proof-thm-lower-bd-diff-net-2} holds, implying 
\begin{equation}
\label{eq:proof-thm-lower-bd-diff-net-3}
\frac{1}{M} \sum^M_{j=1} D_{\text{KL}} \left( \mathbb{P}^n_j \, \Vert \, \mathbb{P}^n_0  \right) \leq \frac{1}{30} h \log d \leq \frac{1}{6} \log M.
\end{equation}
In addition, we have
\begin{equation}
\label{eq:proof-thm-lower-bd-diff-net-4}
\left\Vert B^j - B^k \right\Vert_{\mathrm{F}} \geq 2 \cdot \frac{1}{32\sqrt{2}} \sqrt{  \frac{h \log d}{n} } \quad \text{for all } 0 \leq j \neq k \leq M.
\end{equation}
Note that
\begin{equation}
\label{eq:proof-thm-lower-bd-diff-net-5}
\frac{\log (M+1) - \log 2}{\log M } - \frac{1}{6} \geq \frac{\log M - \log 2}{\log M } - \frac{1}{6} \geq \frac{\log 8 - \log 2}{ \log 8} - \frac{1}{6} = \frac{1}{2}.
\end{equation}
Following Section 2.2 and Corollary 2.6 in \citet{tsybakov2008introduction}, combined with~\eqref{eq:proof-thm-lower-bd-diff-net-3}--\eqref{eq:proof-thm-lower-bd-diff-net-5}, we have
\begin{equation*}
\min_{\widehat{\Omega}} \sup_{\Omega \in \mathcal{G}_2} \mathbb{E} \left[ \left\Vert  \widehat{\Omega} - \Omega \right\Vert^2_{\mathrm{F}} \right] \geq \frac{1}{2} \left( \frac{1}{32\sqrt{2}} \sqrt{  \frac{h \log d}{n} } \right)^2 = \frac{1}{4096} \cdot \frac{h \log d}{n},
\end{equation*}
which completes the proof.

\section{Proof of Theorem~\ref{thm:minimax-optimal-rate}}
\label{sec:proof-thm-minimax-optimal-rate}

The upper bound follows directly from Corollary~\ref{corollary:convg-Trans-GLasso-Frob-Dtrace}. To prove the lower bound, we assume that $\mathcal{P}_k=N \left( 0, \left( \Omega^{(k)} \right)^{-1} \right)$.

First, when $h=0$, we have $N$ i.i.d.~samples from $\Omega^{(0)}$ where
\begin{equation*}
\Omega^{(0)} \in \left\{ \Omega \in \mathbb{R}^{d \times d}: \Omega \succ 0, \, \left\vert \Omega \right\vert_0 \leq s , \, \left\Vert \Omega \right\Vert_2 \leq M_{\Omega}, \, \left\vert \Omega^{-1} \right\vert_{\infty} \leq M_{\Sigma} \right\} \coloneqq \tilde{\mathcal{G}}^{\prime}
\end{equation*}
for some positive universal constants $M_{\Omega}$ and $M_{\Sigma}$. 
Let
\begin{equation*}
\mathcal{G}^{\prime} = \left\{ \Omega \in \mathbb{R}^{d \times d}: \Omega \succ 0, \, \left\vert \Omega \right\vert_0 \leq s , \, 0 < \frac{1}{M_{\Sigma}} \leq \gamma_{\min} \left( \Omega \right) \leq \gamma_{\max} \left( \Omega \right) \leq M_{\Omega} < \infty  \right\}.
\end{equation*}
Since $\left\vert \Omega^{-1} \right\vert_{\infty} \leq \Vert \Omega^{-1} \Vert_2 = \{ \gamma_{\min}(\Omega) \}^{-1} \leq M_{\Sigma}$ for any $\Omega \in \mathcal{G}^{\prime}$, we have $\mathcal{G}^{\prime} \subseteq \tilde{\mathcal{G}}^{\prime}$.
Thus, when $s \geq d \geq c^{\prime} N^{\beta}$, for some universal constants $c^{\prime}>0$ and $\beta>1$ and
\begin{equation*}
\lceil s / d \rceil = o \left( \frac{ N }{ \left( \log d \right)^{\frac{3}{2}} } \right)
\end{equation*}
it follows by Lemma~\ref{lemma:low-bd-shared} that
\begin{equation}
\label{eq:lw-bd-proof-part-1}
\inf_{\widehat{\Omega}} \sup_{\Omega^{(0)} \in \tilde{\mathcal{G}}^{\prime} } \mathbb{E} \left[ \left\Vert \widehat{\Omega} - \Omega^{(0)} \right\Vert^2_{\mathrm{F}} \right] \geq \inf_{\widehat{\Omega}} \sup_{\Omega^{(0)} \in \mathcal{G}^{\prime} } \mathbb{E} \left[ \left\Vert \widehat{\Omega} - \Omega^{(0)} \right\Vert^2_{\mathrm{F}} \right] \gtrsim \frac{ s \log d }{ N } \, .
\end{equation}

If $\Omega^{(k)}=\shared=I_d$ and $\unique=0$ for every $k \in [K]$, then samples from source distributions cannot be used to estimate $\Omega^{(0)}$. 
Therefore, we must depend solely on samples from the target distribution to estimate $\Omega^{(0)}$. 
Note that now we have
\begin{equation*}
\begin{aligned}
\Omega^{(0)} \in & \left\{ \Omega \in \mathbb{S}^{d \times d} : \, \Omega \succ 0 , \, \Omega = I_d + \Delta, \, \Delta_{jj} = 0 \text{ for all } 1 \leq j \leq d, \right. \\
& \quad\quad\quad\quad \left. \left\vert \Delta \right\vert_0 \leq h, \,  \left\vert \Delta \right\vert_1 \leq M_{\Gamma}, \, \left\Vert \Omega \right\Vert_2 \leq M_{\Omega}, \, \left\vert \Omega^{-1} \right\vert_{\infty} \leq M_{\Sigma} \right\} \coloneqq \tilde{\mathcal{G}}^{\prime\prime} \, .
\end{aligned}
\end{equation*}
Let
\begin{equation*}
\begin{aligned}
\mathcal{G}^{\prime\prime} &= \left\{ \Omega \in \mathbb{S}^{d \times d} : \, \Omega \succ 0 , \, \Omega = I_d + \Delta, \, \Delta_{jj} = 0 \text{ for all } 1 \leq j \leq d, \right. \\
& \quad\quad\quad\quad \left. \left\vert \Delta \right\vert_0 \leq h, \,  \left\vert \Delta \right\vert_1 \leq M_{\Gamma}, \, 0 < \frac{1}{M_{\Sigma}} \leq \gamma_{\min} (\Omega) \leq \gamma_{\max} (\Omega) \leq M_{\Omega} < \infty \right\} \, .
\end{aligned}
\end{equation*}
Given that for any $\Omega \in \mathcal{G}^{\prime\prime}$, it holds that $\left\vert \Omega^{-1} \right\vert_{\infty} \leq \Vert \Omega^{-1} \Vert_2 = \{ \gamma_{\min}(\Omega) \}^{-1} \leq M_{\Sigma}$, we can deduce that $\mathcal{G}^{\prime\prime} \subseteq \tilde{\mathcal{G}}^{\prime\prime}$.
By Lemma~\ref{lemma:lower-bd-diff-net}, when $M_{\Omega},M_{\Sigma}>1$ and
\begin{equation*}
d \geq 4h, \, h \log d \geq 8 \log 3, \, \frac{h \log d}{n_0} \leq \min \left\{ 2, 8 \left( 1 - \frac{1}{M_{\Sigma}} \right)^2, 8 \left( 1 - M_{\Omega} \right)^2 \right\}, \, h \sqrt{\frac{\log d}{n_0}} \leq 4 M_{\Gamma},
\end{equation*}
we have
\begin{equation}
\label{eq:lw-bd-proof-part-2}
\inf_{\widehat{\Omega}} \sup_{\Omega^{(0)} \in \tilde{\mathcal{G}}^{\prime\prime} } \mathbb{E} \left[ \left\Vert \widehat{\Omega} - \Omega^{(0)} \right\Vert^2_{\mathrm{F}} \right] \geq \inf_{\widehat{\Omega}} \sup_{\Omega^{(0)} \in \mathcal{G}^{\prime\prime} } \mathbb{E} \left[ \left\Vert \widehat{\Omega} - \Omega^{(0)} \right\Vert^2_{\mathrm{F}} \right] \gtrsim \frac{ h \log d }{ n_0 } \, .
\end{equation}

The final result follows from~\eqref{eq:lw-bd-proof-part-1} and~\eqref{eq:lw-bd-proof-part-2}.

\section{Additional Optimization Details}
\label{sec:opt-addition}

In this section, we provide more details about the numerical algorithms introduced in Section~\ref{sec:Optimization}.

We first discuss how to compute the updating steps~\eqref{eq:ADMM-update-1} and~~\eqref{eq:ADMM-update-2}.
Note that~\eqref{eq:ADMM-update-1} is equivalent to
\begin{align*}
\Omega^{(t)}_k &= \arg \min_{\Omega_k \succ 0} \left\{  f_k \left( \Omega_k \right) + \rho \left\langle Z^{(t-1)}_k , \,  \Omega_k - \Omega^{(t-1)} - \Gamma^{(t-1)}_k\right\rangle + \frac{\rho}{2} \left\Vert \Omega_k - \Omega^{(t-1)} - \Gamma^{(t-1)}_k \right\Vert^2_{\mathrm{F}} \right\} \\
&= \arg \min_{\Omega_k \succ 0} \left\{  - \alpha_k \log \det \left( \Omega_k \right) +  \frac{\rho}{2} \left\Vert \Omega_k - \tilde{C}^{(t-1)}_k  \right\Vert^2_{\mathrm{F}}  \right\}, \numberthis \label{eq:ADMM-update-1-1}
\end{align*}
where $
\tilde{C}^{(t-1)}_k = - Z^{(t-1)}_k + \left( \Omega^{(t-1)} + \Gamma^{(t-1)}_k \right) - ( \alpha_k / \rho) \widehat{\Sigma}^{(k)}$ for $0 \leq k \leq K$. Taking the gradient with respect to $\Omega_k$ in~\eqref{eq:ADMM-update-1-1} and setting it to zero gives
\begin{equation}
\label{eq:ADMM-update-1-2}
- \alpha_k \Omega^{-1}_k + \rho \Omega_k - \rho \tilde{C}^{(t-1)}_k = 0.
\end{equation}
The matrix $\Omega^{(t)}_k$ is obtained by finding $\Omega_k \succ 0$ that satisfies~\eqref{eq:ADMM-update-1-2}.
Let $\rho \tilde{C}^{(t-1)}_k = U \Lambda U^{\top}$, $\Lambda = \text{diag} \left( \left\{ \lambda_i \right\}^d_{i=1} \right)$, be the eigenvalue decomposition of $\rho \tilde{C}^{(t-1)}_k$. Following~\cite[Section 6.5]{boyd2011distributed}, we have
\begin{equation*}
\Omega^{(t)}_k = U \text{diag} \left( \left\{ \frac{ \lambda_i + \sqrt{  \lambda^2_i + 4 \rho \alpha_k }  }{ 2 \rho } \right\}^d_{i=1}  \right) U^{\top}.
\end{equation*}

On ther other hand, computing~\eqref{eq:ADMM-update-2} is equivalent to solving
\begin{align*}
\Omega^{(t)}, \left\{ \Gamma^{(t)}_k  \right\} & \in \arg \min_{\Omega , \{ \uniqueparameter \} } \left\{ \rho \sum^K_{k=0} \left\langle Z^{(t-1)}_k, \Omega^{(t)}_k - \Omega - \uniqueparameter \right\rangle   + \frac{\rho}{2} \sum^K_{k=0} \left\Vert \Omega^{(t)}_k - \Omega - \uniqueparameter \right\Vert^2_{\mathrm{F}} \right. \\
& = \arg \min_{\Omega , \{ \uniqueparameter \} } \left\{ \frac{\rho}{2} \sum^K_{k=0}  \left\Vert \Omega + \uniqueparameter - \check{C}^{(t)}_k \right\Vert^2_{\mathrm{F}} + \lambda_{\text{M}} \left\vert \Omega \right\vert_1 + \lambda_{\text{M}} \sum^K_{k=0} \sqrt{\alpha_k} \left\vert \uniqueparameter \right\vert_1 \right\}
\end{align*}
where $\check{C}^{(t)}_k = \Omega^{(t)}_k +  Z^{(t-1)}_k$. Given $c=(c_1,\ldots,c_K)$, let $\mathcal{S}(c)$ be defined as $\mathcal{S}(c) = ( x^{\star}, y^{\star} )$ where 
\begin{equation}
\label{eq:ADMM-update-2-sub}
\left( x^{\star}, y^{\star} \right) \in \arg \min_{(x, y)} \left\{ \frac{\rho}{2} \sum^K_{k=0} \left( x + y_k - c_k \right)^2 + \lambda_{\text{M}} \vert x \vert + \lambda_{\text{M}} \sum^K_{k=0} \sqrt{\alpha_k} \vert y_k \vert \right\}
\end{equation} 
with $y=(y_1,\ldots,y_K)$. With $\Omega^{(t)}=( \Omega^{(t)}_{jl} )_{1 \leq j,l \leq d}$, $\Gamma^{(t)}_k=( \Gamma^{(t)}_{k,jl} )_{1 \leq j,l \leq d}$, and $\check{C}^{(t)}_k=( \check{C}^{(t)}_{k,jl} )_{1 \leq j,l \leq d}$, we have
\begin{equation*}
\Omega^{(t)}_{jl}, \left\{ \Gamma^{(t)}_{k,jl}  \right\}^K_{k=1} = \mathcal{S} \left( \left\{ \check{C}^{(t)}_{k,jl} \right\}^K_{k=1} \right), \quad \text{for } 1 \leq j,l \leq d.
\end{equation*}
To solve~\eqref{eq:ADMM-update-2-sub}, we iteratively update $x$ or $y$ while fixing the other until convergence. For $c \in \mathbb{R}$ and $\lambda \geq 0$, let
\begin{equation}
\label{eq:soft-threshod-function}
\text{ST}_{\lambda}(c) =
\begin{cases}
c - \lambda & \text{if } c > \lambda, \\
0 & \text{if } |c| \leq \lambda, \\
c + \lambda & \text{if } c < -\lambda
\end{cases}
\end{equation}
be the soft-thresholding function. After initializing $x^{(0)},y^{(0)}$, we repeat the following process until convergence:
\begin{align*}
x^{(r)} &= \arg \min_x \left\{ \frac{\rho}{2} \sum^K_{k=0} \left( x + y^{(r-1)}_k - c_k \right)^2 + \lambda_{\text{M}} \vert x \vert + \lambda_{\text{M}} \sum^K_{k=0} \sqrt{\alpha_k} \left\vert y^{(r-1)}_k \right\vert \right\} \\
& = \text{ST}_{\lambda_{\text{M}} / (\rho (K+1)) } \left( \frac{1}{K+1} \sum^K_{k=0} \left(  c_k - y^{(r-1)}_k \right)  \right),
\end{align*}
and
\begin{align*}
y^{(r)} &= \arg \min_y \left\{ \frac{\rho}{2} \sum^K_{k=0} \left( x^{(r)} + y_k - c_k \right)^2 + \lambda_{\text{M}} \left\vert x^{(r)} \right\vert + \lambda_{\text{M}} \sum^K_{k=0} \sqrt{\alpha_k} \left\vert y_k \right\vert \right\} \\
\Leftrightarrow y^{(r)}_k & = \text{ST}_{ \lambda_{\text{M}} \sqrt{\alpha_k} / \rho} \left(  c_k - x^{(r)}  \right) \quad \text{for } 0 \leq k \leq K.
\end{align*}

We then discuss the stopping criterion for the ADMM algorithm to solve \myalgMT objective~\eqref{eq:Glasso-mutli-task}.

\paragraph{Stopping criterion for ADMM.} Following~\cite[Section 3.3.1]{boyd2011distributed}, let $\epsilon^{\text{abs}}>0$ be an absolute tolerance and $\epsilon^{\text{rel}}>0$ be a relative tolerance, we then define the feasibility tolerance for primal feasibility condition $\epsilon^{\text{pri}}>0$ and the feasibility tolerance for dual feasibility condition $\epsilon^{\text{dual}}>0$ at iteration $t$ as
\begin{align*}
\epsilon^{\text{pri}} &=  \epsilon^{\text{abs}} d \sqrt{K+1} +  \epsilon^{\text{rel}} \max \left\{ \left( \sum^K_{k=0} \left\Vert \Omega^{(t)}_k \right\Vert^2_{\mathrm{F}} \right)^{\frac{1}{2}} \, , \, \left( \sum^K_{k=0} \left\Vert \Omega^{(t)} + \Gamma^{(t)}_k
 \right\Vert^2_{\mathrm{F}} \right)^{\frac{1}{2}} \right\}, \\
\epsilon^{\text{dual}} &=  \epsilon^{\text{abs}} d \sqrt{K+1} + \epsilon^{\text{rel}} \left( \sum^K_{k=0} \left\Vert Z^{(t)}_k \right\Vert^2_{\mathrm{F}} \right)^{\frac{1}{2}} .
\end{align*}
Besides, let
\begin{align*}
r^{\text{pri}} &= \left( \sum^K_{k=0} \left\Vert
\Omega^{(t)}_k - \left( \Omega^{(t)} + \Gamma^{(t)}_k \right) \right\Vert^2_{\mathrm{F}}  \right)^{\frac{1}{2}} \quad \text{and} \\
r^{\text{dual}} &= \rho \left( \sum^K_{k=0} \left\Vert \left( \Omega^{(t)} + \Gamma^{(t)}_k \right) - \left( \Omega^{(t-1)} + \Gamma^{(t-1)}_k \right) \right\Vert^2_{\mathrm{F}}  \right)^{\frac{1}{2}}
\end{align*}
be the primal and dual residuals at iteration $t$. We then stop the iteration if
\begin{equation*}
r^{\text{pri}} \leq \epsilon^{\text{pri}} \quad \text{and} \quad r^{\text{dual}} \leq \epsilon^{\text{dual}}.
\end{equation*}

\paragraph{Stopping criterion for sub problem~\eqref{eq:ADMM-update-2-sub}.} The optimality conditions of problem~\eqref{eq:ADMM-update-2-sub} are
\begin{align}
& 0 \in \rho \sum^K_{k=0} \left( x^{\star} + y^{\star}_k - c_k \right) + \lambda_{\text{M}} \partial \vert x^{\star} \vert, \label{eq:opt-cond-1-ADMM-update-2-sub} \\
& 0 \in \rho \left( x^{\star} + y^{\star} - c_k \right) + \lambda_{\text{M}} \sqrt{\alpha_k} \partial \vert y^{\star}_k \vert \quad \text{for all } k=0,1,\ldots,K. \label{eq:opt-cond-2-ADMM-update-2-sub}
\end{align}
By the definition of $y^{(r)}$, we always have $x^{(r)},y^{(r)}$ satisfy~\eqref{eq:opt-cond-2-ADMM-update-2-sub}. Besides, by definition of $x^{(r)}$, we have
\begin{equation}
\label{eq:opt-cond-3-ADMM-update-2-sub}
0 \in \rho \sum^K_{k=0} \left( x^{(r)} + y^{(r-1)}_k - c_k \right) + \lambda_{\text{M}} \partial \vert x^{(r)} \vert.
\end{equation}
Thus, when
\begin{equation*}
\rho \sum^K_{k=0} \left( y^{(r)}_k - y^{(r-1)}_k \right) = 0,
\end{equation*}
we will have $x^{(r)},y^{(r)}$ satisfy~\eqref{eq:opt-cond-3-ADMM-update-2-sub}. 
Let $\epsilon^{\text{abs}}>0$ be an absolute tolerance and $\epsilon^{\text{rel}}>0$ be a relative tolerance, we then stop at iteration $r$ if
\begin{equation*}
r^{\text{sub}} = \rho \left\vert \sum^K_{k=0} \left( y^{(r)}_k - y^{(r-1)}_k \right) \right\vert \leq (K+1) \epsilon^{\text{abs}} + \epsilon^{\text{rel}} \max \left\{ \sum^K_{k=0} \left\vert y^{(r)}_k \right\vert \, , \, \sum^K_{k=0} \left\vert y^{(r-1)}_k \right\vert \right\} \coloneqq \epsilon^{\text{sub}} .
\end{equation*}

\begin{algorithm}[t]
% \small
\caption{ADMM for Trans-MT-GLasso}
\label{alg:admm-mt-glasso}
\begin{algorithmic}[1]
\STATE{\bfseries Input:} $\{\widehat{\Sigma}^{(k)}\}^K_{k=0}$, $\lambda_{\text{M}}$, $\rho$, $\epsilon^{\text{abs}}$, $\epsilon^{\text{rel}}$ and $\{\alpha_k\}^K_{k=0}$.
\STATE {\bfseries Initialize:} Let $\Omega^{(0)}=I_d$, $\Gamma^{(0)}_k=0$ and $Z^{(0)}_k=I_d$ for all $0 \leq k \leq K$.
Let $r^{\text{pri}}=r^{\text{dual}}=\infty$ and $\epsilon^{\text{pri}}=\epsilon^{\text{dual}}=0$. Let $t = 0$.
\WHILE{$r^{\text{pri}} > \epsilon^{\text{pri}}$ or $r^{\text{dual}} > \epsilon^{\text{dual}}$}{
\STATE $t \leftarrow t+1$.
\FOR{$k=0,1,\ldots,K$}{
\STATE Let
\begin{equation*}
\tilde{C}^{(t-1)}_k = - Z^{(t-1)}_k + \left( \Omega^{(t-1)} + \Gamma^{(t-1)}_k \right) - ( \alpha_k / \rho) \widehat{\Sigma}^{(k)},
\end{equation*}
and compute the eigenvalue decomposition of $\rho \tilde{C}^{(t-1)}_k$ as
\begin{equation*}
\rho \tilde{C}^{(t-1)}_k = U \Lambda U^{\top}, \quad \Lambda = \text{diag} \left( \left\{ \lambda_i \right\}^d_{i=1} \right).
\end{equation*}
\STATE Let
\begin{equation*}
\Omega^{(t)}_k = U \text{diag} \left( \left\{ \frac{ \lambda_i + \sqrt{  \lambda^2_i + 4 \rho \alpha_k }  }{ 2 \rho } \right\}^d_{i=1}  \right) U^{\top}.
\end{equation*}
\STATE Let
\begin{equation*}
\check{C}^{(t)}_k = \Omega^{(t)}_k +  Z^{(t-1)}_k.
\end{equation*}
}
\ENDFOR
\STATE Solve 
\begin{equation*}
\Omega^{(t)}_{jl}, \left\{ \Gamma^{(t)}_{k,jl}  \right\}^K_{k=1} = \mathcal{S} \left( \left\{ \check{C}^{(t)}_{k,jl} \right\}^K_{k=1} \right) \quad \text{for all } 1 \leq j,l \leq d.
\end{equation*}
by Algorithm~\ref{alg:admm-mt-glasso}.
\STATE Let
\begin{equation*}
Z^{(t)}_k = Z^{(t-1)}_k + \rho \left( \Omega^{(t)}_k - \Omega^{(t)} - \Gamma^{(t)}_k \right) \text{for all } 0 \leq k \leq K.
\end{equation*}
\STATE Let
\begin{align*}
r^{\text{pri}} &= \left( \sum^K_{k=0} \left\Vert
\Omega^{(t)}_k - \left( \Omega^{(t)} + \Gamma^{(t)}_k \right) \right\Vert^2_{\mathrm{F}}  \right)^{\frac{1}{2}}, \\
r^{\text{dual}} &= \rho \left( \sum^K_{k=0} \left\Vert \left( \Omega^{(t)} + \Gamma^{(t)}_k \right) - \left( \Omega^{(t-1)} + \Gamma^{(t-1)}_k \right) \right\Vert^2_{\mathrm{F}}  \right)^{\frac{1}{2}},
\end{align*}
and
\begin{align*}
\epsilon^{\text{pri}} &=  \epsilon^{\text{abs}} d \sqrt{K+1} +  \epsilon^{\text{rel}} \max \left\{ \left( \sum^K_{k=0} \left\Vert \Omega^{(t)}_k \right\Vert^2_{\mathrm{F}} \right)^{\frac{1}{2}} \, , \, \left( \sum^K_{k=0} \left\Vert \Omega^{(t)} + \Gamma^{(t)}_k
 \right\Vert^2_{\mathrm{F}} \right)^{\frac{1}{2}} \right\}, \\
\epsilon^{\text{dual}} &=  \epsilon^{\text{abs}} d \sqrt{K+1} + \epsilon^{\text{rel}} \left( \sum^K_{k=0} \left\Vert Z^{(t)}_k \right\Vert^2_{\mathrm{F}} \right)^{\frac{1}{2}} .
\end{align*}
}
\ENDWHILE
\STATE{\bfseries Output:} $\steponeoutput=\Omega^{(t)}+\Gamma^{(t)}_k$ for all $k=0,1,\ldots,K$.
\end{algorithmic}
\end{algorithm}

The final ADMM algorithm and the algorithm to solve the subproblem are summarized in Algorithm~\ref{alg:admm-mt-glasso} and Algorithm~\ref{alg:admm-mt-glasso-sub-problem}. 

For the proximal gradient descent algorithm to solve D-Trace loss objective~\eqref{eq:diff-network-est-Dtrace}, the stopping criterion is introduced as below, and the detailed algorithm is described in Algorithm~\ref{alg:prox-grad-D-trace}.

\begin{algorithm}[t]
% \footnotesize
\caption{Solver of sub problem~\eqref{eq:ADMM-update-2-sub}}
\label{alg:admm-mt-glasso-sub-problem}
\begin{algorithmic}[1]
\STATE {\bfseries Input:} $c=(c_0,c_1,\ldots,c_K)$; Initial $x^{(0)}$ and $\{ y^{(0)}_k \}^K_{k=0}$; $\lambda_{\text{M}}$, $\rho$ and $\{\alpha_k\}^K_{k=0}$.
\STATE {\bfseries Initialize:} $r^{\text{sub}}=\infty$ and $\epsilon^{\text{sub}}=0$. Define $\text{ST}_{\lambda}(\cdot)$ as in~\eqref{eq:soft-threshod-function}.
Let $r = 0$.
\WHILE{$r^{\text{sub}} > \epsilon^{\text{sub}}$}{
\STATE $r \leftarrow r+1$.
\STATE Let
\begin{equation*}
x^{(r)} = \text{ST}_{\lambda_{\text{M}} / (\rho (K+1)) } \left( \frac{1}{K+1} \sum^K_{k=0} \left(  c_k - y^{(r-1)}_k \right)  \right).
\end{equation*}
\STATE Let
\begin{equation*}
y^{(r)}_k = \text{ST}_{ \lambda_{\text{M}} \sqrt{\alpha_k} / \rho} \left(  c_k - x^{(r)}  \right) \quad \text{for all } 0 \leq k \leq K.
\end{equation*}
\STATE Let
\begin{align*}
r^{\text{sub}} &= \rho \left\vert \sum^K_{k=0} \left( y^{(r)}_k - y^{(r-1)}_k \right) \right\vert, \\
\epsilon^{\text{sub}} &= (K+1) \epsilon^{\text{abs}} + \epsilon^{\text{rel}} \rho \max \left\{ \sum^K_{k=0} \left\vert y^{(r)}_k \right\vert \, , \, \sum^K_{k=0} \left\vert y^{(r-1)}_k \right\vert \right\}.
\end{align*}
}
\ENDWHILE
\STATE {\bfseries Output:} $x^{(r)}$ and $\{ y^{(r)}_k \}^K_{k=0}$.
\end{algorithmic}
\end{algorithm}

\paragraph{Stopping criterion for proximal gradient.} 
If $\Gamma^{(t)}$ is the solution for~\eqref{eq:diff-network-est-Dtrace}, the optimization criterion requires that
\begin{equation*}
0 \in \nabla L_D (\Gamma^{(t)}) + \lambda^{(k)}_{\Gamma} \cdot 
 \partial \left\vert \Gamma^{(t)} \right\vert_1.
\end{equation*}
Note that by the definition of $\Gamma^{(t)}$ in~\eqref{eq:iterorign}, we have
\begin{equation}
\label{eq:opt-cond-proxiaml-grad}
0 \in \Gamma^{(t)} - \left( \Gamma^{(t-1)}-\eta\nabla L_D \left(\Gamma^{(t-1)}\right) \right) + \eta \lambda^{(k)}_{\Gamma} \cdot 
 \partial \left\vert \Gamma^{(t)} \right\vert_1,
\end{equation}
which implies that
\begin{equation*}
0 \in \nabla L_D (\Gamma^{(t)}) + \lambda^{(k)}_{\Gamma} \cdot 
 \partial \left\vert \Gamma^{(t)} \right\vert_1 + \frac{1}{\eta} \left( \Gamma^{(t)} - \Gamma^{(t-1)} \right) - \left[ \nabla L_D \left(\Gamma^{(t)}\right) - \nabla L_D \left(\Gamma^{(t-1)}\right) \right].
\end{equation*}
Thus, when
\begin{align*}
r^D &= \left\Vert \frac{1}{\eta} \left( \Gamma^{(t)} - \Gamma^{(t-1)} \right) - \left[ \nabla L_D \left(\Gamma^{(t)}\right) - \nabla L_D \left(\Gamma^{(t-1)}\right) \right] \right\Vert_{\mathrm{F}} \\
&= \left\Vert \frac{1}{\eta} \left( \Gamma^{(t)} - \Gamma^{(t-1)} \right) - \frac{1}{2} \widehat{\Sigma}^{(0)} \left( \Gamma^{(t)} - \Gamma^{(t-1)} \right) \widehat{\Sigma}^{(k)} - \frac{1}{2} \widehat{\Sigma}^{(k)} \left( \Gamma^{(t)} - \Gamma^{(t-1)} \right) \widehat{\Sigma}^{(0)} \right\Vert_{\mathrm{F}}
\end{align*}
is close to zero, we then have~\eqref{eq:opt-cond-proxiaml-grad} hold approximately. 
Therefore, given an absolute tolerance $\epsilon^{\text{abs}}>0$ and a relative tolerance $\epsilon^{\text{rel}}>0$, we then stop at iteration $t$ if
\begin{equation*}
r^D \leq \epsilon^{\text{abs}} d + \epsilon^{\text{rel}} \times  \max \left\{ \frac{1}{\eta} \left\Vert  \Gamma^{(t)} - \Gamma^{(t-1)} \right\Vert_{\mathrm{F}}, \left\Vert \frac{1}{2} \widehat{\Sigma}^{(0)} \left( \Gamma^{(t)} - \Gamma^{(t-1)} \right) \widehat{\Sigma}^{(k)} \right\Vert_{\mathrm{F}} \right\} .
\end{equation*}

\begin{algorithm}[t]
% \footnotesize
\caption{Proximal Gradient Method for D-Trace Loss}
\label{alg:prox-grad-D-trace}
\begin{algorithmic}[1]
\STATE {\bfseries Input:} $\widehat{\Sigma}^{(0)}$, $\widehat{\Sigma}^{(k)}$, $\lambda^{(k)}_{\Gamma}$, $\eta$, $\epsilon^{\text{abs}}$ and $\epsilon^{\text{rel}}$.
\STATE {\bfseries Initialize:} $\Gamma^{(0)}=0$, $r^D=\infty$, $\epsilon^D =0$ and $t = 0$.
\WHILE{$r^D > \epsilon^D$}{
\STATE $t \leftarrow t + 1$
\STATE Let
\begin{align*}
A^{(t-1)} &= \Gamma^{(t-1)}-\eta\nabla L_D \left(\Gamma^{(t-1)}\right) \\
&= \Gamma^{(t-1)} - \eta \left\{ \frac{1}{2} \left( \widehat{\Sigma}^{(k)} \Gamma^{(t-1)} \widehat{\Sigma}^{(0)} + \widehat{\Sigma}^{(0)} \Gamma^{(t-1)} \widehat{\Sigma}^{(k)} \right) - \left( \widehat{\Sigma}^{(0)} - \widehat{\Sigma}^{(k)} \right) \right\}.
\end{align*}
\STATE Let
\begin{equation*}
\Gamma^{(t)}_{jl} = \left[ \vert A^{(t-1)}_{jl} \vert - \lambda^{(k)}_{\Gamma} \eta \right]_{+} \cdot A^{(t-1)}_{jl} / \vert A^{(t-1)}_{jl} \vert, \qquad 1 \leq j,l \leq d.
\end{equation*}
\STATE Let
\begin{equation*}
r^D = \left\Vert \frac{1}{\eta} \left( \Gamma^{(t)} - \Gamma^{(t-1)} \right) - \frac{1}{2} \widehat{\Sigma}^{(0)} \left( \Gamma^{(t)} - \Gamma^{(t-1)} \right) \widehat{\Sigma}^{(k)} - \frac{1}{2} \widehat{\Sigma}^{(k)} \left( \Gamma^{(t)} - \Gamma^{(t-1)} \right) \widehat{\Sigma}^{(0)} \right\Vert_{\mathrm{F}},
\end{equation*}
and
\begin{equation*}
\epsilon^D = \epsilon^{\text{abs}} d + \epsilon^{\text{rel}} \times  \max \left\{ \frac{1}{\eta} \left\Vert  \Gamma^{(t)} - \Gamma^{(t-1)} \right\Vert_{\mathrm{F}}, \left\Vert  \frac{1}{2}\widehat{\Sigma}^{(0)} \left( \Gamma^{(t)} - \Gamma^{(t-1)} \right) \widehat{\Sigma}^{(k)} \right\Vert_{\mathrm{F}} \right\} .
\end{equation*}
}
\ENDWHILE
\STATE {\bfseries Output:} $\diffestimate = \Gamma^{(t)}$.
\end{algorithmic}
\end{algorithm}

\clearpage

\section{Minimax Optimal Rate for Differential Network Estimation}
\label{sec:minimax-opt-diff-network}

Differential network estimation aims to estimate the difference between two precision matrices without first estimating the individual precision matrices. Although existing studies focus on providing upper bounds for this problem~\citep{zhao2014direct,yuan2017differential,zhao2022fudge}, there is no known matching lower bound, making the minimax optimal rate an open question.
As a byproduct of our analysis, in this section, we provide a minimax optimal rate for differential network estimation problem under certain conditions. To the best of our knowledge, this is the first minimax optimal guarantee towards this direction.

We start by formulating the problem setup.
Note that we reintroduce some notation used in this section to make it self-contained, and one should not confuse it with the notation used in the other parts of the paper.
Suppose that we have $n_X$ i.i.d. samples $X_1,\ldots,X_{n_X} \sim N(0,\Omega^{-1}_X)$ and $n_Y$ i.i.d. samples $Y_1,\ldots,Y_{n_Y} \sim N(0,\Omega^{-1}_Y)$. Let $\Delta \coloneqq \Omega_Y - \Omega_X$ be the differential network between $\Omega_X$ and $\Omega_Y$. Our goal is to use samples from two populations to estimate $\Delta$. In addition, we assume that $\Omega_X$ and $\Omega_Y$ belong to the following parameter space.
\begin{multline}
\label{eq:param-space-diff-network}
\left( \Omega_X, \Omega_Y \right) \in \mathcal{M} \coloneqq \left\{ \Omega_X,\Omega_Y \in \mathbb{S}^{d \times d} \, : \,  c_1 \leq \gamma_{\min}(\Omega_X) \leq \gamma_{\max}(\Omega_X) \leq c_2 , \right. \\
\left. c_1 \leq \gamma_{\min}(\Omega_Y) \leq \gamma_{\max}(\Omega_Y) \leq c_2, \, \left\vert \Delta \right\vert_0 \leq h, \, \left\vert \Delta \right\vert_1 \leq C_{\Gamma} \right\},
\end{multline}
where $c_1,c_2,C_{\Gamma}>0$ are positive universal constants. The parameter space defined in~\eqref{eq:param-space-diff-network} requires that the smallest eigenvalues and the largest eigenvalues of $\Omega_X$ and $\Omega_Y$ are lower bounded and upper bounded, respectively.
Besides, we also require that the sparsity level of the differential network is bounded by $h$, and the $L_1$ norm of the differential network is bounded by universal constant.

We first present the minimax lower bound for the above estimation problem. We state the result in the following theorem.

\begin{theorem}
\label{thm:diff-network-low-bd}
Assume that
\begin{equation}
d \geq 4h, \, h \log d \geq 8 \log 3, \, \frac{h \log d}{n} \leq \min \left\{ 2, 8 (1-c_1)^2, 8(1-c_2)^2 \right\}, \, h \sqrt{\frac{\log d}{n}} \leq 4 C_{\Gamma}.
\end{equation}
We then have
\begin{equation*}
\min_{\widehat{\Delta}} \sup_{ (\Omega_X, \Omega_Y) \in \mathcal{M} } \mathbb{E} \left[ \left\Vert \widehat{\Delta} - \Delta \right\Vert^2_{\mathrm{F}} \right] \gtrsim \frac{h \log d}{ \min \left\{ n_X, n_Y \right\} }
\end{equation*}
\end{theorem}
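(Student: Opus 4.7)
The plan is to reduce the two-sample minimax lower bound to the one-sample lower bound already established in Theorem~\ref{thm:lower-bd-diff-net}, which can then be invoked essentially as a black box. The key observation is that if one of the two precision matrices is held fixed at a known value, the samples drawn from that fixed distribution carry no information about the differential network $\Delta$, so the two-sample problem collapses to a one-sample problem whose lower bound we already have.

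Without loss of generality, suppose $n_X \leq n_Y$, so that $\min\{n_X, n_Y\} = n_X$. I would restrict the supremum in the minimax risk to the submodel $\Omega_Y = I_d$, under which $\Delta = I_d - \Omega_X$ and hence $\Omega_X = I_d - \Delta = I_d + \Delta'$ with $\Delta' = -\Delta$. Since the $Y$-marginal is identical across all hypotheses in this submodel, the joint KL divergence tensorizes as
\begin{equation*}
D_{\mathrm{KL}}\bigl(\mathbb{P}_j^{n_X, n_Y} \,\big\|\, \mathbb{P}_0^{n_X, n_Y}\bigr) \;=\; n_X\, D_{\mathrm{KL}}\!\left(N(0,(\Omega_X^j)^{-1}) \,\big\|\, N(0,(\Omega_X^0)^{-1})\right),
\end{equation*}
matching exactly the information budget of the one-sample problem in Appendix~\ref{sec:proof-thm-lower-bd-diff-net} with $n$ replaced by $n_X$.

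Next, I would import the packing $\{\Omega^0, \ldots, \Omega^M\}$ constructed in the proof of Theorem~\ref{thm:lower-bd-diff-net} via Lemma~\ref{lemma:cov-number}, set $\Omega_X^j = \Omega^j$ and $\Omega_Y^j = I_d$, and verify that each pair lies in $\mathcal{M}$. The sparsity bound $|\Delta^j|_0 \leq h$ and the $L_1$ bound $|\Delta^j|_1 \leq C_\Gamma$ are inherited verbatim from the existing construction. The eigenvalue conditions in $\mathcal{M}$ require $c_1 \leq \gamma_{\min}(\Omega^j)$ and $\gamma_{\max}(\Omega^j) \leq c_2$ as well as $c_1 \leq 1 \leq c_2$ for the trivial matrix $I_d$; both follow from the hypotheses $h \log d / n \leq \min\{8(1-c_1)^2, 8(1-c_2)^2\}$ together with the eigenvalue control in the proof of Theorem~\ref{thm:lower-bd-diff-net}. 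With all hypotheses verified, Fano's inequality together with the KL bound from Lemma~\ref{lemma:KL-divergence} applies unchanged and yields
\begin{equation*}
\min_{\widehat{\Delta}}\sup_{(\Omega_X,\Omega_Y)\in\mathcal{M}} \mathbb{E}\!\left[\bigl\Vert \widehat{\Delta} - \Delta \bigr\Vert_{\mathrm F}^2\right] \,\gtrsim\, \frac{h \log d}{n_X}.
\end{equation*}

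Finally, swapping the roles of $X$ and $Y$ (fixing $\Omega_X = I_d$ and letting $\Omega_Y$ vary) gives the symmetric bound $h \log d / n_Y$, and taking the better of the two lower bounds produces the claimed $h \log d / \min\{n_X, n_Y\}$ rate. The only nontrivial step, and the one I would expect to require the most care, is confirming that the one-sample packing family fits inside the two-sample parameter space $\mathcal{M}$; in particular, one must check that $I_d$ is admissible for the fixed coordinate and that the hypotheses of Theorem~\ref{thm:diff-network-low-bd} are strong enough to invoke the conditions of Theorem~\ref{thm:lower-bd-diff-net} with $n = n_X$ (respectively $n_Y$). Once this compatibility is verified, the rest of the argument is essentially a reduction, not a fresh calculation.
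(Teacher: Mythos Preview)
Your proposal is correct and follows essentially the same reduction as the paper: fix one of the two precision matrices at $I_d$, observe that samples from that population carry no information about $\Delta$, and invoke the one-sample lower bound of Theorem~\ref{thm:lower-bd-diff-net}, then swap roles. The only cosmetic difference is that the paper treats Theorem~\ref{thm:lower-bd-diff-net} as a black box (writing $\inf_{\widehat{\Delta}}\sup_{(\Omega_X,\Omega_Y)\in\mathcal{M}} \geq \inf_{\widehat{\Omega}_Y}\sup_{\Omega_Y\in\mathcal{G}_2}$ directly), whereas you re-import the packing and re-apply Fano with the tensorized KL; your version is slightly more explicit about why the fixed-population samples do not help, but the argument is the same.
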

\begin{proof}
Let $\Omega_X=I_d$, note that
\begin{equation*}
\left( I_d, \Omega_Y \right) \in \mathcal{M} \Leftarrow \Omega_Y \in \mathcal{G}_2,
\end{equation*}
where $\mathcal{G}_2$ is defined in~\eqref{eq:low-bd-param-space-2}. Now that samples from population $X$ are useless for estimating $\Omega_Y$, and we can only rely on samples from population $Y$ to estimate $\Omega_Y$. By Lemma~\ref{lemma:lower-bd-diff-net}, we then have
\begin{equation*}
\min_{\widehat{\Delta}} \sup_{ (\Omega_X, \Omega_Y) \in \mathcal{M} } \mathbb{E} \left[ \left\Vert \widehat{\Delta} - \Delta \right\Vert^2_{\mathrm{F}} \right] \geq \min_{\widehat{\Omega}_Y} \sup_{\Omega_Y \in \mathcal{G}_2} \mathbb{E} \left[ \left\Vert \widehat{\Omega}_Y - \Omega_Y \right\Vert^2_{\mathrm{F}} \right] \gtrsim \frac{h \log d}{ n_Y }.
\end{equation*}
Similarly, we can show that
\begin{equation*}
\min_{\widehat{\Delta}} \sup_{ (\Omega_X, \Omega_Y) \in \mathcal{M} } \mathbb{E} \left[ \left\Vert \widehat{\Delta} - \Delta \right\Vert^2_{\mathrm{F}} \right] \gtrsim \frac{h \log d}{ n_X }.
\end{equation*}
Combine the above two inequalities, we have the final result.
\end{proof}

Next, we derive the matching upper bound. Let $\widehat{\Delta}$ be the D-Trace loss estimator defined in~\eqref{eq:diff-network-est-Dtrace}, and $\check{\Delta}(\tau)$ be the truncated version of $\widehat{\Delta}$ as defined in~\eqref{eq:proj-Gamma}.
Following directly from Theorem~\ref{thm:expected-frob-err-dtrace}, we then have the following theorem.
\begin{theorem}
\label{thm:diff-network-upp-bd}
Assume that $C_{\Gamma} \leq d^{\tau_3}$, where $C_{\Gamma}$ is the same universal constant used in~\eqref{eq:param-space-diff-network}. Let
\begin{equation*}
\lambda_{\Gamma} \asymp \sqrt{ \frac{ \log d }{ \min \left\{ n_X, n_Y \right\} } },
\end{equation*}
where $\lambda_{\Gamma}$ is the penalization parameter of D-Trace loss. Then for any $\tau \geq \tau_3$, we have
\begin{equation*}
\sup_{ (\Omega_X, \Omega_Y) \in \mathcal{M}} \mathbb{E} \left[ \left\Vert \check{\Delta}(\tau) - \Delta \right\Vert^2_{\mathrm{F}} \right] \lesssim \frac{h \log d}{ \min \left\{ n_X, n_Y \right\} }.
\end{equation*}
\end{theorem}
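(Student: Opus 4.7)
The plan is to reduce Theorem~\ref{thm:diff-network-upp-bd} to the high-probability Frobenius bound already available from Theorem~\ref{thm:analysis-Dtrace}, and then promote it to an expectation bound via the truncation $\check{\Delta}(\tau)$, which is the mechanism that turns a ``with probability $1-\delta$'' guarantee into a worst-case expected error. This is exactly what Theorem~\ref{thm:expected-frob-err-dtrace} packages, and the present result is just its specialization to the two-population differential-network class $\mathcal{M}$ in~\eqref{eq:param-space-diff-network}.

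First I would verify that every $(\Omega_X,\Omega_Y)\in\mathcal{M}$ sits inside the hypotheses of Theorem~\ref{thm:analysis-Dtrace} applied with $K=1$, target $\Omega^{(0)}=\Omega_X$ and source $\Omega^{(1)}=\Omega_Y$ (or vice versa). The two-sided eigenvalue bounds $c_1\leq\gamma_{\min}(\Omega)\leq\gamma_{\max}(\Omega)\leq c_2$ immediately give $M_\Omega\leq c_2=O(1)$ and $|\Sigma^{(k)}|_\infty\leq \|\Omega^{(k)\,-1}\|_2\leq 1/c_1=O(1)$, so Assumption~\ref{assump:eig-element-upp-bd} is met. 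Assumption~\ref{assump:model-structure} is trivially met by declaring $\shared$ to collect the entries common to $\Omega_X$ and $\Omega_Y$ and letting the two unique components encode $\Delta$; the constraints $|\Delta|_0\leq h$ and $|\Delta|_1\leq C_\Gamma$ match the required sparsity and $\ell_1$ budget. With the stated choice $\lambda_\Gamma\asymp\sqrt{\log d/\min\{n_X,n_Y\}}$, Theorem~\ref{thm:analysis-Dtrace} applied with failure probability $\delta=d^{-\alpha}$ for a large constant $\alpha>0$ yields, on an event $\mathcal{E}$ of probability at least $1-d^{-\alpha}$,
\begin{equation*}
\bigl\|\widehat{\Delta}-\Delta\bigr\|_{\mathrm{F}}\;\lesssim\;\sqrt{h}\,\lambda_\Gamma\;\lesssim\;\sqrt{\frac{h\log d}{\min\{n_X,n_Y\}}}.
\end{equation*}

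Next I would exploit the truncation defining $\check{\Delta}(\tau)$ (the projection in~\eqref{eq:proj-Gamma}) in two complementary ways. On the good event $\mathcal{E}$, since the true parameter $\Delta$ lies in the truncation set (using $|\Delta|_1\leq C_\Gamma\leq d^{\tau_3}\leq d^\tau$), the projection is non-expansive in Frobenius norm, so $\|\check{\Delta}(\tau)-\Delta\|_{\mathrm{F}}\leq\|\widehat{\Delta}-\Delta\|_{\mathrm{F}}$ and the same rate applies. On the complementary event $\mathcal{E}^c$, the truncation forces $\|\check{\Delta}(\tau)\|_{\mathrm{F}}$ to be at most polynomial in $d^\tau$, and therefore $\|\check{\Delta}(\tau)-\Delta\|_{\mathrm{F}}^2\lesssim d^{2\tau}$ deterministically. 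Combining the two contributions,
\begin{equation*}
\mathbb{E}\bigl\|\check{\Delta}(\tau)-\Delta\bigr\|_{\mathrm{F}}^2\;\lesssim\;\frac{h\log d}{\min\{n_X,n_Y\}}\;+\;d^{2\tau}\cdot d^{-\alpha},
\end{equation*}
and choosing $\alpha$ large enough in terms of $\tau$ (e.g., $\alpha=2\tau+2$) makes the residual negligible relative to the main term under the standing regime $d\to\infty$.

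The main obstacle, and the only genuinely non-formal step, is the passage from ``with probability $1-\delta$'' to expectation. This requires that (i) the projection operator defining $\check{\Delta}(\tau)$ really is 1-Lipschitz in $\|\cdot\|_{\mathrm{F}}$, and (ii) the set onto which we project is rich enough to contain every $\Delta$ permitted by $\mathcal{M}$, so that the good-event bound survives projection without loss. Both facts are provided by the choice $\tau\geq\tau_3$ in conjunction with the assumption $C_\Gamma\leq d^{\tau_3}$; once they are in hand, the theorem follows by a clean union bound. No further technical work beyond invoking Theorem~\ref{thm:analysis-Dtrace} and book-keeping the truncation is required.
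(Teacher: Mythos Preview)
Your proposal is correct and follows essentially the same approach as the paper: the paper simply states that Theorem~\ref{thm:diff-network-upp-bd} follows directly from Theorem~\ref{thm:expected-frob-err-dtrace}, and you have correctly identified this reduction and spelled out its content (high-probability bound from Theorem~\ref{thm:analysis-Dtrace}, non-expansiveness of the Frobenius-ball projection onto a set containing $\Delta$, and the truncation-plus-polynomial-tail argument of Theorem~\ref{thm:Frob-err-expec}'s proof). The only cosmetic addition the paper's proof of Theorem~\ref{thm:expected-frob-err-dtrace} makes explicit is the observation $\|\Delta\|_{\mathrm{F}}\leq|\Delta|_1\leq C_\Gamma\leq d^{\tau_3}\leq d^{\tau}$, which you also invoke to justify that $\Delta$ lies in the projection set.
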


Combine Theorem~\ref{thm:diff-network-low-bd} and Theorem~\ref{thm:diff-network-upp-bd}, we then have the final main result of this section.
\begin{theorem}
\label{thm:diff-network-minimax-optimal}
Assume that the conditions of Theorem~\ref{thm:diff-network-low-bd} and Theorem~\ref{thm:diff-network-upp-bd} hold. We have
\begin{equation*}
\min_{\widehat{\Delta}} \sup_{ (\Omega_X, \Omega_Y) \in \mathcal{M} } \mathbb{E} \left[ \left\Vert \widehat{\Delta} - \Delta \right\Vert^2_{\mathrm{F}} \right] \asymp \frac{h \log d}{ \min \left\{ n_X, n_Y \right\} }.
\end{equation*}

\end{theorem}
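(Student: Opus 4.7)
The statement is a direct synthesis of the two theorems immediately preceding it, so the plan is simply to combine the matching lower and upper bounds. I would first invoke Theorem~\ref{thm:diff-network-low-bd}, which already established
\[
\min_{\widehat{\Delta}} \sup_{(\Omega_X,\Omega_Y)\in\mathcal{M}} \mathbb{E}\bigl[\,\|\widehat{\Delta}-\Delta\|_{\mathrm{F}}^2\,\bigr] \gtrsim \frac{h\log d}{\min\{n_X,n_Y\}},
\]
recalling that its proof works by fixing $\Omega_X=I_d$ (resp.\ $\Omega_Y=I_d$) to render one sample unusable and then reducing to the one-sample lower bound of Theorem~\ref{thm:lower-bd-diff-net} with sample size $n_Y$ (resp.\ $n_X$); taking the worse of the two embeddings produces the $\min\{n_X,n_Y\}$ in the denominator.

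Next I would invoke Theorem~\ref{thm:diff-network-upp-bd}, which shows that the truncated D-Trace estimator $\check{\Delta}(\tau)$ with the choice $\lambda_\Gamma \asymp \sqrt{\log d/\min\{n_X,n_Y\}}$ achieves
\[
\sup_{(\Omega_X,\Omega_Y)\in\mathcal{M}} \mathbb{E}\bigl[\,\|\check{\Delta}(\tau)-\Delta\|_{\mathrm{F}}^2\,\bigr] \lesssim \frac{h\log d}{\min\{n_X,n_Y\}}.
\]
Since this exhibits a concrete estimator attaining the matching upper rate, it provides an upper bound on the infimum over all estimators, and the two inequalities together yield the $\asymp$ relation claimed in the statement.

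The only small subtlety to verify before quoting the two ingredients is that the assumptions carry over compatibly: the conditions of Theorem~\ref{thm:diff-network-low-bd} are assumed directly, the condition $C_\Gamma \leq d^{\tau_3}$ of Theorem~\ref{thm:diff-network-upp-bd} is consistent with the $|\Delta|_1 \leq C_\Gamma$ constraint built into $\mathcal{M}$, and the tuning parameter $\lambda_\Gamma$ can be set to the prescribed order. There is no real obstacle here; the content of the theorem is precisely that the previously derived lower and upper bounds match, and the proof is a one-line consequence of those two results applied to the same parameter space $\mathcal{M}$.
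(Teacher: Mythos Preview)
Your proposal is correct and matches the paper's approach exactly: the paper does not even supply a formal proof for this theorem, merely noting that it follows by combining Theorem~\ref{thm:diff-network-low-bd} and Theorem~\ref{thm:diff-network-upp-bd}. Your write-up spells out precisely this combination, so there is nothing to add.
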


\section{Upper Bound for Expected Error}
\label{sec:upp-bd-expe-err}

In this section, we develop theoretical guarantees for the expected error. We start with the analysis of the \myalgMT.
% Let $\mathcal{B}_{\mathrm{F}} ( R ) \coloneqq \left\{ A \in \mathbb{R}^{d \times d} : \left\Vert A \right\Vert_{\mathrm{F}} \leq R \right\}$ be the ball with radius $R$. For $\tau>0$ and $0 \leq k \leq K$, define the projection of $\steponeoutput$ onto the ball with radius $d^{\tau}$ as
% \begin{equation}
% \label{eq:proj-Omega}
% \check{\Omega}_{k}(\tau) \in \arg \min_{ \Omega \in \mathcal{B}_{\mathrm{F}} ( d^{\tau} ) } \left\Vert \Omega - \steponeoutput  \right\Vert_{\mathrm{F}}.
% \end{equation}
% Note that $\check{\Omega}_{k}(\tau)$ is designed exclusively for theoretical analysis, so computing it in practice is unnecessary.
The subsequent theorem provides the expected error measured in the Frobenius norm.
\begin{theorem}
\label{thm:Frob-err-expec}
Suppose that Assumption~\ref{assump:model-structure} and Assumption~\ref{assump:eig-element-upp-bd} hold.
Assume that $2 \left( K + 2 \right) \leq d^{\tau_1}$ and $N \leq d^{\tau_2}$, for some universal constants $\tau_1,\tau_2>0$.
In addition, assume that
\begin{equation*}
\frac{ \log d }{ \min_{0 \leq k \leq K} n_k } \lesssim 1.
\end{equation*}
Let $\widebar{n}={N}/(K+1)$, $M_{\mathrm{op}} \geq M_{\Omega}$ and $M_{\mathrm{op}}=O(1)$.
If $\lambda_{\text{M}} \asymp \sqrt{ { \log d }/{ N } }$, we have
\begin{equation*}
\mathbb{E} \left[ \sum^K_{k=0} \alpha_k \left\Vert \widehat{\Omega}_{k} - \Omega^{(k)} \right\Vert^2_{\mathrm{F}} \right] \lesssim \left( \frac{ s }{ N } + \frac{ h }{ \widebar{n} } \right) \log d.
\end{equation*}
\end{theorem}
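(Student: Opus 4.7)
The plan is to upgrade the high-probability Frobenius-norm bound in Theorem~\ref{thm:multi-task-frob} to an expected-value bound by the standard truncation trick: split the expectation over a favorable event of probability $1-\delta$ and its complement, controlling the complement by a crude deterministic bound coming from the constraint set $\mathcal{C}(M_{\mathrm{op}})$, and then optimize~$\delta$.

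First, I would fix $\delta = d^{-(\tau_2+2)}$ (or any polynomially small choice with exponent exceeding $\tau_2$). Under the assumptions $2(K+2) \le d^{\tau_1}$ and $N \le d^{\tau_2}$, this gives $\log(2(K+2)d^2/\delta) \asymp \log d$, and the lower bound $\min_k n_k \gtrsim \log d$ assumed in the statement then implies the sample-size requirement~\eqref{eq:min-nk-large} of Theorem~\ref{thm:multi-task-frob}. Choosing $\lambda_{\text{M}} \asymp \sqrt{\log(2(K+2)d^2/\delta)/N} \asymp \sqrt{\log d/N}$ satisfies~\eqref{eq:mlut-Frob-lambda-large-enough}, and since $M_{\mathrm{op}} = O(1)$ is taken to be a universal constant, the constant $\kappa = (2M_\Omega+M_{\mathrm{op}})^{-2}$ is also $\Theta(1)$. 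Applying Theorem~\ref{thm:multi-task-frob} yields an event $\mathcal{E}$ with $\mathbb{P}(\mathcal{E}) \ge 1-\delta$ on which
\begin{equation*}
\sum_{k=0}^K \alpha_k \bigl\|\steponeoutput - \Omega^{(k)}\bigr\|_{\mathrm{F}}^2 \;\lesssim\; (s + (K+1)h)\,\frac{\log d}{N} \;=\; \frac{s\log d}{N} + \frac{h\log d}{\bar n}.
\end{equation*}

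Second, on the complementary event $\mathcal{E}^c$ I would use the constraint baked into~\eqref{eq:opt-domain}: since $\widehat{\Theta} \in \mathcal{C}(M_{\mathrm{op}})$, we have $\|\steponeoutput\|_2 \le M_{\mathrm{op}}$, and by Assumption~\ref{assump:eig-element-upp-bd}, $\|\Omega^{(k)}\|_2 \le M_\Omega$. Hence $\|\steponeoutput - \Omega^{(k)}\|_{\mathrm{F}}^2 \le d\,\|\steponeoutput - \Omega^{(k)}\|_2^2 \le d\,(M_{\mathrm{op}}+M_\Omega)^2 \lesssim d$, which gives the deterministic worst-case bound $\sum_k \alpha_k \|\steponeoutput - \Omega^{(k)}\|_{\mathrm{F}}^2 \lesssim d$.

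Combining the two pieces,
\begin{equation*}
\mathbb{E}\!\left[\sum_{k=0}^K \alpha_k \bigl\|\steponeoutput - \Omega^{(k)}\bigr\|_{\mathrm{F}}^2\right] \;\lesssim\; \frac{s\log d}{N} + \frac{h\log d}{\bar n} + \delta\cdot d \;\lesssim\; \frac{s\log d}{N} + \frac{h\log d}{\bar n} + \frac{1}{d^{\tau_2+1}}.
\end{equation*}
Because $N \le d^{\tau_2}$ and $s \ge 1$, the residual term $d^{-(\tau_2+1)} \le (d N)^{-1} \le N^{-1} \le (s\log d)/N$, which is absorbed by the leading term, yielding the claimed rate. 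The only potentially delicate step is verifying that the polynomial choice of $\delta$ is simultaneously small enough to kill the deterministic $d$-factor on $\mathcal{E}^c$ and compatible with the sample-size condition~\eqref{eq:min-nk-large}; the assumptions $N \le d^{\tau_2}$, $K+2 \le d^{\tau_1}$, and $\log d/\min_k n_k \lesssim 1$ are precisely what make both requirements hold at the same cost $\log(1/\delta) \asymp \log d$.
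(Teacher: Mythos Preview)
Your proposal is correct and follows essentially the same approach as the paper's proof: both split the expectation over the high-probability event from Theorem~\ref{thm:multi-task-frob} and its complement, bound the complement via the deterministic $O(d)$ control coming from $\|\steponeoutput\|_2\le M_{\mathrm{op}}$ and $\|\Omega^{(k)}\|_2\le M_\Omega$, and then choose $\delta$ to be a suitable negative power of $d$ (the paper writes $\delta=d^{-\tau'}$ with $\tau'$ determined implicitly, whereas you fix $\tau'=\tau_2+2$ directly) so that the residual $\delta\cdot d$ is absorbed using $N\le d^{\tau_2}$.
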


Refer to the proof in Appendix~\ref{sec:proof-thm-Frob-err-expec}. The rate described in Theorem~\ref{thm:Frob-err-expec} consists of two parts. The first part, which is of the order $(s \log d)/{ N }$, refers to the estimation of the shared component. The second part, of the order $( h \log d ) / { \widebar{n} }$, relates to the estimation of the individual components.

As discussed in Section~\ref{sec:thm-trans-glasso},  the differential network estimate $\diffestimate$ is considered the result of a black-box algorithm, with the presumption that its estimation errors are appropriately controlled.
Let $\mathcal{B}_{\mathrm{F}} ( R ) \coloneqq \left\{ A \in \mathbb{R}^{d \times d} : \left\Vert A \right\Vert_{\mathrm{F}} \leq R \right\}$ be the ball with radius $R$. For $\tau>0$ and $0 \leq k \leq K$, we define
\begin{equation}
\label{eq:proj-Gamma}
\diffproj (\tau) \in \arg \min_{ \Gamma \in \mathcal{B}_{\mathrm{F}} ( d^{\tau} ) } \left\Vert \Gamma - \diffestimate  \right\Vert_{\mathrm{F}} .
\end{equation}
$\diffproj(\tau)$ is utilized solely for theoretical reasons.
Suppose that by choosing $\tau$ appropriately, we have
\begin{equation}
\label{eq:diff-network-high-prob-err-Frob-expec}
\mathbb{E} \left[ \left\Vert \diffproj (\tau) - \diff \right\Vert^2_{\mathrm{F}} \right] \lesssim \, \widebar{g}^{(k)}_{\mathrm{F}} (  n_0, n_k, d, h, M_{\Gamma} ) \coloneqq \widebar{g}^{(k)}_{\mathrm{F}} \quad \text{for all} \ 0 \leq k \leq K.
\end{equation}

We define
\begin{equation}
\label{eq:proj-Omega-tilde}
\targetproj (\tau) = \sum^K_{k=0} \alpha_k \left( \steponeoutput - \diffproj (\tau) \right),
\end{equation}
$\diffproj (\tau)$ is defined in~\eqref{eq:proj-Gamma}.
Combining~\eqref{eq:diff-network-high-prob-err-Frob-expec} with Theorem~\ref{thm:Frob-err-expec} yields the following theorem with expected error upper bound for \myalg.
\begin{theorem}
\label{thm:expec-error-upp-general}
If the conditions of Theorem~\ref{thm:Frob-err-expec} are satisfied
and $\tau$ is chosen so that~\eqref{eq:diff-network-high-prob-err-Frob-expec} holds, we have
\begin{equation*}
\mathbb{E} \left[ \left\Vert \targetproj (\tau) - \Omega^{(0)} \right\Vert^2_{\mathrm{F}} \right] \lesssim \left( \frac{ s }{ N } + \frac{ h }{ \widebar{n} } \right) \log d + \sum^K_{k=0} \alpha_k \widebar{g}^{(k)}_{\mathrm{F}} \, .
\end{equation*}
\end{theorem}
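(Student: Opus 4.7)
The plan is to decompose the estimation error into an initialization component and a differential-network-refinement component, and then apply the two ingredients already in hand: the Frobenius bound for \myalgMT from Theorem~\ref{thm:Frob-err-expec} and the black-box hypothesis~\eqref{eq:diff-network-high-prob-err-Frob-expec}. The starting observation is the algebraic identity
\begin{equation*}
\Omega^{(0)} \;=\; \sum_{k=0}^K \alpha_k \left(\Omega^{(k)} - \diff\right),
\end{equation*}
which holds because $\sum_k \alpha_k = 1$, $\diff = \Omega^{(k)} - \Omega^{(0)}$ for $k\geq 1$, and $\widehat{\Psi}^{(0)} = \Psi^{(0)} = 0$ by convention. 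Subtracting from the definition $\targetproj(\tau) = \sum_k \alpha_k\bigl(\steponeoutput - \diffproj(\tau)\bigr)$ gives the clean decomposition
\begin{equation*}
\targetproj(\tau) - \Omega^{(0)} \;=\; \sum_{k=0}^K \alpha_k \bigl(\steponeoutput - \Omega^{(k)}\bigr) \;-\; \sum_{k=0}^K \alpha_k \bigl(\diffproj(\tau) - \diff\bigr).
\end{equation*}

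Next, I would apply $\|a+b\|_{\mathrm{F}}^2 \leq 2\|a\|_{\mathrm{F}}^2 + 2\|b\|_{\mathrm{F}}^2$ followed by Jensen's inequality (using $\sum_k \alpha_k = 1$) to obtain
\begin{equation*}
\left\Vert \targetproj(\tau) - \Omega^{(0)} \right\Vert_{\mathrm{F}}^{2} \;\leq\; 2 \sum_{k=0}^K \alpha_k \left\Vert \steponeoutput - \Omega^{(k)} \right\Vert_{\mathrm{F}}^{2} \;+\; 2 \sum_{k=0}^K \alpha_k \left\Vert \diffproj(\tau) - \diff \right\Vert_{\mathrm{F}}^{2}.
\end{equation*}
Taking expectations and plugging in Theorem~\ref{thm:Frob-err-expec} for the first sum and hypothesis~\eqref{eq:diff-network-high-prob-err-Frob-expec} for each summand in the second then yields the claimed rate, since the weighted sum $\sum_k \alpha_k\|\steponeoutput-\Omega^{(k)}\|_{\mathrm{F}}^2$ is precisely the quantity controlled by that theorem.

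There is essentially no obstacle in this proof: the heavy lifting was done earlier. The only two places where one must be slightly careful are (i) verifying the identity $\Omega^{(0)} = \sum_k \alpha_k(\Omega^{(k)} - \diff)$, which uses $\widehat{\Psi}^{(0)}=\Psi^{(0)}=0$, and (ii) ensuring that the replacement of $\diffestimate$ by its truncated version $\diffproj(\tau)$ propagates through correctly---this is why the statement is phrased in terms of $\diffproj(\tau)$ rather than $\diffestimate$, so the assumption~\eqref{eq:diff-network-high-prob-err-Frob-expec} directly supplies the needed expected-Frobenius bound with no further truncation argument required. The triangle/Jensen step and linearity of expectation finish the argument in a few lines.
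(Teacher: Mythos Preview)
Your proposal is correct and is precisely the argument the paper has in mind: the paper does not write out a separate proof, stating only that the result follows by ``combining~\eqref{eq:diff-network-high-prob-err-Frob-expec} with Theorem~\ref{thm:Frob-err-expec},'' and your decomposition via $\Omega^{(0)}=\sum_k\alpha_k(\Omega^{(k)}-\diff)$, followed by the elementary inequality $\|a+b\|_{\mathrm F}^2\le 2\|a\|_{\mathrm F}^2+2\|b\|_{\mathrm F}^2$ and Jensen, is exactly the intended way to make that combination explicit.
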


We then characterize $\widebar{g}^{(k)}_{\mathrm{F}}$ in the case where the D-Trace loss estimator in~\eqref{eq:diff-network-est-Dtrace} is used.
Recall that $\diffproj (\tau)$ is defined by~\eqref{eq:diff-network-high-prob-err-Frob-expec}.
\begin{theorem}
\label{thm:expected-frob-err-dtrace}
Suppose that Assumption~\ref{assump:model-structure} and Assumption~\ref{assump:eig-element-upp-bd} hold.
%, and assume that $\max_{0 \leq k \leq K} \left\vert \Sigma^{(k)} \right\vert_{\infty}$ is upper bounded by some universal constant. 
Assume that $2 (K+1) \leq d^{\tau_1}$, $N \leq d^{\tau_2}$ and $M_{\Gamma} \leq d^{\tau_3}$ for some universal constants $\tau_1,\tau_2,\tau_3>0$.
Let
\begin{equation*}
\lambda^{(k)}_{\Gamma} \asymp M_{\Gamma} \sqrt{ \frac{ \log d }{ \min \left\{ n_k,n_0 \right\} } }.
\end{equation*}
Then, for any $\tau \geq \tau_3$, we have
\begin{equation*}
\mathbb{E} \left[ \left\Vert \diffproj (\tau) - \diff \right\Vert^2_{\mathrm{F}} \right] \lesssim \frac{ M^2_{\Gamma} \, h \log d }{ \min \left\{ n_k,n_0 \right\} }.
\end{equation*}
\end{theorem}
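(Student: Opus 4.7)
The plan is to convert the high-probability bound on $\widehat{\Psi}^{(k)}$ provided by Theorem~\ref{thm:Dtrace} (equivalently, Theorem~\ref{thm:analysis-Dtrace}) into an expected squared-Frobenius bound on the projected estimator $\diffproj(\tau)$ via a standard truncation argument. First I would set a failure probability of the form $\delta = d^{-c}$ for a sufficiently large constant $c$ to be fixed later (depending only on $\tau,\tau_1,\tau_2,\tau_3$), and invoke Theorem~\ref{thm:Dtrace} at this $\delta$. The hypotheses $2(K+1)\le d^{\tau_1}$ and $M_\Gamma \le d^{\tau_3}$ collapse $\log(2(K+1)d^2/\delta)$ to $\log d$ up to constants, and the prescribed $\lambda^{(k)}_\Gamma$ matches the required scale. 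Thus, on a good event $\mathcal{E}$ of probability at least $1-d^{-c}$, we have simultaneously for all $k$
\begin{equation*}
\left\Vert \diffestimate - \diff \right\Vert_{\mathrm{F}} \lesssim M_\Gamma \sqrt{\frac{h\log d}{\min\{n_k,n_0\}}}.
\end{equation*}

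Next I would exploit the projection. By Assumption~\ref{assump:model-structure}, $\|\diff\|_{\mathrm{F}} \le |\diff|_1 \le 2M_\Gamma$, and since $M_\Gamma \le d^{\tau_3} \le d^{\tau}$ (absorbing the factor $2$ into a slight enlargement of $\tau_3$ if needed), the target $\diff$ lies inside the convex ball $\mathcal{B}_{\mathrm{F}}(d^\tau)$. Projection onto a convex set containing the target is nonexpansive in the Frobenius norm, so on $\mathcal{E}$
\begin{equation*}
\left\Vert \diffproj(\tau) - \diff \right\Vert_{\mathrm{F}} \le \left\Vert \diffestimate - \diff \right\Vert_{\mathrm{F}} \lesssim M_\Gamma \sqrt{\frac{h\log d}{\min\{n_k,n_0\}}}.
\end{equation*}
On the complementary event $\mathcal{E}^c$, I would use the deterministic bound $\|\diffproj(\tau)-\diff\|_{\mathrm{F}} \le d^\tau + \|\diff\|_{\mathrm{F}} \le 2d^\tau$, giving $\|\diffproj(\tau)-\diff\|_{\mathrm{F}}^2 \le 4 d^{2\tau}$ deterministically.

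Then I would split the expectation as
\begin{equation*}
\mathbb{E}\!\left[\|\diffproj(\tau)-\diff\|_{\mathrm{F}}^2\right]
= \mathbb{E}\!\left[\|\diffproj(\tau)-\diff\|_{\mathrm{F}}^2\mathds{1}_{\mathcal{E}}\right]
+ \mathbb{E}\!\left[\|\diffproj(\tau)-\diff\|_{\mathrm{F}}^2\mathds{1}_{\mathcal{E}^c}\right]
\end{equation*}
and bound the first piece by the good-event estimate and the second by $4d^{2\tau}\cdot d^{-c}$. Using $\min\{n_k,n_0\} \le N \le d^{\tau_2}$, choosing $c \ge 2\tau + \tau_2 + 2$ suffices so that $d^{2\tau - c} \lesssim (M_\Gamma^2 h \log d)/\min\{n_k,n_0\}$, at which point the bad-event contribution is absorbed into the target rate.

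The main obstacle is bookkeeping rather than any deep new idea: one must make sure the polynomial-in-$d$ projection radius, the polynomial-in-$d$ failure probability, and the hypothesis $M_\Gamma \le d^{\tau_3}$ are compatible so that (i) $\diff$ lies inside the projection ball, (ii) the sample-size requirement $\min_{0\le k\le K}n_k \gg h^2\log(2(K+1)d^2/\delta)$ of Theorem~\ref{thm:Dtrace} either holds or renders the claim trivially true (when it fails badly, $M_\Gamma^2 h\log d/\min\{n_k,n_0\}$ already exceeds the trivial bound $4d^{2\tau}$ after choosing $\tau$ close to $\tau_3$), and (iii) the $M_\Gamma$ dependence is tracked correctly so the final $M_\Gamma^2$ prefactor matches the statement. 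Once these polynomial exponents are lined up, a single union bound over $k\in[K]$ and the split above yield the claimed bound.
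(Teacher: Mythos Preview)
Your proposal is correct and follows essentially the same approach as the paper. The paper's own proof is extremely terse---it simply notes that $\|\diff\|_{\mathrm{F}}\le|\diff|_1\le 2M_\Gamma$ (so the target lies in the projection ball) and then defers to ``the proof of Theorem~\ref{thm:Frob-err-expec},'' which is precisely the split-expectation argument you spell out: choose $\delta=d^{-c}$, use the high-probability D-Trace bound on the good event together with the nonexpansiveness of the projection, and use the deterministic $O(d^{2\tau})$ bound on the bad event, choosing $c$ large enough (relative to $\tau,\tau_2$) that the bad-event contribution is dominated by the target rate.
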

\begin{proof}
Note that $\Vert A \Vert_{\mathrm{F}} \leq \vert A \vert_1$ for any matrix $A$. 
Thus we have $\Vert \diff \Vert_{\mathrm{F}} \leq \Vert \diff \Vert_1  \leq 2 M_{\Gamma}$.
The rest of the proof is similar to the proof of Theorem~\ref{thm:Frob-err-expec}.
\end{proof}

By Theorem~\ref{thm:expected-frob-err-dtrace}, we have
\begin{equation*}
\widebar{g}^{(k)}_{\mathrm{F}} =  \frac{ M^2_{\Gamma} \, h \log d }{ \min \left\{ n_k,n_0 \right\} }
\end{equation*}
for D-Trace loss estimator. Plugging the above results into Theorem~\ref{thm:expec-error-upp-general}, we then have the following corollary.

\begin{corollary}
\label{corollary:convg-Trans-GLasso-Frob-expec-error}
Let $\widehat{\Omega}^{(0)}$ be obtained by \myalg~\eqref{eq:Trans-GLasso} with the D-Trace loss estimator used in Step~1. 
Suppose that Assumption~\ref{assump:model-structure} and Assumption~\ref{assump:eig-element-upp-bd} hold, and that the conditions in Theorem~\ref{thm:Frob-err-expec} and Theorem~\ref{thm:expected-frob-err-dtrace} are satisfied.
If $2 \left( K + 2 \right) \leq d^{\tau_1}$, $N \leq d^{\tau_2}$, $M_{\Gamma} \leq d^{\tau_3}$ for some universal constants $\tau_1,\tau_2,\tau_3>0$, and 
\begin{equation*}
\quad \lambda_{\text{M}} \asymp \sqrt{ \frac{ \log d }{ N } }, \quad \lambda^{(k)}_{\Gamma} \asymp M_{\Gamma} \sqrt{ \frac{ \log d }{ \min \left\{ n_k,n_0 \right\} } } \quad \text{ for all } k \in [K],
\end{equation*}
then for any $\tau \geq \tau_3$,
we have
\begin{equation}
\label{eq:expected-error-trans-glasso}
\mathbb{E} \left[ \left\Vert \targetproj (\tau) - \Omega^{(0)} \right\Vert^2_{\mathrm{F}} \right] \lesssim\left( \frac{ s }{ N } + (1 + M^2_{\Gamma}) \cdot \frac{ h }{ \widebar{n} } + M^2_{\Gamma} \cdot \frac{h}{n_0} \right) \log d \, .
\end{equation}
\end{corollary}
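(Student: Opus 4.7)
The plan is to invoke the two quoted theorems as black boxes and then collapse the resulting sum over sources into the three advertised terms by a simple arithmetic lemma. First, I would check that the standing hypotheses of both Theorem~\ref{thm:expec-error-upp-general} and Theorem~\ref{thm:expected-frob-err-dtrace} are in force: Assumptions~\ref{assump:model-structure} and~\ref{assump:eig-element-upp-bd} are assumed, the polynomial-in-$d$ bounds on $2(K+2)$, $N$, and $M_{\Gamma}$ are given, $\lambda_{\text{M}}$ and $\lambda^{(k)}_{\Gamma}$ are chosen at exactly the rates the two theorems require, $M_{\mathrm{op}} \geq M_\Omega$ is available, and the mild local-sample-size condition $(\log d)/\min_{k} n_k \lesssim 1$ is inherited. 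With these in hand, Theorem~\ref{thm:expec-error-upp-general} supplies
\begin{equation*}
\mathbb{E}\bigl[\bigl\|\targetproj(\tau) - \Omega^{(0)}\bigr\|_{\mathrm{F}}^{2}\bigr]\;\lesssim\;\Bigl(\tfrac{s}{N} + \tfrac{h}{\widebar{n}}\Bigr)\log d \;+\; \sum_{k=0}^{K}\alpha_{k}\,\widebar{g}^{(k)}_{\mathrm{F}},
\end{equation*}
and Theorem~\ref{thm:expected-frob-err-dtrace} identifies $\widebar{g}^{(k)}_{\mathrm{F}} \lesssim M_{\Gamma}^{2} h \log d / \min\{n_{k}, n_{0}\}$ for $k \geq 1$, with $\widebar{g}^{(0)}_{\mathrm{F}} = 0$ since $\widehat{\Psi}^{(0)} = 0$ by construction.

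Next I would bound the source-aggregation term. Using $1/\min\{n_{k}, n_{0}\} \leq 1/n_{k} + 1/n_{0}$ and $\alpha_{k} = n_{k}/N$, I obtain
\begin{equation*}
\sum_{k=1}^{K}\alpha_{k}\,\widebar{g}^{(k)}_{\mathrm{F}} \;\lesssim\; M_{\Gamma}^{2} h \log d \,\sum_{k=1}^{K}\Bigl(\tfrac{\alpha_{k}}{n_{k}} + \tfrac{\alpha_{k}}{n_{0}}\Bigr) \;\leq\; M_{\Gamma}^{2} h \log d\,\Bigl(\tfrac{K+1}{N} + \tfrac{1}{n_{0}}\Bigr) \;=\; M_{\Gamma}^{2} h \log d\,\Bigl(\tfrac{1}{\widebar{n}} + \tfrac{1}{n_{0}}\Bigr),
\end{equation*}
where in the second step I used $\sum_{k=1}^{K} \alpha_{k}/n_{k} = K/N \leq (K+1)/N = 1/\widebar{n}$ and $\sum_{k=1}^{K}\alpha_{k} \leq 1$. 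Plugging this back into the Theorem~\ref{thm:expec-error-upp-general} bound yields
\begin{equation*}
\mathbb{E}\bigl[\bigl\|\targetproj(\tau) - \Omega^{(0)}\bigr\|_{\mathrm{F}}^{2}\bigr] \;\lesssim\; \Bigl(\tfrac{s}{N} + (1+M_{\Gamma}^{2})\tfrac{h}{\widebar{n}} + M_{\Gamma}^{2}\tfrac{h}{n_{0}}\Bigr)\log d,
\end{equation*}
which is exactly the claim.

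Honestly, there is no hard step here: the proof is essentially the substitution $\widebar{g}^{(k)}_{\mathrm{F}} \mapsto M_{\Gamma}^{2} h \log d / \min\{n_{k},n_{0}\}$ followed by the elementary $\min$-to-sum inequality. The only subtlety worth flagging explicitly is that the weights $\alpha_{k}$ interact very favorably with the $1/n_{k}$ factor — producing the $1/\widebar{n}$ term that gives the method its sample-efficiency advantage — whereas the $1/n_{0}$ piece is unavoidable because every source is coupled to the target through its differential-network estimate, and the truncation parameter $\tau \geq \tau_{3}$ is exactly what allows Theorem~\ref{thm:expected-frob-err-dtrace} to control the expectation uniformly rather than only in high probability.
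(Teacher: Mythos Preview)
Your proposal is correct and follows exactly the route the paper takes: substitute the D-Trace bound $\widebar{g}^{(k)}_{\mathrm{F}} \asymp M_{\Gamma}^{2} h \log d / \min\{n_{k},n_{0}\}$ from Theorem~\ref{thm:expected-frob-err-dtrace} into Theorem~\ref{thm:expec-error-upp-general} and simplify the weighted sum. The paper in fact presents the corollary without writing out the arithmetic, so your use of $1/\min\{n_k,n_0\}\le 1/n_k+1/n_0$ together with $\sum_k \alpha_k/n_k \le 1/\widebar{n}$ is precisely the intended (and only) computation.
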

The estimation error in~\eqref{eq:expected-error-trans-glasso} is comprised of three parts: shared component estimation, individual component estimation, and differential network estimation. If $\widebar{n} \geq n_0$ and $M_{\Gamma}$ is bounded by a universal constant, the error scales as $\frac{s \log d}{N} + \frac{h \log d}{n_0}$. When using only target samples, the lowest error rate achievable is $\frac{ (s+h) \log d}{n_0}$ as stated in Lemma~\ref{lemma:low-bd-shared}. Therefore, if $N \gg n_0$, the error rate can be significantly reduced compared to the optimal rate obtained with only the target samples. Moreover, as demonstrated in Section~\ref{sec:thm-lower-bd}, the rate $\frac{s \log d}{N} + \frac{h \log d}{n_0}$ is minimax optimal under certain conditions.

\section{Proof of Theorem~\ref{thm:Frob-err-expec}}
\label{sec:proof-thm-Frob-err-expec}

Note that for any matrix $A \in \mathbb{R}^{d \times d}$, we have $\left\Vert A \right\Vert_{\mathrm{F}} \leq \sqrt{d} \left\Vert A \right\Vert_2$.
Since by our assumptions that $\left\Vert \Omega^{(k)} \right\Vert_2 = O(1)$ and $\left\Vert \steponeoutput \right\Vert_2 = O(1)$, we thus have
\begin{equation*}
\left\Vert \Omega^{(k)} \right\Vert_{\mathrm{F}} = O \left( d^{\frac{1}{2}} \right) \quad\text{and}\quad \left\Vert \steponeoutput \right\Vert_{\mathrm{F}} = O \left( d^{\frac{1}{2}} \right).
\end{equation*}

% Since $\Omega^{(k)} \in \mathcal{B}_{\mathrm{F}} ( \sqrt{d} C_2 ) \subseteq \mathcal{B}_{\mathrm{F}} ( d^{\tau} )$, $0 \leq k \leq K$, by the property of projection, we have
% \begin{equation}
% \label{eq:eq:proof-thm-Frob-err-expec-0}
% \left\Vert \check{\Omega}_k (\tau) - \Omega^{(k)} \right\Vert_{\mathrm{F}} \leq \left\Vert \steponeoutput - \Omega^{(k)} \right\Vert_{\mathrm{F}} \quad \text{for all } 0 \leq k \leq K.
% \end{equation}

For $\delta \in (0,1]$, let
\begin{equation*}
\lambda_{\text{M}} \geq C_1C_3 \sqrt{ \frac{ \log (2 \left( K + 2 \right) d^2 / \delta) }{ 2 N } }, 
\end{equation*}
where $C_1 = 160$ and $C_3=M_{\Sigma}$.
Besides, by the proof of Theorem~\ref{thm:multi-task-frob}, 
when
% \begin{equation*}
% \frac{9 \left( s + ( K+1 ) h \right)\lambda^2_{\text{M}}}{4 \kappa^2} \leq C^2_R  \left( \min_{0 \leq k \leq K} \alpha_k \right),
% \end{equation*}
% and
\begin{equation*}
\frac{ \log (2 \left( K + 2 \right) d^2 / \delta) }{ \min_{0 \leq k \leq K} n_k } \leq \frac{1}{2},
\end{equation*}
then by~\eqref{eq:proof-thm-frob-9}, we have $\mathbb{P}\left\{\mathbb{G}(\lambda_{\text{M}})\right\} \geq 1 - \delta$; or equivalently, we have $\mathbb{P}\left\{\widebar{\mathbb{G}}(\lambda_{\text{M}})\right\} \leq \delta$, where $\widebar{\mathbb{G}}(\lambda_{\text{M}})$ denotes the event that $\mathbb{G}(\lambda_{\text{M}})$ does not hold.

Recall that by assumption we have $2 \left( K + 2 \right) \leq d^{\tau_1}$. Let $\delta=d^{-\tau^{\prime}}$, where $\tau^{\prime}$ will be specified later, then
by letting
\begin{equation}
\label{eq:proof-thm-Frob-err-expec-3}
\lambda_{\text{M}} = C_1 C_3 \sqrt{ \frac{ (\tau^{\prime} + \tau_1 + 2) \log d }{ 2 N } },
\end{equation}
% when
% \begin{equation}
% \label{eq:proof-thm-Frob-err-expec-1}
% \frac{9  C^2_1 C^2_3 (C_2 + C_R)^4 (\tau^{\prime} + \tau_1 + 2) }{8} \left( \frac{ s }{ N } + \frac{ h }{ \widebar{n} } \right) \log d   \leq C^2_R  \left( \min_{0 \leq k \leq K} \alpha_k \right),
% \end{equation}
% and
\begin{equation}
\label{eq:proof-thm-Frob-err-expec-2}
\frac{ (\tau^{\prime} + \tau_1 + 2) \log d }{ \min_{0 \leq k \leq K} n_k } \leq \frac{1}{2},
\end{equation}
we have $\mathbb{P}\left\{\widebar{\mathbb{G}}(\lambda_{\text{M}})\right\} \leq d^{-\tau^{\prime}}$. 
Besides, by Section~\ref{sec:remaining-proof}, when~\eqref{eq:proof-thm-Frob-err-expec-3}--\eqref{eq:proof-thm-Frob-err-expec-2} are true, $\mathbb{G}(\lambda_{\text{M}})$ then implies that
\begin{align*}
\sum^K_{k=0} \alpha_k \left\Vert \steponeoutput - \Omega^{(k)} \right\Vert^2_{\mathrm{F}} & \leq \frac{9 \left( s +  ( K+1 ) h \right)\lambda^2_{\text{M}}}{4 \kappa^2} \\
& \leq \frac{9  C^2_1 C^2_3 (\tau^{\prime} + \tau_1 + 2) }{8} \left( \frac{ s }{ N } + \frac{ h }{ \widebar{n} } \right) \log d.
\end{align*}

% In the following discussion, we assume that \eqref{eq:proof-thm-Frob-err-expec-3}--\eqref{eq:proof-thm-Frob-err-expec-2} are true.
Note that
\begin{equation}
\label{eq:proof-thm-Frob-err-expec-4}
\begin{aligned}
& \quad \mathbb{E} \left[ \sum^K_{k=0} \alpha_k \left\Vert \steponeoutput - \Omega^{(k)} \right\Vert^2_{\mathrm{F}} \right] \\
& = \mathbb{P} \left\{ \mathbb{G}(\lambda_{\text{M}}) \right\} \mathbb{E} \left[ \left. \sum^K_{k=0} \alpha_k \left\Vert \steponeoutput - \Omega^{(k)} \right\Vert^2_{\mathrm{F}} \right\vert \mathbb{G}(\lambda_{\text{M}}) \right] + \mathbb{P} \left\{ \widebar{\mathbb{G}}(\lambda_{\text{M}}) \right\} \mathbb{E} \left[ \left. \sum^K_{k=0} \alpha_k \left\Vert \steponeoutput - \Omega^{(k)} \right\Vert^2_{\mathrm{F}} \right\vert \widebar{\mathbb{G}}(\lambda_{\text{M}}) \right].
\end{aligned}
\end{equation}
Given~\eqref{eq:proof-thm-Frob-err-expec-3}--\eqref{eq:proof-thm-Frob-err-expec-2} are true, we have
\begin{equation}
\label{eq:proof-thm-Frob-err-expec-5}
\mathbb{E} \left[ \left. \sum^K_{k=0} \alpha_k \left\Vert \steponeoutput - \Omega^{(k)} \right\Vert^2_{\mathrm{F}} \right\vert \mathbb{G}(\lambda_{\text{M}}) \right] \leq \frac{9  C^2_1 C^2_3 (\tau^{\prime} + \tau_1 + 2) }{8} \left( \frac{ s }{ N } + \frac{ h }{ \widebar{n} } \right) \log d.
\end{equation}
Besides, since $\Vert \steponeoutput \Vert_{\mathrm{F}}, \Vert \Omega^{(k)} \Vert_{\mathrm{F}} = O (d^{\frac{1}{2}})$ for all $0 \leq k \leq K$, we have
\begin{equation*}
\sum^K_{k=0} \alpha_k \left\Vert \steponeoutput - \Omega^{(k)} \right\Vert^2_{\mathrm{F}} \leq 
C^{\prime} d,
\end{equation*}
for some constant $C^{\prime} > 0$,
and thus we have
\begin{equation*}
\mathbb{P} \left\{ \widebar{\mathbb{G}}(\lambda_{\text{M}}) \right\} \mathbb{E} \left[ \left. \sum^K_{k=0} \alpha_k \left\Vert \steponeoutput - \Omega^{(k)} \right\Vert^2_{\mathrm{F}} \right\vert \widebar{\mathbb{G}}(\lambda_{\text{M}}) \right] \leq C^{\prime} d \cdot d^{-\tau^{\prime}} = C^{\prime} d^{-(\tau^{\prime}-1)}.
\end{equation*}
By Assumption that we have $N \leq d^{\tau_2}$ where $\tau_2>0$, then when we choose $\tau^{\prime}$ such that
\begin{equation*}
\tau^{\prime} \geq \tau_2 + 1 + \frac{ \log \left( 
8 C^{\prime}  / 9  C^2_1 C^2_3 (\tau^{\prime} + \tau_1 + 2) \right) }{ \log d },
\end{equation*}
we then have
\begin{align*}
C^{\prime} d^{-(\tau^{\prime}-1)} & \leq \frac{9  C^2_1 C^2_3 (\tau^{\prime} + \tau_1 + 2) }{8} \cdot \frac{ 1 }{ N } \\
& \leq \frac{9  C^2_1 C^2_3  (\tau^{\prime} + \tau_1 + 2) }{8} \left( \frac{ s }{ N } + \frac{ h }{ \widebar{n} } \right) \log d,
\end{align*}
which then implies that
\begin{equation*}
\mathbb{P} \left\{ \widebar{\mathbb{G}}(\lambda_{\text{M}}) \right\} \mathbb{E} \left[ \left. \sum^K_{k=0} \alpha_k \left\Vert \steponeoutput - \Omega^{(k)} \right\Vert^2_{\mathrm{F}} \right\vert \widebar{\mathbb{G}}(\lambda_{\text{M}}) \right] \leq \mathbb{E} \left[ \left. \sum^K_{k=0} \alpha_k \left\Vert \steponeoutput - \Omega^{(k)} \right\Vert^2_{\mathrm{F}} \right\vert \mathbb{G}(\lambda_{\text{M}}) \right].
\end{equation*}
Combine the above inequality with~\eqref{eq:proof-thm-Frob-err-expec-4} and~\eqref{eq:proof-thm-Frob-err-expec-5}, we finally have
\begin{align*}
\quad \mathbb{E} \left[ \sum^K_{k=0} \alpha_k \left\Vert \steponeoutput - \Omega^{(k)} \right\Vert^2_{\mathrm{F}} \right] & \leq 2 \mathbb{E} \left[ \left. \sum^K_{k=0} \alpha_k \left\Vert \steponeoutput - \Omega^{(k)} \right\Vert^2_{\mathrm{F}} \right\vert \mathbb{G}(\lambda_{\text{M}}) \right] \\
& \leq \frac{18  C^2_1 C^2_3 (\tau^{\prime} + \tau_1 + 2) }{8} \left( \frac{ s }{ N } + \frac{ h }{ \widebar{n} } \right) \log d \\
& \lesssim \left( \frac{ s }{ N } + \frac{ h }{ \widebar{n} } \right) \log d .
\end{align*}

{\newText

\section{Graph Recovery Guarantee for Gaussian Graphical Models}

While this paper primarily focuses on precision matrix estimation, it is also of great interest to explore the implications of \myalg for graph recovery in Gaussian graphical models (GGMs), given the close relationship between the precision matrix and the underlying graph structure. In this section, we introduce a graph estimation procedure based on thresholding the output of \myalg, establish a theoretical guarantee for its accuracy, and demonstrate its advantage over graph estimators that rely solely on target data.

Let $E^{(0)} = \left[ E^{(0)}_{jl} \right]_{1 \leq j,l \leq d} \in \{0,1\}^{d \times d}$ denote the adjacency matrix of the undirected Gaussian graphical model for the target data. Specifically, for any $j \neq l$, we set $E^{(0)}_{jl} = 1$ if nodes $j$ and $l$ are connected, and $E^{(0)}_{jl} = 0$ otherwise. By definition, $E^{(0)}_{jj} = 0$ for all $1 \leq j \leq d$. Given the well-established connection between GGMs and the precision matrix~\citep{lauritzen1996graphical}, we have, for any $j \neq l$,
\[
E^{(0)}_{jl} =
\begin{cases}
1 & \text{if } \Omega^{(0)}_{jl} \neq 0, \\
0 & \text{if } \Omega^{(0)}_{jl} = 0.
\end{cases}
\]

We propose a graph estimator based on thresholding the output of \myalg, $\widehat{\Omega}^{(0)}$. Specifically, let $\widehat{E}^{(0)} = \left[ \widehat{E}^{(0)}_{jl} \right]_{1 \leq j,l \leq d} \in \{0,1\}^{d \times d}$ denote the estimated adjacency matrix. We set $\widehat{E}^{(0)}_{jj} = 0$ for all $1 \leq j \leq d$, and for any $j \neq l$, define
\[
\widehat{E}^{(0)}_{jl} =
\begin{cases}
1 & \text{if } \vert \widehat{\Omega}^{(0)}_{jl} \vert > \epsilon_n, \\
0 & \text{if } \vert \widehat{\Omega}^{(0)}_{jl} \vert \leq \epsilon_n,
\end{cases}
\]
where $\epsilon_n$ is a thresholding parameter.

We measure the graph estimation error using the Hamming distance between $E^{(0)}$ and $\widehat{E}^{(0)}$, defined as
\begin{equation*}
\text{Err} \left( \widehat{E}^{(0)}  \right) \coloneqq \frac{1}{2} \sum_{1 \leq j,l \leq d} \mathds{1} \left\{ \widehat{E}^{(0)}_{jl} \neq E^{(0)}_{jl} \right\}.
\end{equation*}
Additionally, let $\omega_{\min} = \min_{(j,l): E^{(0)}_{jl}=1 } \left\vert \Omega^{(0)}_{jl} \right\vert >0$ denote the minimum signal strength in the graphical model. The following deterministic bound then holds for $\text{Err} \left( \widehat{E}^{(0)}  \right)$.
\begin{proposition}
\label{prop:sum-edge-error-bd}
If $0<\epsilon_n \leq \frac{1}{2}\omega_{\min}$, then
\begin{equation*}
\text{Err} \left( \widehat{E}^{(0)} \right) \leq \frac{1}{2 \epsilon^2_n} \left\Vert \widehat{\Omega}^{(0)} - \Omega^{(0)} \right\Vert^2_{\text{F}}.
\end{equation*}
Moreover, if the right-hand side is smaller than $1$, perfect graph recovery is achieved, i.e., $\widehat{E}^{(0)} = E^{(0)}$.
\end{proposition}
\begin{proof}
Let
\[
S_{\mathrm{I}} = \left\{ (j,l) \, : \, E^{(0)}_{jl}=0, \,\widehat{E}^{(0)}_{jl} = 1 \right\} \quad \text{and} \quad S_{\mathrm{II}} = \left\{ (j,l) \, : \, E^{(0)}_{jl}=1, \,\widehat{E}^{(0)}_{jl} = 0 \right\}
\]
denote the sets of type-I error and type-II error. We then have
\begin{equation*}
\text{Err} \left( \widehat{E}^{(0)}  \right) = \frac{1}{2} \sum_{1 \leq j,l \leq d} \mathds{1} \left\{ \widehat{E}^{(0)
}_{jl} \neq E_{jl} \right\} = \frac{1}{2} \left( \left\vert S_{\mathrm{I}} \right\vert + \left\vert S_{\mathrm{II}} \right\vert \right) .
\end{equation*}

Note that for any $(j,l) \in S_{\mathrm{I}}$, we have
\begin{equation}
\label{eq:proof-prop-sum-edge-error-bd-1}
\left\vert \widehat{\Omega}^{(0)}_{jl} - \Omega^{(0)}_{jl} \right\vert = \left\vert \widehat{\Omega}^{(0)}_{jl} \right\vert > \epsilon_n.
\end{equation}
Besides, for any $(j,l) \in S_{\mathrm{II}}$, we have
\begin{equation}
\label{eq:proof-prop-sum-edge-error-bd-2}
\left\vert \widehat{\Omega}^{(0)}_{jl} - \Omega^{(0)}_{jl} \right\vert \geq \left\vert \Omega^{(0)}_{jl} \right\vert - \left\vert \widehat{\Omega}^{(0)}_{jl} \right\vert \geq \left\vert \Omega^{(0)}_{jl} \right\vert - \epsilon_n \geq \omega_{\min} - \epsilon_n \geq  \epsilon_n,
\end{equation}
where the last inequality follows from the assumption that $\epsilon_n \leq \frac{1}{2} \omega_{\min}$.

Combining~\eqref{eq:proof-prop-sum-edge-error-bd-1} and~\eqref{eq:proof-prop-sum-edge-error-bd-2}, we have
\begin{align*}
\left\Vert \widehat{\Omega}^{(0)} - \Omega^{(0)} \right\Vert^2_{\mathrm{F}} & \geq \sum_{(j,l) \in S_{\mathrm{I}}} \left(  \widehat{\Omega}^{(0)}_{jl} - \Omega^{(0)}_{jl} \right)^2 + \sum_{(j,l) \in S_{\mathrm{II}}} \left(  \widehat{\Omega}^{(0)}_{jl} - \Omega^{(0)}_{jl} \right)^2 \\
& \geq \epsilon^2_n \left( \left\vert S_{\mathrm{I}} \right\vert + \left\vert S_{\mathrm{II}} \right\vert \right) \\
& = 2 \epsilon^2_n \cdot \text{Err} \left( \widehat{E}^{(0)}  \right),
\end{align*}
which then implies the final result.
\end{proof}

To illustrate the advantage of \myalg in graph recovery over methods relying solely on target data, we consider a special case of the parameter space defined in~\eqref{eq:param-space}, where $\Omega^{(0)}_{jj} = 1$ for all $1 \leq j \leq d$. In this scenario, the sample-optimal graph recovery algorithm proposed by~\cite{misra2020information}, which uses only target data, requires
\begin{equation}
\label{eq:graph-recovery-sample-1}
n_0 \gg (s + h) \log d / \omega^2_{\min}
\end{equation}
to achieve perfect graph recovery.

In contrast, applying Proposition~\ref{prop:sum-edge-error-bd} and Corollary~\ref{corollary:convg-Trans-GLasso-Frob-Dtrace}, we choose $\epsilon_n$ such that $\epsilon_n \asymp \omega_{\min}$ and $\epsilon_n \leq \frac{1}{2}\omega_{\min}$. Given that the conditions of Corollary~\ref{corollary:convg-Trans-GLasso-Frob-Dtrace} hold, and assuming $\bar{n} \geq n_0$ and $K \leq d^{\tau}$ for some universal constant $\tau > 0$, perfect graph recovery requires only that
\begin{equation}
\label{eq:graph-recovery-sample-2}
n_0 \gg h \log d / \omega_{\min}^2 \quad \text{and} \quad N \gg s \log d / \omega_{\min}^2.
\end{equation}

Comparing~\eqref{eq:graph-recovery-sample-1} and~\eqref{eq:graph-recovery-sample-2}, we see that the target sample complexity requirement for $n_0$ in \myalg is significantly lower than that of methods using only target samples when $s \gg h$. This highlights the advantage of transfer learning in Gaussian graphical model estimation.

}

\clearpage
\putbib[boxinz-papers]
\end{bibunit}

\end{document}